\documentclass[authordate,onecolumn]{miri-tech-article}

\addbibresource{bibliography.bib}

\usepackage{miritools}
\usepackage{stmaryrd}
\usepackage[utf8]{inputenc}
\usepackage[english]{babel}
\usepackage{csquotes}
\usepackage[export]{adjustbox} 

\setcounter{secnumdepth}{2}
\setcounter{tocdepth}{2}

\definecolor{r}{RGB}{197, 46, 14}
\definecolor{o}{RGB}{197, 119, 14}
\definecolor{b}{RGB}{45, 14, 197}
\definecolor{g}{RGB}{92, 180, 32}

\usepackage{lipsum}

\title{Temporal Inference with Finite Factored Sets}

\author{Scott Garrabrant \\ Machine Intelligence Research Institute \\ scott@intelligence.org}


\newtheorem{example}{Example}
\newtheorem{conjecture}{Conjecture}
\newtheorem{proposition}{Proposition}
\newtheorem{corollary}{Corollary}
\newtheorem{lemma2}{Lemma}
\begin{document}


\newcommand{\du}[1]{\bigsqcup(#1)}
\newcommand{\pr}[1]{\bigsqcap(#1)}
\newcommand{\ortho}[3][F]{#2\mathbin{\perp^{#1}}#3}
\newcommand{\co}[4][F]{#2\mathbin{\perp^{#1}}#3\mid #4}
\newcommand{\cod}[4][D]{#2\mathbin{\perp_{#1}}#3\mid #4}
\newcommand{\ncod}[4][D]{#2\mathbin{\rightleftharpoons_{#1}}#3\mid #4}
\newcommand{\coc}[5][F]{#2\mathbin{\perp^{#1}_{#5}}#3\mid #4}
\newcommand{\parts}[1][S]{\text{Part}(#1)}
\maketitle
\begin{abstract}
We propose a new approach to temporal inference, inspired by the Pearlian causal inference paradigm---though quite different from Pearl's approach formally. Rather than using directed acyclic graphs, we make use of \emph{factored sets}, which are sets expressed as Cartesian products. We show that finite factored sets are powerful tools for inferring temporal relations. We introduce an analog of $d$-separation for factored sets, \emph{conditional orthogonality}, and we demonstrate that this notion is equivalent to conditional independence in all probability distributions on a finite factored set.
\end{abstract}

\tableofcontents

\vspace{25mm}

\section{Introduction}
\subsection{Pearlian Causal Inference}
Judea Pearl's theory of inferred causation (e.g., as presented in chapter 2 of \emph{Causality: Models, Reasoning, and Inference}) was a deep advance in our understanding of the nature of time. The Pearlian paradigm allows us to infer causal relationships between variables using statistical data, and thereby infer temporal sequence---in defiance of the old adage that correlation does not imply causation.\nocite{Pearl:2000}

In particular, given \emph{a collection of variables} and \emph{a joint probability distribution over those variables}, the Pearlian paradigm can often infer temporal relationships between the variables.

The joint probability distribution is usually what gets emphasized in discussions of Pearl's approach. Quite a bit of work is being done, however, by the assumption that we are handed ``a collection of variables'' to reason about. The Pearlian paradigm is not inferring temporal relationships from purely statistical data, but rather inferring temporal relationships from statistical data together with data about how to factorize the world into variables.\footnote{Although I say ``factorize'' here, note that this will not be the kind of factorization that shows up in finite factored sets, because (as we will see) disjoint factors must be independent in a finite factored set. I appeal to the same concept in both contexts because factorization is just a very general and useful concept, rather than to indicate a direct connection.}

A doctor who misdiagnoses their patient or misidentifies a symptom may base their subsequent reasoning on a wrong factorization of the situation into causally relevant variables. We would ideally like to build fewer assumptions like this into our model of inference, and instead allow the reasoner to figure such facts out, consider the merits of different factorizations into variables, etc.

Instead of beginning with a collection of variables and a joint probability distribution over those variables, one could imagine starting with just a finite sample space and a probability distribution on that sample space. In this way, we might hope to do temporal inference purely using statistical data, without relying on \emph{a priori} knowledge of a canonical way of factoring the situation into variables.

How might one do temporal inference without an existing factorization? One way might be to just consider all possible variables that can be defined on the sample space. This gives us one variable for each partition of the set.

However, when one tries to apply Pearl's methods to this collection of variables, one quickly runs into a problem: many of the variables definable on a fixed set are deterministic functions of each other. The Pearlian paradigm, as presented in the early chapters of \emph{Causality}, lacks tools for performing temporal inference on variables that are highly deterministically related.\footnote{At least, it lacks such causal inference tools unless we assume access to interventional data.}

We will introduce a new approach to temporal inference instead---one which is heavily inspired by the Pearlian paradigm, but approaches the problem with a very different formal apparatus, and does not make use of graphical models.

\subsection{Overview}

We'll begin by introducing the concept of a finite factored set, in Section 2. This will be our analogue of the directed acyclic graphs in Pearl's framework.

In Section 3, we will introduce the concepts of time and orthogonality, which can be read off of a finite factored set. In Pearl's framework, ``time'' corresponds to directed paths between nodes, and ``orthogonality'' corresponds to nodes that have no common ancestor.

In Section 4, we will introduce conditional orthogonality, which is our analogue of $d$-separation. We show that conditional orthogonality satisfies (a modified version of) the compositional semigraphoid axioms. We then (in Section 5) prove the fundamental theorem of finite factored sets, which states that conditional orthogonality is equivalent to conditional independence in all probability distributions on the finite factored set.

In Section 6, we discuss how to do temporal inference using finite factored sets, and give two examples. Finally, in Section 7 we discuss applications and future work, with an emphasis on temporal and conceptual inference, generalizing finite factored sets to the infinite case, and applications to embedded agency \citep{Demski:2019:embedded}.

And here, we take our leave of Pearl. We've highlighted this approach's relationship to the Pearlian paradigm in order to motivate finite factored sets and explain how we'll be using them in this paper. Technically, however, our approach is quite unlike Pearl's, and the rest of the paper will stand alone.

\section{Factorization}
Before giving a definition of finite factored sets, we will recall the definition of a partition, and give some basic notation related to partitions.

We do this for two reasons. First, we will use partitions in the definition of a factored set; and second, we want to draw attention to a duality between the notion of a partition, and the notion of a factorization.
\subsection{Partitions}
We begin with a definition of disjoint union.

\begin{definition}[disjoint union]
Given a set $S$ of sets, let $\du{S}$ denote the set of all ordered pairs $(T,t)$, where $T\in S$ and $t\in T$.\footnote{Note that this definition and Definition \ref{proddef} could have been made more general by taking $S$ to be a multiset.}
\end{definition}
\begin{definition}[partition]
A partition of a set $S$ is a set $X\subseteq \mathcal{P}(S)$ of nonempty subsets of $S$ such that the function $\iota:\du{X}\rightarrow S$ given by $\iota(x,s)=s$ is a bijection.\footnote{$\mathcal{P}(S)$ denotes the power set of $S$.}

Let $\parts$ denote the set of all partitions of $S$. The elements of a partition are called parts.
\end{definition}

An equivalent definition of partition is often given: a partition is a set $X$ of nonempty subsets of $S$ that are pairwise disjoint and union to $S$. We choose the above definition because it will make the symmetry between partitions and factorizations more obvious.

\begin{definition}[trivial partition]
A partition $X$ of a set $S$ is called trivial if $|X|=1$.
\end{definition}
\begin{definition}
Given a partition $X$ of a set $S$, and an element $s\in S$, let $[s]_X$ denote the unique $x\in X$ such that $s\in x$. 
\end{definition}
\begin{definition}
Given a partition $X$ of a set $S$, and elements $s_0,s_1\in S$, we say $s_0\sim_X s_1$ if $[s_0]_X=[s_1]_X$.
\end{definition}
\begin{proposition}
Given a partition $X$ of a set $S$, $\sim_X$ is an equivalence relation on $S$.
\end{proposition}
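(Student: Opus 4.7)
The plan is to observe that $\sim_X$ is, by definition, the pullback of the equality relation on $X$ along the map $s \mapsto [s]_X$, and the pullback of any equivalence relation along any function is automatically an equivalence relation. So the proof reduces to checking the three properties directly, each of which collapses to an instance of reflexivity, symmetry, or transitivity of equality on $X$.

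Concretely, I would first verify that $[s]_X$ is a well-defined function $S \to X$. This is the only place where the partition hypothesis actually gets used: given $s \in S$, the bijectivity of $\iota \colon \du{X} \to S$ supplies exactly one pair $(x,s)$ in its preimage, whose first coordinate is the unique part containing $s$. Once this is in hand, reflexivity follows because $[s]_X = [s]_X$ for every $s$; symmetry follows because $[s_0]_X = [s_1]_X$ implies $[s_1]_X = [s_0]_X$; and transitivity follows because if $[s_0]_X = [s_1]_X$ and $[s_1]_X = [s_2]_X$ then $[s_0]_X = [s_2]_X$.

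There is no real obstacle here; the content is entirely in verifying well-definedness of $[\,\cdot\,]_X$, which is effectively already packaged into the partition definition via the bijectivity of $\iota$. I would write the proof as three one-line checks after a single sentence noting that $\sim_X$ is defined as equality of the values $[s]_X$.
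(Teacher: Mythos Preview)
Your proposal is correct and matches the paper's approach: the paper's proof is the single word ``Trivial,'' and your write-up is exactly the natural unpacking of that triviality, reducing each axiom of an equivalence relation to the corresponding property of equality via the well-defined map $s \mapsto [s]_X$.
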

\begin{proof}
Trivial.
\end{proof}
\begin{definition}[finer and coarser]
We say that a partition $X$ of $S$ is finer than another partition $Y$ of $S$, if for all $s_0,s_1\in S$, if $s_0\sim_{X}s_1$, then $s_0\sim_{Y} s_1$.

If $X$ is finer than $Y$, we also say $Y$ is coarser than $X$, and we write $X\geq_S Y$ and $Y\leq_S X$.
\end{definition}
\begin{definition}[discrete and indiscrete partitions]
Given a set $S$, let $\text{Dis}_S=\{\{s\}\mid s\in S\}$.

If $S$ is empty, let $\text{Ind}_S=\{\}$, and if $S$ is nonempty, let $\text{Ind}_S=\{S\}$.

$\text{Dis}_S$ is called the discrete partition, and $\text{Ind}_S$ is called the indiscrete partition.
\end{definition}
\begin{proposition}
For any set $S$, $\geq_S$ is a partial order on $\parts$. Further, for all $X\in\parts$, $\text{Dis}_S\geq_S X$ and $X\geq_S \text{Ind}_S$.
\end{proposition}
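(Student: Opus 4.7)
The plan is to verify the three partial-order axioms (reflexivity, antisymmetry, transitivity) for $\geq_S$ directly from the definition, and then handle the extremal claims about $\text{Dis}_S$ and $\text{Ind}_S$ as short special cases.

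First I would unfold the definition: $X\geq_S Y$ means that for all $s_0,s_1\in S$, $s_0\sim_X s_1$ implies $s_0\sim_Y s_1$. Reflexivity is then immediate. Transitivity is a routine chaining: if $X\geq_S Y$ and $Y\geq_S Z$ and $s_0\sim_X s_1$, then $s_0\sim_Y s_1$, so $s_0\sim_Z s_1$. These two parts are essentially just unwrapping the definition.

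The one step that actually needs a small argument is antisymmetry. From $X\geq_S Y$ and $Y\geq_S X$ I get that $\sim_X$ and $\sim_Y$ are the same relation on $S$, but I still need to conclude $X=Y$ as sets of subsets. The plan is to note that each part $x\in X$ is, by the definition of partition, nonempty, so pick $s\in x$; then $x=\{s'\in S:s'\sim_X s\}=\{s'\in S:s'\sim_Y s\}=[s]_Y\in Y$. By symmetry every part of $Y$ lies in $X$, giving $X=Y$. This is the main (mild) obstacle, and it relies on the bijectivity built into the definition of partition to identify parts with $\sim$-equivalence classes.

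For the second statement, I would dispatch $\text{Dis}_S\geq_S X$ by observing that $s_0\sim_{\text{Dis}_S}s_1$ forces $\{s_0\}=\{s_1\}$, i.e.\ $s_0=s_1$, and equality trivially gives $s_0\sim_X s_1$. For $X\geq_S\text{Ind}_S$, I would split on whether $S$ is empty: if $S=\varnothing$ the implication is vacuous, and if $S\neq\varnothing$ then $\text{Ind}_S=\{S\}$, so every pair $s_0,s_1\in S$ satisfies $s_0\sim_{\text{Ind}_S}s_1$, making the implication from $s_0\sim_X s_1$ automatic. Nothing here requires more than checking cases, so no additional machinery is needed.
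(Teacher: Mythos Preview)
Your proposal is correct and is exactly the straightforward definition-unfolding the paper has in mind; the paper's own proof is simply the one word ``Trivial.'' Your antisymmetry argument and the two extremal cases are fine and match what any reader would supply when filling in that step.
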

\begin{proof}
Trivial.
\end{proof}
While both notations are sometimes used, it is more standard to draw the symbol in the opposite direction and have $X\leq Y$ when $X$ is finer than $Y$. We choose to go against that standard because we want to think of partitions in part as the ability to distinguish between elements, and finer partitions correspond to greater ability to distinguish.\footnote{In our view, ``$Y \geq X$'' is also a more natural way to visually represent a mapping between a three-part partition $Y$ that is finer than a two-part partition $X$.}
\begin{definition}[common refinement]
Given a set $C$ of partitions of a fixed set $S$, let $\bigvee_S(C)$ denote the partition $X\in\parts$ satisfying $s_0\sim_{X} s_1$ if and only if $s_0\sim_c s_1$ for all $c\in C$.
Given $X,Y\in\parts$, we let $X\vee_S Y=\bigvee_S(\{X,Y\})$.
\end{definition}

\subsection{Factorizations}
We start with a definition of Cartesian product.
\begin{definition}[Cartesian product]\label{proddef}
Given a set $S$ of sets, let $\pr{S}$ denote the set of all functions $f:S\rightarrow \du{S}$ such that for all $T\in S$, $f(T)$ is of the form $(T,t)$, for some $t\in T$.
\end{definition}
We can now give the definition of a factorization of a set.
\begin{definition}[factorization]
A factorization of a set $S$ is a set $B\subseteq\parts$ of nontrivial partitions of $S$ such that the function $\pi:S\rightarrow\pr{B}$, given by $\pi(s)=(b\mapsto(b,[s]_b))$, is a bijection.

Let $\text{Fact}(S)$ denote the set of all factorizations of $S$. The elements of a factorization are called factors.
\end{definition}
In other words, a set of nontrivial partitions is a factorization of $S$ if for each way of choosing one part from each factor, there exists a unique element of $S$ in the intersection of those parts.

Notice the duality between the definitions of partition and factorization. We replace subsets with partitions, nonempty with nontrivial, and disjoint union with Cartesian product, and we reverse the direction of the function. We can think of a factorization of $S$ as a way to view $S$ as a product, in the same way that a partition was a way to view $S$ as a disjoint union.

A factored set is just a set together with a factorization of that set.
\begin{definition}[factored set]
A factored set $F$ is an ordered pair $(S,B)$, such that $B$ is a factorization of $S$. 

If $F=(S,B)$ is a factored set, we let $\text{set}(F)=S$, and let $\text{basis}(F)=B$.
\end{definition}

\begin{proposition}\label{templabel1}
Given a factored set $F=(S,B)$, and elements $s_0,s_1\in S$, if $s_0\sim_b s_1$ for all $b\in B$, then $s_0=s_1$.

\end{proposition}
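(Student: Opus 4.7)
The plan is to read off the conclusion directly from the injectivity of the canonical map $\pi:S\to\pr{B}$ given in the definition of a factorization. First, I would unfold the hypothesis $s_0\sim_b s_1$ using the definition of $\sim_b$: it says $[s_0]_b=[s_1]_b$ for every $b\in B$. Second, I would note that this means the functions $\pi(s_0)$ and $\pi(s_1)$ agree at every input: for each $b\in B$,
\[
\pi(s_0)(b)=(b,[s_0]_b)=(b,[s_1]_b)=\pi(s_1)(b),
\]
so $\pi(s_0)=\pi(s_1)$ as elements of $\pr{B}$. Finally, since $B$ is a factorization of $S$, the map $\pi$ is a bijection, hence in particular injective, so $s_0=s_1$.

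There is no real obstacle here: the proposition is essentially a restatement of the injectivity half of the bijectivity condition built into the definition of a factorization, phrased in terms of the equivalence relations $\sim_b$ rather than in terms of $\pi$. The only thing to be careful about is not to confuse ``same part in every factor'' (an equality of tuples of parts) with ``same element of $S$'' without explicitly invoking injectivity of $\pi$; stating that invocation is the whole content of the proof.
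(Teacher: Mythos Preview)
Your proof is correct and is essentially identical to the paper's own argument: the paper also unfolds $s_0\sim_b s_1$ to $[s_0]_b=[s_1]_b$, observes that $\pi(s_0)=\pi(s_1)$, and concludes $s_0=s_1$ from the bijectivity of $\pi$.
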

\begin{proof}
Let $F=(S,B)$ be a finite factored set, and let $s_0,s_1\in S$ satisfy $s_0\sim_b s_1$ for all $b\in B$. 

Let $\pi:S\rightarrow \pr{B}$ be given by $\pi(s)=(b\mapsto(b,[s]_b))$, as in the definition of factorization. Then $\pi(s_0)=(b\mapsto(b,[s_0]_b))=(b\mapsto(b,[s_1]_b))=\pi(s_1)$. Since $\pi$ is bijective, this means $s_0=s_1$.
\end{proof}

\subsection{Chimera Functions}
The following theorem can be viewed as an alternate characterization of factorization. We will use this alternate characterization to define chimera functions, which will be useful tools for manipulating elements of factored sets.
\begin{theorem}\label{templabel2}
Given a set $S$, a set $B$ of nontrivial partitions of $S$ is a factorization of $S$ if and only if for every function $g:B\rightarrow S$, there exists a unique $s\in S$ such that for all $b\in B$, $s\sim_b g(b)$.
\end{theorem}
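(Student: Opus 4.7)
The plan is to prove both directions by unpacking the definition of factorization in terms of the bijection $\pi:S\to\pr{B}$ given by $\pi(s)=(b\mapsto(b,[s]_b))$, and translating between elements of $\pr{B}$ and functions $g:B\to S$.

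For the forward direction, I would assume $B$ is a factorization and fix an arbitrary $g:B\to S$. The idea is to build an element $f\in\pr{B}$ out of $g$ by setting $f(b)=(b,[g(b)]_b)$; this lies in $\pr{B}$ because $[g(b)]_b$ is a part of $b$. Since $\pi$ is a bijection, there is a unique $s\in S$ with $\pi(s)=f$, and unpacking this gives $[s]_b=[g(b)]_b$, i.e.\ $s\sim_b g(b)$ for all $b\in B$. For uniqueness, any $s'$ with $s'\sim_b g(b)$ for all $b$ satisfies $\pi(s')=f$ as well, so $s'=s$ by injectivity of $\pi$.

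For the reverse direction, I would assume the ``unique $s$'' property and show $\pi$ is a bijection. Injectivity is the easier half: if $\pi(s_0)=\pi(s_1)$ then $s_0\sim_b s_1$ for every $b\in B$, and applying the hypothesis to the constant function $g(b)=s_0$ (noting that both $s_0$ and $s_1$ satisfy the condition $s\sim_b g(b)$) gives $s_0=s_1$. For surjectivity, given $f\in\pr{B}$, I would use nontriviality of each $b\in B$ to pick, for each $b$, an element $g(b)\in S$ lying in the part $t_b$ where $f(b)=(b,t_b)$; then the hypothesis produces an $s\in S$ with $[s]_b=[g(b)]_b=t_b$ for each $b$, and hence $\pi(s)=f$.

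No step looks hard; the only thing to be careful about is faithfully translating between the three languages involved --- elements $f\in\pr{B}$ (functions picking a part from each partition), functions $g:B\to S$ (picking a representative element from each partition), and the equivalence relations $\sim_b$. The one place where the hypothesis that every $b\in B$ is nontrivial is needed is in choosing $g(b)\in t_b$ for the surjectivity argument, since nontriviality guarantees each part is a nonempty subset of $S$ from which a representative can be drawn.
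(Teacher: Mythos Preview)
Your proof is correct and essentially identical to the paper's. One minor correction to your closing remark: in the surjectivity step, the nonemptiness of each part $t_b$ comes from the definition of a partition (parts are required to be nonempty), not from nontriviality (which says the partition has more than one part); the nontriviality hypothesis is part of the theorem's setup but is not actually invoked anywhere in the argument.
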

\begin{proof}
First, we let $B$ be a factorization of $S$, and let $g:B\rightarrow S$ be any function. We want to show that there exists a unique $s\in S$ such that for all $b\in B$, $s\sim_b g(b)$. Let $\pi:S\rightarrow \pr{B}$ be given by $\pi(s)=(b\mapsto(b,[s]_b))$, as in the definition of factorization. Note that $\pi$ is bijective, and thus has an inverse.

Let $s=\pi^{-1}(b\mapsto (b,[g(b)]_b))$. Observe that this is well-defined, because $(b\mapsto (b,[g(b)]_b))$ is in fact in $\pr{B}$. We will show that $s\sim_b g(b)$ for all $b\in B$, and the uniqueness of this $s$ will then follow directly from Proposition \ref{templabel1}. 

We have $\pi(s)=(b\mapsto[s]_b)$ by the definition of $\pi$. However, we also have $\pi(s)=(b\mapsto[g(b)]_b)$ by the definition of $s$. Thus, $b\mapsto[s]_b$ and $b\mapsto[g(b)]_b$ are the same function, so $[s]_b=[g(b)]_b$ for all $b\in B$, so $s\sim_b g(b)$ for all $b\in B$.

Conversely, let $S$ be any set, and let $B$ be any set of nontrivial partitions of $S$. Assume that for all $g:B\rightarrow S$, there exists a unique $s\in S$ satisfying $s\sim_b g(b)$ for $b\in B$. Again, let $\pi:S\rightarrow \pr{B}$ be given by $\pi(s)=(b\mapsto(b,[s]_b))$, as in the definition of factorization. We want to show that $\pi$ is invertible.

First, we show that $\pi$ is injective. Take an arbitrary $s_0\in S$, and let $g:B\rightarrow S$ be the constant function satisfying $g(b)=s_0$ for all $b\in B$. Given another $s_1\in S$, if $\pi(s_0)=\pi(s_1)$, then $(b\mapsto[s_0]_b)=(b\mapsto[s_1]_b)$, so $[s_1]_b=[s_0]_b=[g(b)]_b$ for all $b\in B$, so $s_0\sim_b s_1\sim_b g(b)$ for all $b\in B$. Since there is a unique $s\in S$ satisfying $s\sim_b g(b)$ for all $b\in B$, this means $s_0=s_1$. Thus $\pi$ is injective.

To see that $\pi$ is surjective, consider some arbitrary $h\in \pr{B}$. We want to show that there exists an $s\in S$ with $h=\pi(s)$.

For all $b\in B$, let $H_b\in b$ be given by $h(b)=(b,H_b)$, which is well-defined since $h\in \pr{B}$. Note that $H_b$ is a nonempty subset of $S$, so there exists a function $g:B\rightarrow S$ with $g(b)\in H_b$ for all $b\in B$. Fix any such $g$, and let $s$ satisfy $s\sim_b g(b)$ for all $b\in B$. 

We thus have that for all $b\in B$, $h(b)=(b,H_b)=(b,[g(b)]_b)=(b,[s]_b)=\pi(s)(b)$, so $h=\pi(s)$. Thus $\pi$ is surjective. 

Since $\pi$ is bijective, we have that $B$ is a factorization of $S$.
\end{proof}
This also gives us that factors are disjoint from each other.
\begin{corollary}\label{basisdisjoint}
Given a factored set $F=(S,B)$ and distinct factors $b_0,b_1\in B$, $b_0\cap b_1=\{\}$.
\end{corollary}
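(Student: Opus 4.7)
The plan is to argue by contradiction: assume some $x \in b_0 \cap b_1$ exists, and show that this forces $x = S$, which then forces $b_1$ to be the indiscrete (i.e., trivial) partition, contradicting the fact that factors are nontrivial partitions.

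Concretely, I would start by picking any $s \in x$ (which exists because parts of a partition are nonempty by definition). Then, for an arbitrary element $s' \in S$, I would invoke Theorem \ref{templabel2} applied to a function $g \colon B \to S$ defined by $g(b_0) = s$, $g(b_1) = s'$, and some arbitrary choice on the remaining factors. This is where distinctness of $b_0$ and $b_1$ is essential: because they are distinct elements of $B$, the values $g(b_0)$ and $g(b_1)$ can be chosen independently. Theorem \ref{templabel2} then yields a unique $t \in S$ with $t \sim_b g(b)$ for every $b \in B$.

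The key move is to read off $[t]_{b_1}$ in two different ways. From $t \sim_{b_0} s$ and $s \in x \in b_0$, we get $[t]_{b_0} = x$, so $t \in x$. But $x$ is also a part of $b_1$, so $t \in x$ forces $[t]_{b_1} = x$ as well. On the other hand, $t \sim_{b_1} s'$ gives $[t]_{b_1} = [s']_{b_1}$, hence $[s']_{b_1} = x$ and so $s' \in x$. Since $s' \in S$ was arbitrary, $x = S$. But a partition containing $S$ as a part can only be $\{S\}$ (any other part would be a nonempty subset disjoint from $S$), which is the indiscrete partition, violating nontriviality of the factor $b_1$.

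I don't expect any real obstacle here; the only subtle point is being explicit about why the two-argument specification of $g$ is legitimate (namely, $b_0 \ne b_1$), and about why ``$x \in b_1$ and $x = S$'' forces $b_1$ to be trivial rather than merely suggesting it. Everything else is a direct application of Theorem \ref{templabel2} together with the definition of $[\cdot]_b$.
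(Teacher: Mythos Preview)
Your argument is correct. The paper's proof runs the contradiction in the other direction: having fixed a common part $T \in b_0 \cap b_1$, it uses nontriviality of $b_0$ \emph{first} to pick a different part $T' \in b_0$ (so $T \cap T' = \emptyset$), then builds $g$ with $g(b_0) \in T'$ and $g(b_1) \in T$ and observes that no $s$ can satisfy both $s \sim_{b_0} g(b_0)$ and $s \sim_{b_1} g(b_1)$, contradicting the existence clause of Theorem~\ref{templabel2} directly. You instead use the existence clause constructively to trap every $s' \in S$ inside $x$, deduce $x = S$, and invoke nontriviality of $b_1$ only at the end. Both are one-line applications of the same theorem; the paper's version is marginally shorter because it skips the ``$x = S$ forces $b_1 = \{S\}$'' step, while yours is slightly more self-contained in that it never needs to name a second part of $b_0$.
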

\begin{proof}
Assume by way of contradiction that $T\in b_0\cap b_1$. Since $b_0$ is nontrivial, there must be some other $T^\prime\in b_0$ with $T\cap T^\prime=\{\}$. Let $g:B\rightarrow S$ be any function such that $g(b_0)\in T^\prime$ and $g(b_1)\in T$. Then there can be no $s$ such that $s\sim_{b_0} g(b_0)$ and $s\sim_{b_1}g(b_1)$, since then $s$ would be in both $T$ and $T^\prime$. This contradicts Theorem \ref{templabel2}.
\end{proof}

We are now ready to define the chimera function of a factored set.

\begin{definition}[chimera function]
Given a factored set $F=(S,B)$, the chimera function (of $F$) is the function $\chi^F:(B\rightarrow S)\rightarrow S$ defined by $\chi^F(g)\sim_b g(b)$ for all $g:B\rightarrow S$ and $b\in B$.
\end{definition}
The name ``chimera function'' comes from the fact that $\chi^F$ can be viewed as building an element of $S$ by fusing together the properties of various different elements. Since we will often apply the chimera function to functions $g$ that only take on two values, we will give notation for this special case.
\begin{definition}
Given a factored set $F=(S,B)$, and a subset $C\subseteq B$, let $\chi^F_C:S\times S\rightarrow S$ be given by $\chi^F_C(s,t)=\chi^F(g)$, where $g:B\rightarrow S$ is given by $g(b)=s$ if $b\in C$, and $g(b)=t$ otherwise.

For $T,R\subseteq S$, we will write $\chi^F_C(T,R)$ for $\{\chi^F_C(t,r)\mid t\in T,\ r\in R\}$.
\end{definition}
The following is a list of properties of $\chi^F_C$, which will be useful in later proofs. All of these properties follow directly from the definition of $\chi^F_C$.
\begin{proposition}
Fix $F=(S,B)$, a factored set, $C,D\subseteq B$, and $s,t,r\in S$.

\begin{enumerate}
    \item $\chi^F_C(s,t)\sim_c s$ for all $c\in C$.
    \item $\chi^F_C(s,t)\sim_b t$ for all $b\in B\setminus C$.
    \item $\chi^F_{C}(s,s)=s$.
    \item $\chi^F_{B\setminus C}(s,t)=\chi^F_{C}(t,s)$.
    \item $\chi^F_{C\cup D}(s,t)=\chi^F_{C}(s,\chi^F_{D}(s,t))$.
    \item $\chi^F_{C\cap D}(s,t)=\chi^F_{C}(\chi^F_{D}(s,t),t)$.
    \item $\chi^F_{C}(\chi^F_{C}(s,t),r)=\chi^F_{C}(s,\chi^F_{C}(t,r))=\chi^F_{C}(s,r)$.
    \item $\chi^F_{C}(s,\chi^F_{D}(t,r))=\chi^F_{D}(\chi^F_{C}(s,t),\chi^F_{C}(s,r))$.
    \item $\chi^F_{C}(\chi^F_{D}(s,t),r)=\chi^F_{D}(\chi^F_{C}(s,r),\chi^F_{C}(t,r))$.
    \item $\chi^F_B(s,t)=s$.
    \item $\chi^F_{\{\}}(s,t)=t$.
    
\end{enumerate}
\end{proposition}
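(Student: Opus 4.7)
The plan is to reduce each of the eleven identities to a routine verification using exactly two ingredients: the defining property of the chimera function, namely that $\chi^F(g) \sim_b g(b)$ for every $b \in B$, and the uniqueness principle of Proposition \ref{templabel1}, which says that two elements of $S$ which agree under $\sim_b$ for every $b \in B$ must coincide. Combined, these let me prove any equation $x = y$ in $S$ by showing $x \sim_b y$ for every $b \in B$, where each side is computed by unfolding its chimera according to a case split on membership of $b$ in the relevant subsets of $B$.

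Items 1 and 2 are immediate from the definition of $\chi^F_C(s,t)$ as $\chi^F(g)$ for the indicator-style $g$. Item 3 follows because when $s=t$ the function $g$ is constant at $s$, and $s$ itself witnesses the chimera. Items 10 and 11 are the analogous base cases where $C=B$ or $C=\emptyset$. Item 4 holds because the two sides are defined via the same $g:B\to S$.

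For items 5 through 9, I would fix an arbitrary $b \in B$ and split into the Boolean cases determined by membership in $C$ and $D$: the four cases $b\in C\cap D$, $b\in C\setminus D$, $b\in D\setminus C$, and $b\notin C\cup D$. In each case, items 1 and 2 let me rewrite each nested chimera, up to $\sim_b$, as one of $s$, $t$, or $r$, and both sides of the identity then collapse to the same element. For example, in item 5 the left-hand side is $\sim_b s$ exactly when $b\in C\cup D$, while the right-hand side picks up $s$ on $C$, then on $B\setminus C$ inherits the pattern of $\chi^F_D(s,t)$, producing $s$ precisely on $C\cup D$ as required. Item 7 uses the additional observation that $\chi^F_C(s,t)\sim_c s$ regardless of $t$, so $\chi^F_C(\chi^F_C(s,t),r)$ agrees with $\chi^F_C(s,r)$ on every factor.

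The only mild obstacle is bookkeeping: items 8 and 9 involve three argument slots and so require tracking four subcases, each of which picks out one of $s$, $t$, $r$ on both sides. No genuinely new idea is needed beyond the uniqueness principle; every property on the list is engineered to be a factor-by-factor identity, and Proposition \ref{templabel1} does the rest of the work automatically.
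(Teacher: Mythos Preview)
Your proposal is correct and matches the paper's approach: the paper's proof consists of the single word ``Trivial,'' and what you have written is precisely the routine unpacking of that triviality---each identity follows by checking agreement under $\sim_b$ for every $b\in B$ via the defining property of $\chi^F$, then invoking Proposition~\ref{templabel1}.
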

\begin{proof}
Trivial.
\end{proof}
\subsection{Trivial Factorizations}
We now define a notion of a trivial factorization of a set, and show that every set has a unique trivial factorization.
\begin{definition}[trivial factorization]
A factorization $B$ of a set $S$ is called trivial if $|B|\leq 1$. A factored set $(S,B)$ is called trivial if $B$ is trivial.
\end{definition}
\begin{proposition}\label{templabel4}
For every set $S$, there exists a unique trivial factorization $B$ of $S$. If $|S|\neq 1$, this trivial factorization is given by $B=\{\text{Dis}_S\}$, and if $|S|=1$, it is given by $B=\{\}$.
\end{proposition}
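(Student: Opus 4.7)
The plan is to split on the cardinality of $B$: a trivial factorization has $|B|=0$ or $|B|=1$, and I will determine in each case exactly which sets $S$ admit such a $B$, then verify that the $\pi$ from the definition of factorization is indeed a bijection.

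For $|B|=0$, i.e., $B=\{\}$, the product $\pr{\{\}}$ is the set of functions from $\{\}$ to $\du{\{\}}$ and therefore consists of the single empty function. So the map $\pi:S\to\pr{\{\}}$ is a bijection if and only if $|S|=1$; in that case $\pi$ trivially is a bijection, so $B=\{\}$ is a factorization of $S$ iff $|S|=1$.

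For $|B|=1$, write $B=\{X\}$ with $X$ a nontrivial partition of $S$. Then $|\pr{\{X\}}|=|X|$, so bijectivity of $\pi$ forces $|S|=|X|$. Since the parts of $X$ are nonempty and bijectively indexed by $S$ through $\iota:\du{X}\to S$, this equality forces every part of $X$ to be a singleton, hence $X=\text{Dis}_S$. For $X$ to be admissible as a factor we further need $|X|\neq 1$, i.e.\ $|S|\neq 1$; in the edge case $|S|=0$ the partition $\text{Dis}_S=\{\}$ has zero parts and is still nontrivial under the given convention. Conversely, for any such $S$, a direct check shows that $B=\{\text{Dis}_S\}$ is a factorization.

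Assembling the two cases yields exactly the stated claim, with uniqueness following because the two possible shapes of $B$ partition the space of candidates: $\text{Dis}_S$ itself fails to be a nontrivial partition precisely when $|S|=1$, which is exactly where $B=\{\}$ takes over. I expect no real obstacle; the only thing requiring a moment's care is the degenerate case $|S|=0$, where $\text{Dis}_S$ collapses to the empty partition and one must confirm that it still counts as nontrivial under Definition of trivial partition and that $\pi$ is vacuously a bijection there.
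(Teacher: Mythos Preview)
Your decomposition is different from the paper's and in some ways cleaner: the paper splits on $|S|$ (cases $0$, $1$, $\geq 2$) and leans on Theorem~\ref{templabel2} and Proposition~\ref{templabel1}, whereas you split on $|B|$ and work directly with the map $\pi$ from the definition of factorization. Since ``trivial'' literally means $|B|\leq 1$, your case split is the more natural one, and computing $|\pr{\{\}}|=1$ and $|\pr{\{X\}}|=|X|$ is a nice shortcut.

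There is, however, one genuine gap. In the $|B|=1$ case you deduce $|S|=|X|$ and then say that, since the parts of $X$ are nonempty and $\iota:\du{X}\to S$ is a bijection, ``this equality forces every part of $X$ to be a singleton.'' That is a pure cardinality argument, and it only works for finite $S$. For infinite $S$ one can easily have $|X|=|S|$ with some non-singleton part (e.g.\ $S=\mathbb{N}$ and $X=\{\{0,1\},\{2\},\{3\},\dots\}$), so the step as written fails. The repair is immediate and stays entirely within your framework: under the canonical identification $\pr{\{X\}}\cong X$, the map $\pi$ is just $s\mapsto[s]_X$, so \emph{injectivity} of $\pi$ (not merely equal cardinalities) says directly that distinct elements of $S$ lie in distinct parts, i.e.\ every part is a singleton and $X=\text{Dis}_S$. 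With that one-line fix your argument is complete for arbitrary $S$; the edge case $|S|=0$ is handled correctly as you note.
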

\begin{proof}
We start with the case where $|S|=0$. The only partition of $S$ is $\{\}$, so we only need to consider the sets of partitions $\{\{\}\}$ and $\{\}$ as potential factorizations. $\{\{\}\}$ is vacuously a factorization of $S$ by Theorem \ref{templabel2}, since there are no functions from $\{\{\}\}$ to $S$. $\{\}$ is not a factorization by Theorem \ref{templabel2}, since there is a function from $\{\}$ to $S$, but there is no element of $S$. Thus, when $|S|=0$, $\{\{\}\}=\{\text{Dis}_S\}$ is the unique trivial factorization of $S$.

Next, consider the case where $|S|=1$. First, observe that the unique $s\in S$ vacuously satisfies $s\sim_b g(b)$ for all $g:\{\}\rightarrow S$ and $b\in \{\}$, since there is no $b\in \{\}$. Thus, by Theorem \ref{templabel2}, $\{\}$ is a factorization of $S$. Further, $\{\}$ is the only factorization of $S$, since there are no nontrivial partitions of $S$. Thus, when $|S|=1$, $\{\}$ is the unique trivial factorization of $S$.

Next, we consider the case where $|S|\geq 2$. Observe that $\text{Dis}_S$ is a nontrivial partition of $S$. Let $B=\{\text{Dis}_S\}$. We want to show that $B$ is a factorization of $S$. By Theorem \ref{templabel2}, it suffices to show that for all $g:B\rightarrow S$, there exists a unique $s\in S$ with $s\sim_{\text{Dis}_S} g(\text{Dis}_S)$. We can take $s=g(\text{Dis}_S)$, which clearly satisfies $s\sim_{\text{Dis}_S} g(\text{Dis}_S)$. This $s$ is unique, since if $s^\prime\sim_{\text{Dis}_S} g(\text{Dis}_S)$, then $s^\prime\in[g(\text{Dis}_S)]_{\text{Dis}_S}=[s]_b=\{s\}$, so $s^\prime = s$. Thus $B$ is a factorization of $S$.

On the other hand, if $|S|\geq 2$, $\{\}$ is not a factorization of $S$, since if it were, Proposition \ref{templabel1} would imply that all elements of $S$ are equal. Further, for any partition $b$ of $S$, with $b\neq \{\text{Dis}_S\}$, there must exist $s_0,s_1\in S$, with $s_0\sim_b s_1$, but $s_0\neq s_1$. Thus $\{b\}$ cannot be a factorization of $S$ by Proposition \ref{templabel1}. Thus when $|S|\geq 2$, $\text{Dis}_S$ is the unique trivial factorization of $S$.
\end{proof}
\subsection{Finite Factored Sets}
This paper will primarily be about finite factored sets.
\begin{definition}
If $F=(S,B)$ is a factored set, the size of $F$, written $\text{size}(F)$, is the cardinality of $S$. The dimension of $F$, written $\text{dim}(F)$, is the cardinality of $B$. $F$ is called finite if its size is finite, and finite-dimensional if its dimension is finite.
\end{definition}
We suspect that the theory of infinite factored sets is both interesting and important. However, it is outside of the scope of this paper, which will require finiteness for many of its key results.

Some of the definitions and results in this paper will be given for finite factored sets, in spite of the fact that they could easily be extended to finite-dimensional or arbitrary factored sets. This is because they can often be extended in more than one way, and determining which extension is most natural requires further developing the theory of arbitrary factored sets.

\begin{proposition}
Every finite factored set is also finite-dimensional.
\end{proposition}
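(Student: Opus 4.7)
The plan is to use the bijection $\pi : S \to \pr{B}$ built into the definition of factorization, together with the nontriviality of factors, to turn the finite cardinality of $S$ into a finite bound on $|B|$. Since $|S|$ is finite, so is $|\pr{B}|$, and by the definition of $\pr{B}$ its cardinality equals $\prod_{b \in B} |b|$ (choosing an element of $\pr{B}$ is just choosing one part from each factor). If every factor contributes a factor of at least $2$ to that product, the product grows exponentially in $|B|$, which pins $|B|$ down.

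First I would verify the side condition that each $b \in B$ really has $|b| \geq 2$ in the principal case $|S| \geq 1$. A factor is by definition a \emph{nontrivial} partition, meaning $|b| \neq 1$; the only other way to have $|b| < 2$ is $b = \{\}$, but for $\{\}$ to be a partition of $S$ the map $\iota : \du{\{\}} \to S$ would have to be a bijection, which forces $S = \emptyset$. So if $|S| \geq 1$, every factor has at least two parts.

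Then I would conclude by the counting inequality
\[
|S| \;=\; |\pr{B}| \;=\; \prod_{b \in B} |b| \;\geq\; 2^{|B|},
\]
which gives $|B| \leq \log_2 |S| < \infty$. The degenerate case $|S| = 0$ has to be handled separately since the exponential bound is vacuous, but it is immediate: the only partition of $\emptyset$ is $\{\}$ itself (by the same bijection argument applied to $\iota$), so $B \subseteq \{\{\}\}$ and in particular $|B| \leq 1$.

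There is no real obstacle here; the argument is essentially a one-line counting bound. The only thing to be careful about is reading the definitions precisely — ``nontrivial'' means $|b| \neq 1$ rather than $|b| \geq 2$, so one must separately exclude $|b| = 0$, and one must not forget to treat $|S| = 0$ on its own.
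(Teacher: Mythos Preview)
Your argument is correct, but it is not the paper's. The paper dispatches the proposition in one line by observing that $B \subseteq \parts \subseteq \mathcal{P}(\mathcal{P}(S))$, so $|B| \le 2^{2^{|S|}}$, which is finite once $|S|$ is. No use is made of the bijection $\pi$ or of the nontriviality of factors; the paper even remarks immediately afterward that ``this bound is horrible and will be improved in Proposition~\ref{templabel3}.'' Your route---use $|S| = |\pr{B}| = \prod_{b\in B}|b| \ge 2^{|B|}$ after checking each $|b|\ge 2$---is essentially that later improvement, yielding $|B|\le \log_2|S|$ rather than a doubly exponential bound. So you have traded a trivially short proof for a sharper quantitative conclusion; both are valid, and your careful handling of the edge cases ($|b|=0$ forcing $S=\emptyset$, and the separate treatment of $|S|=0$) is exactly right.
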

\begin{proof}
If $F=(S,B)$ is a factored set, $B$ is a set of sets of subsets of $S$. Thus, $|B|\leq 2^{2^{|S|}}$.
\end{proof}
This bound is horrible and will be improved in Proposition \ref{templabel3}. First, however, we will take a look at the number of factorizations of a fixed finite set. 

\begin{proposition}\label{sizeffs}
Let $F=(S,B)$ be a finite factored set. Then $|S|=\prod_{b\in B} |b|$.
\end{proposition}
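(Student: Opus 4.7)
The plan is to use the bijection $\pi : S \to \pr{B}$ provided by the definition of factorization, which immediately reduces the problem to counting the cardinality of $\pr{B}$. Since $\pi$ is a bijection, we have $|S| = |\pr{B}|$, and the entire content of the proposition is to identify $|\pr{B}|$ with $\prod_{b \in B} |b|$.

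To count $|\pr{B}|$, I would recall that an element $f \in \pr{B}$ is a function $f : B \to \du{B}$ such that for each $b \in B$, $f(b) = (b, t)$ for some $t \in b$. So specifying $f$ amounts to independently choosing, for each $b \in B$, a part $t \in b$. Because $B$ is finite (by the previous proposition, every finite factored set is finite-dimensional) and each $b$ is finite (each $b$ is a partition of the finite set $S$), the number of such choice-functions is exactly $\prod_{b \in B} |b|$ by the basic product principle for finite sets.

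Combining these two observations gives $|S| = |\pr{B}| = \prod_{b\in B} |b|$, completing the proof. There is no real obstacle here: the definition of factorization was specifically engineered so that $\pi$ is a bijection onto the Cartesian product, and once one unpacks what $\pr{B}$ is as a set of choice functions, the counting is immediate. The only thing one might worry about is handling the edge case $B = \{\}$ (which occurs when $|S| = 1$ by Proposition \ref{templabel4}), but there $\pr{B}$ contains exactly one function (the empty function) and the empty product $\prod_{b \in \{\}} |b| = 1$ agrees with $|S| = 1$, so the formula holds in that case too.
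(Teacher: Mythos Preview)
Your proof is correct and is precisely the argument the paper has in mind: the paper's own proof reads simply ``Trivial,'' and your unpacking of the bijection $\pi:S\to\pr{B}$ together with the product-rule count of $\pr{B}$ is exactly the intended triviality. Your handling of the $B=\{\}$ edge case is also fine.
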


\begin{proof}
Trivial.
\end{proof}

\begin{proposition}\label{templabel5}
If $|S|$ is equal to $0$, $1$, or a prime, the trivial factorization of $S$ is the only factorization of $S$.
\end{proposition}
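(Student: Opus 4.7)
The plan is to reduce the claim to Proposition \ref{sizeffs}, together with the observation that every factor $b$ of a factored set $F=(S,B)$ is a nontrivial partition of $S$ and hence has $|b|\neq 1$.

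First I would handle the degenerate cases $|S|=0$ and $|S|=1$ separately, essentially by quoting Proposition \ref{templabel4}. When $|S|=0$, the only partition of $S$ is $\{\}$, so the only candidate sets of partitions are $\{\}$ and $\{\{\}\}$, and Proposition \ref{templabel4} already tells us which of these is the (unique trivial) factorization. When $|S|=1$, the only partition of $S$ is $\{S\}=\mathrm{Ind}_S$, which is trivial, so the only $B\subseteq\parts[S]$ consisting of nontrivial partitions is $B=\{\}$, which is again the trivial factorization by Proposition \ref{templabel4}.

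The substantive case is $|S|=p$ for $p$ prime. Let $F=(S,B)$ be any factorization. By Proposition \ref{sizeffs}, $p = |S| = \prod_{b\in B}|b|$. Because each $b\in B$ is a nontrivial partition of a nonempty set $S$, the cardinality $|b|$ is a positive integer different from $1$, hence $|b|\geq 2$. Since $p$ is prime and the product of the $|b|$'s equals $p$ with every factor at least $2$, the product must have exactly one term: $|B|=1$ and the unique $b\in B$ satisfies $|b|=p$. But a partition of an $|S|=p$ element set into $p$ nonempty parts must consist of singletons, so $b=\mathrm{Dis}_S$, and therefore $B=\{\mathrm{Dis}_S\}$, which is the trivial factorization by Proposition \ref{templabel4}.

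There is no real obstacle here; the only thing that needs a moment's care is making sure the edge cases $|S|\in\{0,1\}$ are dispatched cleanly, since Proposition \ref{sizeffs} is compatible with an empty basis (empty product equal to $1$) and with a factor of cardinality $0$ (when $S=\emptyset$), so one must argue directly rather than blindly invoking primality.
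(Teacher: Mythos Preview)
Your proposal is correct and follows essentially the same approach as the paper: both handle $|S|\in\{0,1\}$ by observing that $\parts$ is too small to allow a nontrivial $B$, and both treat the prime case via Proposition~\ref{sizeffs}, using that each $|b|\geq 2$ divides $p$ to force $|b|=p$ and hence $b=\mathrm{Dis}_S$. The only cosmetic difference is that the paper phrases the prime step as ``$|b|=p$ for all $b\in B$, and there is only one partition of size $p$, so $|B|\leq 1$,'' whereas you argue directly that a product of integers $\geq 2$ equal to a prime must have exactly one factor; these are the same argument.
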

\begin{proof}
If $|S|=0$ or $|S|=1$, then $|\parts|=1$, so $B\subseteq\parts$ can have cardinality at most 1.

If $|S|=p$, a prime, then by Proposition \ref{sizeffs}, $|b|$ must divide $p$ for all $b\in B$. Since factorizations cannot contain trivial partitions, this means $|b|=p$ for all $b\in B$. However, $\{\{s\}\mid s\in S\}$ is the only element of $\parts$ of cardinality $p$, so $|B|\leq 1$.
\end{proof}
On the other hand, in the case where $|S|$ is finite and composite, the number of factorizations of $S$ grows very quickly, as seen in Table \ref{tab:my_label}.
\begin{table}[ht]
    \centering
    \begin{tabular}{|r|l||r|l|}
    \hline
        $|S|$ & $|\text{Fact}(S)|$ & $|S|$  & $|\text{Fact}(S)|$ \\
    \hline
        0 & 1 & 13 & 1\\
        1 & 1 & 14 & 8648641\\
        2 & 1 & 15 & 1816214401\\
        3 & 1 & 16 & 181880899201\\
        4 & 4 & 17 & 1\\
        5 & 1 & 18 & 45951781075201\\
        6 & 61 & 19 & 1\\
        7 & 1 & 20 & 3379365788198401\\
        8 & 1681 & 21 & 1689515283456001\\
        9 & 5041 & 22 & 14079294028801\\
        10 & 15121 & 23 & 1\\
        11 & 1 & 24 & 4454857103544668620801\\
        12 & 13638241 & 25 & 538583682060103680001\\
    \hline
    \end{tabular}
    \caption{The number of factorizations of a set $S$ with cardinality up to 25.}
    \label{tab:my_label}
\end{table}
\linebreak
\noindent
Given the naturalness of the notion of factorization, we were surprised to discover that this sequence did not exist on the On-Line Encyclopedia of Integer Sequences (OEIS). We added the sequence, \href{https://oeis.org/A338681}{A338681}, on April 30, 2021.\nocite{OEIS:2021}

To give one concrete example, the four factorizations of  the set $\{0,1,2,3\}$ are:
\begin{itemize}
    \item $\{\{\{0\},\{1\},\{2\},\{3\}\}\}$,
    \item $\{\{\{0,1\},\{2,3\}\},\{\{0,2\},\{1,3\}\}\}$,
    \item 
    $\{\{\{0,1\},\{2,3\}\},\{\{0,3\},\{1,2\}\}\}$, and
    \item
    $\{\{\{0,2\},\{1,3\}\},\{\{0,3\},\{1,2\}\}\}$.
\end{itemize}
\begin{proposition}\label{templabel3}
Let $F$ be a finite factored set. 
\begin{enumerate}
    \item If $\text{size}(F)=0$, then $\text{dim}(F)=1$.
    \item If $\text{size}(F)=1$, then $\text{dim}(F)=0$.
    \item If $\text{size}(F)=p$ is prime, then $\text{dim}(F)=1$.
    \item If $\text{size}(F)=p_0\dots p_{k-1}$ is a product of $k\geq 2$ primes, then $1\leq \text{dim}(F)\leq k$.
\end{enumerate}
\end{proposition}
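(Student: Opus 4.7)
The plan is to dispatch parts 1--3 almost immediately by combining Proposition \ref{templabel4} (which identifies the unique trivial factorization for each small size regime) with Proposition \ref{templabel5} (which says that for sizes $0$, $1$, or prime, the only factorization is the trivial one). Specifically, for part 1 I would note that when $|S|=0$, Proposition \ref{templabel4} gives the trivial factorization as $\{\text{Dis}_S\} = \{\{\}\}$, which has one element. For part 2, when $|S|=1$, the trivial factorization is $\{\}$, which has zero elements. For part 3, when $|S|=p$ is prime, then $|S|\neq 1$, so again the trivial factorization is $\{\text{Dis}_S\}$, which has one element. In all three cases Proposition \ref{templabel5} tells us this is the only factorization, pinning down the dimension exactly.

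For part 4, the main tool is Proposition \ref{sizeffs}, which gives $|S| = \prod_{b \in B} |b|$. Since every factor $b \in B$ is a nontrivial partition, $|b| \geq 2$. The upper bound $\text{dim}(F) \leq k$ then follows by counting prime factors with multiplicity: each $|b|$ contributes at least one prime factor (with multiplicity) to the product on the right-hand side, while the left-hand side $p_0 \cdots p_{k-1}$ has exactly $k$ prime factors (with multiplicity), so $|B| \leq k$.

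For the lower bound $\text{dim}(F) \geq 1$, I would argue by contrapositive: if $B = \{\}$, then Proposition \ref{sizeffs} gives $|S| = \prod_{b \in \{\}}|b| = 1$ (the empty product). But in our situation $|S| = p_0 \cdots p_{k-1} \geq 2^k \geq 4 > 1$, so $B$ must be nonempty, giving $\text{dim}(F) \geq 1$.

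I do not expect any step to be a serious obstacle: parts 1--3 are bookkeeping on top of the earlier propositions, and part 4 is a short divisibility argument using the multiplicativity of size. The only thing worth being careful about is keeping the $|S|=0$ case straight, since there the ``trivial factorization'' is $\{\{\}\}$ rather than $\{\}$, which is slightly counterintuitive but already handled by Proposition \ref{templabel4}.
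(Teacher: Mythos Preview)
Your proposal is correct and follows essentially the same route as the paper: parts 1--3 are reduced to Propositions \ref{templabel4} and \ref{templabel5}, and part 4 uses Proposition \ref{sizeffs} together with $|b|\geq 2$ to bound $|B|$ by the number of prime factors of $|S|$, with the lower bound coming from the empty product being $1$. The only cosmetic difference is that the paper explicitly rules out $|b|=0$ before concluding $|b|\geq 2$, which in your setting is already implicit since $|S|\geq 4$ forces $S$ (and hence each partition of $S$) to be nonempty.
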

\begin{proof}
The first three parts follow directly from Proposition \ref{templabel4} and Proposition \ref{templabel5}. For the fourth part, let $F=(S,B)$, and let $|S|=p_0\dots p_{k-1}$ be a product of $k\geq 2$ primes.

By Proposition \ref{sizeffs}, $|S|=\prod_{b\in B} |b|$. Consider an arbitrary $b\in B$. Since $b$ is a nontrivial partition of a finite set $S$, $|b|$ is finite and $|b|\neq 1$. If $|b|$ were $0$, then $|S|$ would be $0$. Thus $|b|$ is a natural number greater than or equal to 2. $B$ cannot be empty, since $|S|\neq 1$. If $|B|$ were greater than $k$, then we would be able to express $|S|$ as a product of more than $k$ natural numbers greater than or equal to $2$, which is clearly not possible since $|S|$ is a product of $k$ primes. Thus $1\leq\text{dim}(F)\leq k$.
\end{proof}

\section{Orthogonality and Time}
The main way we'll be using factored sets is as a foundation for talking about concepts like orthogonality and time. Finite factored sets will play a role that's analogous to that of directed acyclic graphs in Pearlian causal inference.

To utilize factored sets in this way, we will first want to introduce the concept of generating a partition with factors.
\subsection{Generating a Partition with Factors}
\begin{definition}[generating a partition]\label{templabel9}
Given a finite factored set $F=(S,B)$, a partition $X\in \parts$, and a $C\subseteq B$, we say $C$ generates $X$ (in $F$), written $C\vdash^F X$, if $\chi^F_C(x,S)=x$ for all $x\in X$.
\end{definition}
The following proposition gives many equivalent definitions of $\vdash^F$.
\begin{proposition}\label{equivgen}
Let $F=(S,B)$ be a finite factored set, let $X\in \parts$ be a partition of $S$, and let $C$ be a subset of $B$. The following are equivalent:
\begin{enumerate}
    \item $C\vdash^F X$.
    \item $\chi^F_C(x,S)=x$ for all $x\in X$.
    \item $\chi^F_C(x,S)\subseteq x$ for all $x\in X$.
    \item $\chi^F_C(x,y)\subseteq x$ for all $x,y\in X$.
    \item $\chi^F_C(s,t) \in [s]_X$ for all $s,t\in S$.
    \item $\chi^F_C(s,t)\sim_X s$ for all $s,t\in S$.
    \item $X\leq_S\bigvee_S(C)$.
\end{enumerate}
\end{proposition}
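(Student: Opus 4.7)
The plan is to establish the equivalences via a cycle $2\Rightarrow 3\Rightarrow 4\Rightarrow 5\Rightarrow 2$, note that $1\Leftrightarrow 2$ holds by Definition \ref{templabel9} and $5\Leftrightarrow 6$ by the definition of $\sim_X$, and then close the loop by proving $6\Leftrightarrow 7$ separately.

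First, the chain $2\Rightarrow 3\Rightarrow 4$ is immediate: $2$ implies $3$ because equality implies containment, and $3$ implies $4$ because $y\subseteq S$ yields $\chi^F_C(x,y)\subseteq\chi^F_C(x,S)\subseteq x$. For $4\Rightarrow 5$, given $s,t\in S$ I would apply $4$ with $x=[s]_X$ and $y=[t]_X$ to conclude $\chi^F_C(s,t)\in\chi^F_C(x,y)\subseteq x=[s]_X$. For $5\Rightarrow 2$, one direction ($\chi^F_C(x,S)\subseteq x$) follows by quantifying $5$ over $s\in x$ and $t\in S$; for the reverse inclusion, I would use property 3 of the chimera function, namely $\chi^F_C(s,s)=s$, so every $s\in x$ lies in $\chi^F_C(x,S)$.

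The key step is $6\Leftrightarrow 7$. The direction $7\Rightarrow 6$ is short: for any $s,t\in S$, the chimera function satisfies $\chi^F_C(s,t)\sim_c s$ for all $c\in C$ by construction, so $\chi^F_C(s,t)\sim_{\bigvee_S(C)}s$; applying $7$ then gives $\chi^F_C(s,t)\sim_X s$. The converse $6\Rightarrow 7$ is where I expect the only real subtlety. Given $s_0,s_1$ with $s_0\sim_c s_1$ for every $c\in C$, I want $s_0\sim_X s_1$. The trick is to show that under this hypothesis, $\chi^F_C(s_0,s_1)=s_1$: for $b\in B\setminus C$ we have $\chi^F_C(s_0,s_1)\sim_b s_1$ directly from the chimera definition, while for $c\in C$ we have $\chi^F_C(s_0,s_1)\sim_c s_0\sim_c s_1$ by hypothesis; hence $\chi^F_C(s_0,s_1)$ and $s_1$ agree on every factor, so they are equal by Proposition \ref{templabel1}. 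Now $6$ gives $\chi^F_C(s_0,s_1)\sim_X s_0$, i.e.\ $s_1\sim_X s_0$, which is $7$.

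The main obstacle, as noted, is the $6\Rightarrow 7$ step, which is the only place where the structural fact about factored sets (Proposition \ref{templabel1}) is genuinely invoked; everything else is bookkeeping with set inclusions and the trivial chimera identities. I would therefore present the proof by first doing the easy cycle through $2,3,4,5$, then noting the definitional equivalences with $1$ and $6$, and finally doing the two implications between $6$ and $7$ carefully.
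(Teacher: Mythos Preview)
Your proof is correct and follows essentially the same approach as the paper: the paper also treats $1\Leftrightarrow 2$ and $5\Leftrightarrow 6$ as definitional, handles $2,3,4,5$ by the same elementary containment arguments (organized as pairwise equivalences rather than your cycle, a cosmetic difference), and proves $6\Leftrightarrow 7$ exactly as you do, using that $s\sim_{\bigvee_S(C)}t$ forces $\chi^F_C(s,t)=t$. The only difference is that you spell out the appeal to Proposition~\ref{templabel1} in the $6\Rightarrow 7$ step explicitly, whereas the paper leaves that implicit.
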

\begin{proof}
The equivalence of conditions 1 and 2 is by definition.

The equivalence of conditions 2 and 3 follows directly from the fact that $\chi^F_C(s,s)=s$ for all $s\in x$, so $\chi^F_C(x,S)\supseteq \chi^F_C(x,x)\supseteq x$.

To see that conditions 3 and 4 are equivalent, observe that since $S=\bigcup_{y\in X} y$, $\chi^F_C(x,S)=\bigcup_{y\in X}\chi^F_C(x,y)$. Thus, if $\chi^F_C(x,S)\subseteq x$, $\chi^F_C(x,y)\subseteq x$ for all $y\in X$, and conversely if $\chi^F_C(x,y)\subseteq x$ for all $y\in X$, then $\chi^F_C(x,S)\subseteq x$.

To see that condition 3 is equivalent to condition 5, observe that if condition 5 holds, then for all $x\in X$, we have $\chi^F_C(s,t)\in [s]_X=x$ for all $s\in x$ and  $t\in S$. Thus $\chi^F_C(x,S)\subseteq x$. Conversely, if condition 3 holds, $\chi^F_C(s,t)\in \chi^F_C([s]_X,S)\subseteq [s]_X$ for all $s,t\in S$.

Condition 6 is clearly a trivial restatement of condition 5.

To see that conditions 6 and 7 are equivalent, observe that if condition 6 holds, and $s,t\in S$ satisfy $s\sim_{\bigvee_S(C)}t$, then $\chi^F_C(s,t)=t$, so $t=\chi^F_C(s,t)\sim_X s$. Thus $X\leq_S\bigvee_S(C)$. Conversely, if condition 7 holds, then since $\chi^F_C(s,t)\sim_{\bigvee_S(C)} s$ for all $s,t\in S$, we have $\chi^F_C(s,t)\sim_{X} s$.
\end{proof}
Here are some basic properties of $\vdash^F$.
\begin{proposition}\label{propgen}
Let $F=(S,B)$ be a finite factored set, let $C$ and $D$ be subsets of $B$, and let $X,Y\in \parts$ be partitions of $S$.
\begin{enumerate}
    \item If $X\leq_S Y$ and $C\vdash^F Y$, then $C\vdash^F X$.
    \item If $C\vdash^F X$ and $C\vdash^F Y$, then $C\vdash^F X\vee_S Y$.
    \item $B\vdash^F X$.
    \item $\{\}\vdash^F X$ if and only if $X=\text{Ind}_S$.
    \item If $C\subseteq D$ and $C\vdash^F X$, then $D\vdash^F X$.
    \item If $C\vdash^F X$ and $D\vdash^F X$, then $C\cap D\vdash^F X$.
\end{enumerate}
\end{proposition}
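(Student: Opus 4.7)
My plan is to leverage the equivalent characterizations collected in Proposition \ref{equivgen}. The lattice-theoretic reformulation (condition 7), $C \vdash^F X \Leftrightarrow X \leq_S \bigvee_S(C)$, handles most of the six parts by reducing them to routine facts about the refinement order on partitions.

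For part 1, transitivity: $X \leq_S Y \leq_S \bigvee_S(C)$. For part 5, the common refinement is monotone in the conditioning set: enlarging $C$ to $D$ only adds conjuncts to the equivalence defining $\bigvee_S$, so $\bigvee_S(C) \leq_S \bigvee_S(D)$ and condition 7 chains. For part 2, $X \vee_S Y$ is by definition the least upper bound of $\{X, Y\}$ in the $\geq_S$ order, so $\bigvee_S(C) \geq_S X$ and $\bigvee_S(C) \geq_S Y$ force $\bigvee_S(C) \geq_S X \vee_S Y$. Part 3 reduces to showing $\bigvee_S(B) = \text{Dis}_S$, which is precisely Proposition \ref{templabel1}; every partition is $\leq_S \text{Dis}_S$. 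For part 4, $\bigvee_S(\{\}) = \text{Ind}_S$ vacuously (no conjuncts), and $\text{Ind}_S$ is the bottom of the $\leq_S$ order, so $X \leq_S \text{Ind}_S$ forces $X = \text{Ind}_S$; conversely $\text{Ind}_S \leq_S \text{Ind}_S$ gives the other direction.

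The main obstacle is part 6: in a general partition lattice the intersection of two generating sets need not generate the join, so this step cannot be purely lattice-theoretic and must invoke the factorization structure. The right tool is the chimera identity from the earlier catalog: property 6 gives $\chi^F_{C \cap D}(s, t) = \chi^F_C(\chi^F_D(s, t),\, t)$. Using characterization 6 of Proposition \ref{equivgen}, the hypothesis $D \vdash^F X$ yields $\chi^F_D(s, t) \sim_X s$ for all $s, t$, and the hypothesis $C \vdash^F X$ yields $\chi^F_C(u, v) \sim_X u$ for all $u, v$. Substituting $u = \chi^F_D(s, t)$ and $v = t$ and chaining these two instances gives $\chi^F_{C \cap D}(s, t) \sim_X \chi^F_D(s, t) \sim_X s$, which is exactly characterization 6 applied to $C \cap D \vdash^F X$.
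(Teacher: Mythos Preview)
Your proposal is correct and essentially identical to the paper's own proof: parts 1--5 are dispatched via condition~7 of Proposition~\ref{equivgen} exactly as the paper does, and part~6 uses condition~6 together with the chimera identity $\chi^F_{C\cap D}(s,t)=\chi^F_C(\chi^F_D(s,t),t)$, which is precisely the paper's argument.
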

\begin{proof}
For the first 5 parts, we will use the equivalent definition from Proposition \ref{equivgen} that $C\vdash^F X$ if and only if $X\leq_S\bigvee_S(C)$.

Then 1 follows directly from the transitivity of $\leq_S$.

2 follows directly from the fact that any partition $Z$ satisfies $X\vee_S Y\leq Z$ if and only if $X\leq Z$ and $Y\leq Z$.

3 follows directly from the fact that $\bigvee_S(B)=\text{Dis}_S$ by Proposition \ref{templabel1}.

4 follows directly from the fact that $\bigvee_S(\{\})=\text{Ind}_S$, together with the fact that $X\leq_S \text{Ind}_S$ if and only if $X=\text{Ind}_S$.

5 follows directly from the fact that if $C\subseteq D$, then $\bigvee_S(C)\leq \bigvee_S(D)$.

Finally, we need to prove part 6. For this, we will use the equivalent definition from Proposition \ref{equivgen} that $C\vdash^F X$ if and only if $\chi^F_C(s,t)\sim_X s$ for all $s,t\in S$. Assume that for all $s,t\in S$, $\chi^F_C(s,t)\sim_X s$ and $\chi^F_D(s,t)\sim_X s$. Thus, for all $s,t\in S$, $\chi^F_{C\cap D}(s,t)=\chi^F_C(\chi^F_D(s,t),t)\sim_X\chi^F_D(s,t)\sim_X s$. Thus $C\cap D\vdash^F X$.
\end{proof}
Our main use of $\vdash^F$ will be in the definition of the history of a partition.
\subsection{History}
\begin{definition}[history of a partition]\label{templabel8}
Given a finite factored set $F=(S,B)$ and a partition $X\in \parts$, let $h^F(X)$ denote the smallest (according to the subset ordering) subset of $B$ such that $h^F(X)\vdash^F X$.
\end{definition}
The history of $X$, then, is the smallest set of factors $C \subseteq B$ such that if you're trying to figure out which part in $X$ any given $s \in S$ is in, it suffices to know what part $s$ is in within each of the factors in $C$. We can informally think of $h^F(X)$ as the smallest amount of information needed to compute $X$.
\begin{proposition}
Given a finite factored set $F=(S,B)$, and a partition $X\in\parts$, $h^F(X)$ is well-defined.
\end{proposition}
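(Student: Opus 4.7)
The plan is to show that the collection $\mathcal{G} = \{C \subseteq B : C \vdash^F X\}$ has a unique minimum element under the subset ordering, which can then be named $h^F(X)$. The two ingredients I need are nonemptiness of $\mathcal{G}$ and closure under (finite) intersection; finiteness of $B$ will take care of the rest.

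First, I would note that $\mathcal{G}$ is nonempty: by Proposition \ref{propgen}, part 3, we have $B \vdash^F X$, so $B \in \mathcal{G}$. Next, I would invoke Proposition \ref{propgen}, part 6, which says that if $C, D \in \mathcal{G}$ then $C \cap D \in \mathcal{G}$. By a straightforward induction on the (finite) number of elements being intersected, any finite intersection of members of $\mathcal{G}$ is again in $\mathcal{G}$.

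Since $F$ is a finite factored set, $B$ is finite (by the proposition stating that finite factored sets are finite-dimensional), so $\mathcal{G} \subseteq \mathcal{P}(B)$ is itself finite. Setting $C^\star = \bigcap_{C \in \mathcal{G}} C$ therefore gives a finite intersection of elements of $\mathcal{G}$, hence $C^\star \in \mathcal{G}$. By construction $C^\star \subseteq C$ for every $C \in \mathcal{G}$, so $C^\star$ is the (unique) minimum of $\mathcal{G}$ with respect to the subset ordering. Hence $h^F(X) = C^\star$ is well-defined.

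I do not expect a genuine obstacle here: the only subtle point is making sure that the minimum really exists rather than merely being a lower bound, and this is handled entirely by the intersection-closure in Proposition \ref{propgen} combined with the finiteness of $B$. Uniqueness is automatic because any two minima of a partially ordered set coincide.
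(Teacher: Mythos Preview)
Your proposal is correct and follows essentially the same argument as the paper: define the candidate as the intersection of all generating subsets, use $B\vdash^F X$ for nonemptiness, use part~6 of Proposition~\ref{propgen} for closure under binary intersection, and use finiteness of $B$ to conclude that the full intersection still generates $X$ and hence is the unique minimum.
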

\begin{proof}
Fix a finite factored set $F=(S,B)$ and a partition $X\in\parts$, and let $h^F(X)$ be the intersection of all $C\subseteq B$ such that $C\vdash^F X$. It suffices to show that $h^F(X)\vdash^F X$; then $h^F(X)$ will clearly be the unique smallest (according to the subset ordering) subset of $B$ such that $h^F(X)\vdash^F X$.

Note that $h^F(X)$ is a finite intersection, since there are only finitely many subsets of $B$, and that $h^F(X)$ is an intersection of a nonempty collection of sets since $B\vdash^F X$. Thus, we can express $h^F(X)$ as a composition of finitely many binary intersections. By part 6 of Proposition \ref{propgen}, the intersection of two subsets that generate $X$ also generates $X$. Thus $h^F(X)\vdash^F X$. 
\end{proof}
Here are some basic properties of history.
\begin{proposition}\label{prophist}
Let $F=(S,B)$ be a finite factored set, and let $X,Y\in \parts$ be partitions of $S$.
\begin{enumerate}
    \item If $X\leq_S Y$, then $h^F(X)\subseteq h^F(Y)$.
    \item $h^F(X\vee_S Y)=h^F(X)\cup h^F(Y)$.
    \item $h^F(X)=\{\}$ if and only if $X=\text{Ind}_S$.
    \item If $S$ is nonempty, then $h^F(b)=\{b\}$ for all $b\in B$.
\end{enumerate}
\end{proposition}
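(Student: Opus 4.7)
The plan is to prove each part by directly invoking Proposition \ref{propgen} and the minimality defining $h^F$. Part 1 is almost immediate: since $h^F(Y) \vdash^F Y$ and $X \leq_S Y$, part 1 of Proposition \ref{propgen} gives $h^F(Y) \vdash^F X$, and then the minimality of $h^F(X)$ forces $h^F(X) \subseteq h^F(Y)$.

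For Part 2, I would prove two inclusions. For $h^F(X \vee_S Y) \subseteq h^F(X) \cup h^F(Y)$, observe that $h^F(X) \vdash^F X$ and $h^F(Y) \vdash^F Y$, so by part 5 of Proposition \ref{propgen} (enlarging the generating set) we get $h^F(X) \cup h^F(Y) \vdash^F X$ and $h^F(X) \cup h^F(Y) \vdash^F Y$; then part 2 of Proposition \ref{propgen} yields $h^F(X) \cup h^F(Y) \vdash^F X \vee_S Y$, and minimality finishes the inclusion. The reverse inclusion $h^F(X) \cup h^F(Y) \subseteq h^F(X \vee_S Y)$ follows from Part 1 of the current proposition applied twice, since both $X \leq_S X \vee_S Y$ and $Y \leq_S X \vee_S Y$ by the definition of common refinement. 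Part 3 is essentially a rewording of part 4 of Proposition \ref{propgen}: if $X = \text{Ind}_S$ then $\{\} \vdash^F X$ so $h^F(X) = \{\}$ by minimality; conversely, if $h^F(X) = \{\}$ then $\{\} \vdash^F X$, which forces $X = \text{Ind}_S$.

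For Part 4, assume $S$ is nonempty and fix $b \in B$. Since $b = \bigvee_S(\{b\})$, Proposition \ref{equivgen} (condition 7) gives $\{b\} \vdash^F b$, so by minimality $h^F(b) \subseteq \{b\}$. Thus $h^F(b)$ is either $\{b\}$ or $\{\}$; the latter would imply $b = \text{Ind}_S = \{S\}$ by Part 3, contradicting the nontriviality of the factor $b$ (which requires $|b| \neq 1$). Hence $h^F(b) = \{b\}$. There is no real obstacle here — the only subtlety is being careful about the direction of $\leq_S$ in Part 2 (common refinement sits above both partitions in the $\leq_S$ order) and remembering in Part 4 that factors are nontrivial partitions so cannot coincide with $\text{Ind}_S$ when $S$ is nonempty.
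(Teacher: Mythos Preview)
Your proof is correct and follows essentially the same route as the paper: parts 1--3 are derived directly from the minimality of $h^F$ together with Proposition~\ref{propgen}, and part 4 uses condition~7 of Proposition~\ref{equivgen} to get $\{b\}\vdash^F b$ and then rules out $h^F(b)=\{\}$ via nontriviality of $b$. The paper merely calls parts 1--3 ``trivial consequences'' where you spell out the details, but the substance is identical.
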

\begin{proof}
The first 3 parts are trivial consequences of history's definition and Proposition \ref{propgen}.

For the fourth part, observe that $\{b\}\vdash^F b$ by condition 7 of Proposition \ref{equivgen}, $b$ is nontrivial, and since $S$ is nonempty $b$ is nonempty, so we have $\neg(\{\}\vdash^F b)$ by part 4 of Proposition \ref{propgen}. Thus $\{b\}$ is the smallest subset of $B$ that generates $b$.
\end{proof}
\subsection{Orthogonality}
We are now ready to define the notion of orthogonality between two partitions of $S$.
\begin{definition}[orthogonality]
Given a finite factored set $F=(S,B)$ and partitions $X,Y\in \parts$, we say $X$ is orthogonal to $Y$ (in $F$), written $\ortho{X}{Y}$, if $h^F(X)\cap h^F(Y)=\{\}$.

If $\neg(\ortho{X}{Y})$, we say $X$ is entangled with $Y$ (in $F$).
\end{definition}
We could also unpack this definition to not mention history or chimera functions. 
\begin{proposition}
Given a finite factored set $F=(S,B)$, and partitions $X,Y\in \parts$,  $\ortho{X}{Y}$ if and only if there exists a $C\subseteq B$ such that $X\leq_S\bigvee_S(C)$ and $Y\leq_S\bigvee_S(B\setminus C)$.
\end{proposition}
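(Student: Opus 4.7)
The plan is to prove both directions by leveraging condition 7 of Proposition \ref{equivgen}, namely that $C\vdash^F X$ is equivalent to $X\leq_S \bigvee_S(C)$, together with the minimality property built into the definition of $h^F$.

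For the forward direction, I would assume $\ortho{X}{Y}$, so $h^F(X)\cap h^F(Y)=\{\}$, and simply take $C=h^F(X)$. Since $h^F(X)\vdash^F X$, condition 7 gives $X\leq_S\bigvee_S(C)$. Since $h^F(Y)$ is disjoint from $h^F(X)=C$, we have $h^F(Y)\subseteq B\setminus C$; applying part 5 of Proposition \ref{propgen} (or directly, the fact that $\bigvee_S$ is monotone in its argument together with condition 7 again) yields $Y\leq_S\bigvee_S(B\setminus C)$ via transitivity of $\leq_S$.

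For the backward direction, I would assume such a $C$ exists. By condition 7 of Proposition \ref{equivgen}, the hypotheses translate to $C\vdash^F X$ and $B\setminus C\vdash^F Y$. By the definition of $h^F$ as the smallest (with respect to subset ordering) generating subset, we conclude $h^F(X)\subseteq C$ and $h^F(Y)\subseteq B\setminus C$. Since $C$ and $B\setminus C$ are disjoint, so are $h^F(X)$ and $h^F(Y)$, which is precisely $\ortho{X}{Y}$.

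There is no real obstacle here; the statement is essentially a repackaging of the definitions. The only thing to be careful about is invoking the right equivalence (condition 7) to convert between the generation predicate $\vdash^F$ and the ordering on partitions, and then using the minimality clause in the definition of history in the backward direction. Everything else follows from elementary set-theoretic manipulation of $C$ and $B\setminus C$.
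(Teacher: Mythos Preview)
Your proposal is correct and matches the paper's own proof essentially line for line: both directions use condition 7 of Proposition \ref{equivgen} to translate between $\vdash^F$ and $\leq_S\bigvee_S(\cdot)$, take $C=h^F(X)$ in the direction starting from orthogonality, and use minimality of $h^F$ together with part 5 of Proposition \ref{propgen} (superset closure) for the other direction. The only cosmetic difference is the order in which you present the two implications.
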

\begin{proof}
If there exists a $C\subseteq B$ such that $X\leq_S\bigvee_S(C)$ and $Y\leq_S\bigvee_S(B\setminus C)$, then $C\vdash^F X$ and $B\setminus C\vdash^F Y$. Thus, $h^F(X)\subseteq C$ and $h^F(Y)\subseteq B\setminus C$, so $h^F(X)\cap h^F(Y)=\{\}$.

Conversely, if $h^F(X)\cap h^F(Y)=\{\}$, let $C=h^F(X)$. Then $C\vdash^F X$, so $X\leq_S\bigvee_S(C)$, and $B\setminus C\supseteq h^F(Y)$, so $B\setminus C\vdash^F Y$, so $Y\leq_S\bigvee_S(B\setminus C)$.
\end{proof}
Here are some basic properties of orthogonality.
\begin{proposition}
Let $F=(S,B)$ be a finite factored set, and let $X,Y,Z\in \parts$ be partitions of $S$.
\begin{enumerate}
    \item If $\ortho{X}{Y}$, then $\ortho{Y}{X}$.
    \item If $\ortho{X}{Z}$ and $Y\leq_S X$, then $\ortho{Y}{Z}$.
    \item If $\ortho{X}{Z}$ and $\ortho{Y}{Z}$, then $\ortho{(X\vee_S Y)}{Z}$.
        \item $\ortho{X}{X}$ if and only if $X=\text{Ind}_S$.
\end{enumerate}
\end{proposition}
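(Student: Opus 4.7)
The plan is to unpack $\ortho{X}{Y}$ as the purely set-theoretic condition $h^F(X)\cap h^F(Y)=\{\}$ and then reduce each of the four claims to a one-line argument using Proposition \ref{prophist}. None of the four parts seems to present a real obstacle; the whole point of having built up history first is that orthogonality inherits its good behavior from the already-established behavior of $h^F$ under $\leq_S$ and $\vee_S$.

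For part 1, I would simply observe that intersection is commutative, so $h^F(X)\cap h^F(Y)=h^F(Y)\cap h^F(X)$, giving symmetry immediately. For part 2, I would invoke part 1 of Proposition \ref{prophist}: since $Y\leq_S X$, we have $h^F(Y)\subseteq h^F(X)$, and therefore $h^F(Y)\cap h^F(Z)\subseteq h^F(X)\cap h^F(Z)=\{\}$. For part 3, I would use part 2 of Proposition \ref{prophist} to rewrite $h^F(X\vee_S Y)=h^F(X)\cup h^F(Y)$, and then distribute intersection over union: $(h^F(X)\cup h^F(Y))\cap h^F(Z)=(h^F(X)\cap h^F(Z))\cup (h^F(Y)\cap h^F(Z))=\{\}\cup\{\}=\{\}$.

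For part 4, I would note that $\ortho{X}{X}$ means $h^F(X)\cap h^F(X)=h^F(X)=\{\}$, and then apply part 3 of Proposition \ref{prophist}, which says exactly that $h^F(X)=\{\}$ iff $X=\text{Ind}_S$. If anything counts as the ``hard'' step, it is simply remembering to cite the appropriate subpart of Proposition \ref{prophist} at each stage; no new combinatorial or chimera-function reasoning is required beyond what is already packaged there.
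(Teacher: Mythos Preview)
Your proposal is correct and matches the paper's approach exactly: the paper simply states that part 1 is trivial from symmetry and that parts 2, 3, and 4 follow directly from Proposition \ref{prophist}, and you have spelled out precisely which subparts of that proposition are being invoked at each step.
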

\begin{proof}
Part 1 is trivial from the symmetry in the definition. 

Parts 2, 3, and 4 follow directly from Proposition \ref{prophist}.
\end{proof}
\subsection{Time}
Finally, we can define our notion of time in a factored set.
\begin{definition}[(strictly) before]
Given a finite factored set $F=(S,B)$, and partitions $X,Y\in \parts$, we say $X$ is before $Y$ (in $F$), written $X\leq^F Y$, if $h^F(X)\subseteq h^F(Y)$.

We say $X$ is strictly before $Y$ (in $F$), written $X<^FY$, if $h^F(X)\subset h^F(Y)$.
\end{definition}
Again, we could also unpack this definition to not mention history or chimera functions. 
\begin{proposition}
Given a finite factored set $F=(S,B)$, and partitions $X,Y\in \parts$,  $X\leq^F Y$ if and only if every $C\subseteq B$ satisfying $Y\leq_S\bigvee_S(C)$ also satisfies $X\leq_S\bigvee_S(C)$.
\end{proposition}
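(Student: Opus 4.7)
The plan is to translate everything into the language of generation and then invoke the minimality property that defines history. By condition 7 of Proposition \ref{equivgen}, the hypothesis $X\leq_S\bigvee_S(C)$ is equivalent to $C\vdash^F X$, and similarly for $Y$. So the statement to prove becomes: $h^F(X)\subseteq h^F(Y)$ if and only if every $C\subseteq B$ with $C\vdash^F Y$ also satisfies $C\vdash^F X$.

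For the forward direction, I would assume $h^F(X)\subseteq h^F(Y)$ and take any $C$ with $C\vdash^F Y$. By the defining minimality of $h^F(Y)$ (Definition \ref{templabel8}), we have $h^F(Y)\subseteq C$, so $h^F(X)\subseteq C$ by transitivity. Since $h^F(X)\vdash^F X$ by definition, part 5 of Proposition \ref{propgen} (generation is preserved under supersets) gives $C\vdash^F X$, as required.

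For the reverse direction, I would apply the hypothesis to the canonical witness $C=h^F(Y)$. Since $h^F(Y)\vdash^F Y$ by definition, the hypothesis yields $h^F(Y)\vdash^F X$. Then the minimality clause in Definition \ref{templabel8}, applied to $X$, gives $h^F(X)\subseteq h^F(Y)$, which is exactly $X\leq^F Y$.

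There is no real obstacle here: the only content is recognizing that Proposition \ref{equivgen} lets us swap between $\leq_S\bigvee_S(C)$ and $C\vdash^F X$, after which both directions reduce to the defining minimality of history plus the monotonicity of $\vdash^F$ under supersets. The whole argument is two short paragraphs.
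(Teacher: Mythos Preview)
Your proof is correct and is essentially the same as the paper's. The paper first packages the equivalence $Y\leq_S\bigvee_S(C)\iff C\supseteq h^F(Y)$ (using exactly the same ingredients you cite: part 7 of Proposition \ref{equivgen}, part 5 of Proposition \ref{propgen}, and the minimality defining $h^F$) and then both directions are immediate set containments, whereas you unfold that equivalence inline in each direction; the content is identical.
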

\begin{proof}
Note that by part 7 of Proposition \ref{equivgen}, part 5 of Proposition \ref{propgen}, and the definition of history, $C$ satisfies $Y\leq_S\bigvee_S(C)$ if and only if $C\supseteq h^F(Y)$, and similarly for $X$. 

Clearly, if $h^F(Y)\supseteq h^F(X)$, every $C\supseteq h^F(Y)$ satisfies $C\supseteq h^F(X)$. Conversely, if $h^F(X)$ is not a subset of $h^F(Y)$, then we can take $C=h^F(Y)$, and observe that $C\supseteq h^F(Y)$ but not $C\supseteq h^F(X)$.
\end{proof}
Interestingly, we can also define time entirely as a closure property of orthogonality. We hold that the philosophical interpretation of time as a closure property on orthogonality is natural and transcends the ontology set up in this paper.
\begin{proposition}
Given a finite factored set $F=(S,B)$, and partitions $X,Y\in \parts$, $X\leq^F Y$ if and only if every $Z\in\parts$ satisfying $\ortho{Y}{Z}$ also satisfies $\ortho{X}{Z}$.
\end{proposition}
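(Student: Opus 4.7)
The plan is to argue both directions by working directly with the history set, since orthogonality and the time order were both defined as simple properties of $h^F$.

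For the forward direction, I would unpack definitions: assume $X\leq^F Y$, so $h^F(X)\subseteq h^F(Y)$. If $Z$ satisfies $\ortho{Y}{Z}$, then $h^F(Y)\cap h^F(Z)=\{\}$, and monotonicity of intersection gives $h^F(X)\cap h^F(Z)\subseteq h^F(Y)\cap h^F(Z)=\{\}$, i.e.\ $\ortho{X}{Z}$. This is essentially one line.

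For the converse, I would prove the contrapositive. Assume $X\not\leq^F Y$, so there exists some $b\in h^F(X)\setminus h^F(Y)$. The natural witness is $Z=\bigvee_S(B\setminus h^F(Y))$, i.e.\ the finest partition generated by ``everything outside the history of $Y$.'' I would then compute $h^F(Z)=B\setminus h^F(Y)$. The $\subseteq$ inclusion comes from the fact that $B\setminus h^F(Y)\vdash^F Z$ (apply parts 2 and 5 of Proposition \ref{propgen} iteratively to the finitely many singletons $\{c\}$ for $c\in B\setminus h^F(Y)$, each of which generates $c$ by condition 7 of Proposition \ref{equivgen}). The $\supseteq$ inclusion uses that $Z\geq_S c$ for every $c\in B\setminus h^F(Y)$ (since the common refinement is finer than each component), so by part 1 of Proposition \ref{prophist}, $h^F(c)\subseteq h^F(Z)$, and by part 4 of Proposition \ref{prophist}, $h^F(c)=\{c\}$ when $S$ is nonempty. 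With $h^F(Z)=B\setminus h^F(Y)$ in hand, $\ortho{Y}{Z}$ is immediate, while $b\in h^F(X)\cap h^F(Z)$ shows $\neg(\ortho{X}{Z})$, contradicting the hypothesis.

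The main obstacle is really just identifying the right witness $Z$ and verifying that its history is exactly the complement $B\setminus h^F(Y)$; once that equality is established, both directions are formal. I should also handle the degenerate case $S=\{\}$ separately (where $\parts[S]$ has only one element and the statement is vacuous), so that part 4 of Proposition \ref{prophist} may be applied in the main argument without qualification.
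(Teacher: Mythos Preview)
Your argument is correct, and the forward direction is identical to the paper's. The converse is also correct, but you take a longer route than necessary. Where you build the witness $Z=\bigvee_S(B\setminus h^F(Y))$ and then verify $h^F(Z)=B\setminus h^F(Y)$ via two inclusions, the paper simply takes $Z=b$ itself: since each $b\in B$ is already a partition of $S$, and $h^F(b)=\{b\}$ for nonempty $S$ by part~4 of Proposition~\ref{prophist}, one immediately has $\ortho{Y}{b}$ (because $b\notin h^F(Y)$) and $\neg(\ortho{X}{b})$ (because $b\in h^F(X)$). Your construction is the ``maximal'' witness while the paper's is the ``minimal'' one; both work, but the single-factor choice avoids the auxiliary computation of $h^F(Z)$ and the iterated appeal to Propositions~\ref{equivgen} and~\ref{propgen}. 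The degenerate case $S=\{\}$ is handled the same way in both proofs.
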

\begin{proof}
Clearly if $h^F(X)\subseteq h^F(Y)$, then every $Z$ satisfying $h^F(Y)\cap h^F(Z)=\{\}$ also satisfies $h^F(X)\cap h^F(Z)=\{\}$.

Conversely, if $h^F(X)$ is not a subset of $h^F(Y)$, let $b\in B$ be an element of $h^F(X)$ that is not in $h^F(Y)$. Assuming $S$ is nonempty, $b$ is nonempty, so we have $h^F(b)=\{b\}$, so $\ortho{Y}{b}$, but not $\ortho{X}{b}$. On the other hand, if $S$ is empty, then $X=Y=\{\}$, so clearly $X\leq^F Y$.
\end{proof}
Here are some basic properties of time.
\begin{proposition}
Let $F=(S,B)$ be a finite factored set, and let $X,Y,Z\in \parts$ be partitions of $S$.
\begin{enumerate}    
    \item $X\leq^F X$.
    \item If $X\leq^F Y$ and $Y\leq^F Z$, then $X\leq^F Z$.
    \item If $X\leq_S Y$, then $X\leq^F Y$.
    \item If $X\leq^F Z$ and $Y\leq^F Z$, then $(X\vee_S Y)\leq^F Z$.
\end{enumerate}
\end{proposition}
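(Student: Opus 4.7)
The plan is to unfold the definition of $X\leq^F Y$ as $h^F(X)\subseteq h^F(Y)$ and then reduce each of the four parts to a routine fact about the subset ordering, invoking Proposition \ref{prophist} where necessary. All four parts should be very short; the statement is essentially saying that $\leq^F$ inherits the lattice-like structure of partitions through the history map.

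For part 1, I would simply note that $h^F(X)\subseteq h^F(X)$. For part 2, transitivity of $\leq^F$ follows from transitivity of $\subseteq$: if $h^F(X)\subseteq h^F(Y)$ and $h^F(Y)\subseteq h^F(Z)$, then $h^F(X)\subseteq h^F(Z)$. For part 3, I would apply part 1 of Proposition \ref{prophist}, which says that $X\leq_S Y$ implies $h^F(X)\subseteq h^F(Y)$, which is exactly $X\leq^F Y$.

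For part 4, the key observation is part 2 of Proposition \ref{prophist}, which gives $h^F(X\vee_S Y)=h^F(X)\cup h^F(Y)$. Given the hypotheses $h^F(X)\subseteq h^F(Z)$ and $h^F(Y)\subseteq h^F(Z)$, we immediately get $h^F(X)\cup h^F(Y)\subseteq h^F(Z)$, i.e., $h^F(X\vee_S Y)\subseteq h^F(Z)$, which is $(X\vee_S Y)\leq^F Z$.

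There is no real obstacle here; this proposition is essentially a translation of Proposition \ref{prophist} (together with elementary facts about $\subseteq$) through the definition of $\leq^F$. I expect the entire proof to be one or two sentences per part, or even a single "Trivial consequence of Proposition \ref{prophist}" if the author prefers.
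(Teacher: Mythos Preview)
Your proposal is correct and matches the paper's proof essentially line for line: parts 1 and 2 are declared trivial from the definition and transitivity of $\subseteq$, while parts 3 and 4 are attributed directly to parts 1 and 2 of Proposition \ref{prophist}, respectively. There is nothing to add.
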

\begin{proof}
Part 1 is trivial from the definition.

Part 2 is trivial by transitivity of the subset relation.

Part 3 follows directly from part 1 of Proposition \ref{prophist}.

Part 4 follows directly from part 2 of Proposition \ref{prophist}.
\end{proof}
Finally, note that we can (circularly) redefine history in terms of time, thus partially justifying the names.
\begin{proposition}
Given a nonempty finite factored set $F=(S,B)$ and a partition $X\in \parts$, $h^F(X)=\{b\in B\mid b\leq^F X\}$.
\end{proposition}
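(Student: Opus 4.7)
The plan is to prove the equality by showing both inclusions, and the key fact that makes both directions immediate is part 4 of Proposition \ref{prophist}, which says that since $S$ is nonempty, $h^F(b) = \{b\}$ for every factor $b \in B$. Unpacking the definition of $\leq^F$, the condition $b \leq^F X$ just says $h^F(b) \subseteq h^F(X)$, and once we substitute $h^F(b) = \{b\}$ this becomes simply $b \in h^F(X)$.

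For the forward inclusion $h^F(X) \subseteq \{b \in B \mid b \leq^F X\}$, I would take an arbitrary $b \in h^F(X)$, note that $h^F(b) = \{b\} \subseteq h^F(X)$, and conclude $b \leq^F X$ by definition. For the reverse inclusion, I would take $b \in B$ with $b \leq^F X$, so $h^F(b) \subseteq h^F(X)$, and again using $h^F(b) = \{b\}$ conclude $b \in h^F(X)$.

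There is no real obstacle here; the proposition is essentially a direct consequence of the already-established identity $h^F(b)=\{b\}$ together with the definitions of $\leq^F$ and history. The only subtlety worth mentioning is why we need nonemptiness of $S$: if $S$ were empty, then every factor $b$ would be empty, $b$ would fail to be a nontrivial partition, and the identity $h^F(b) = \{b\}$ from Proposition \ref{prophist} would not apply (and indeed the proposition would be vacuous or ill-posed). So the hypothesis that $F$ is nonempty is used exactly to invoke part 4 of Proposition \ref{prophist}.
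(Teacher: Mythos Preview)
Your proof is correct and essentially identical to the paper's: both invoke part 4 of Proposition \ref{prophist} to get $h^F(b)=\{b\}$, then unwind the definition of $\leq^F$ so that $b\leq^F X$ becomes $\{b\}\subseteq h^F(X)$, i.e., $b\in h^F(X)$. The paper compresses this into a single chain of set equalities rather than two inclusions, but the content is the same.
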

\begin{proof}
Since $S$ is nonempty, part 4 of Proposition \ref{prophist} says that $h^F(b)=\{b\}$ for all $b\in B$. Thus $\{b\in B\mid b\leq^F X\}=\{b\in B\mid \{b\} \subseteq h^F(X)\}=\{b\in B\mid b \in h^F(X)\}=h^F(X)$.
\end{proof}

\section{Subpartitions and Conditional Orthogonality}

We now want to extend our notion of orthogonality to conditional orthogonality. This will take a bit of work. In particular, we will have to first extend our notions of partition \emph{generation} and \emph{history} to be defined on partitions of subsets of $S$.
\subsection{Generating a Subpartition}
\begin{definition}[subpartition]
A subpartition of a set $S$ is a partition of a subset of $S$. Let $\text{SubPart}(S)=\bigcup_{E\subseteq S}\parts[E]$ denote the set of all subpartitions of $S$.
\end{definition}
\begin{definition}[domain]
The domain of a subpartition $X$ of $S$, written $\text{dom}(X)$, is the unique $E\subseteq S$ such that $X\in\parts[E]$.
\end{definition}
\begin{definition}[restricted partitions]\label{defbare}
Given sets $S$ and $E$ and a partition $X$ of $S$, let $X|E$ denote the partition of $S\cap E$ given by $X|E=\{[e]_X\cap E\mid e\in E\}$.
\end{definition}
\begin{definition}[generating a subpartition]\label{genasubpart}
Given a finite factored set $F=(S,B)$, and $X\in \text{SubPart}(S)$, and a $C\subseteq B$, we say $C$ generates $X$ (in $F$), written $C\vdash^F X$, if $\chi^F_C(x,\text{dom}(X))=x$ for all $x\in X$.
\end{definition}
Note that this definition clearly coincides with Definition \ref{templabel9}, when $X$ has domain $S$. Despite the similarity of the definitions, the idea of generating a subpartition is a bit more complicated than the idea of generating a partition of $S$.

To see this, consider the following list of equivalent definitions. Notice that while the first five directly mirror their counterparts in Proposition \ref{equivgen}, the last two (and especially the last one) require an extra condition.
\begin{proposition}\label{equivsgen}
Let $F=(S,B)$ be a finite factored set, let $X\in \text{SubPart}(S)$ be a subpartition of $S$, let $E=\text{dom}(X)$ be the domain of $X$, and let $C$ be a subset of $B$. The following are equivalent.
\begin{enumerate}
    \item $C\vdash^F X$.
    \item $\chi^F_C(x,E)=x$ for all $x\in X$.
    \item $\chi^F_C(x,E)\subseteq x$ for all $x\in X$.
    \item $\chi^F_C(x,y)\subseteq x$ for all $x,y\in X$.
    \item $\chi^F_C(s,t)\in [s]_X$ for all $s,t\in E$.
    \item $\chi^F_C(s,t)\in E$ and $\chi^F_C(s,t)\sim_X s$ for all $s,t\in E$.
    \item $X\leq_E(\bigvee_S(C)|E)$ and $\chi^F_C(E,E)=E$.
\end{enumerate}
\end{proposition}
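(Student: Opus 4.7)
The plan is to run the same chain of implications as in Proposition~\ref{equivgen}, taking the equivalences among conditions 1 through 5 essentially verbatim with $E$ in place of $S$, and then treat the novelty of conditions 6 and 7 separately. The equivalence 1 $\Leftrightarrow$ 2 is definitional. For the rest of this first block, I would reuse the arguments of Proposition~\ref{equivgen} almost unchanged, using that every part $x \in X$ is a subset of $E$ (so $x = \chi^F_C(x,x) \subseteq \chi^F_C(x,E)$ gives the reverse inclusion needed for 2 $\Leftrightarrow$ 3) and that $E = \bigcup_{y \in X} y$ (which gives 3 $\Leftrightarrow$ 4). The equivalence 5 $\Leftrightarrow$ 6 is then just a rewriting: since $X \in \parts[E]$, each part $[s]_X$ is already a subset of $E$, so ``$\chi^F_C(s,t) \in [s]_X$'' unpacks as the conjunction ``$\chi^F_C(s,t) \in E$ and $\chi^F_C(s,t) \sim_X s$''.

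The main obstacle is 6 $\Leftrightarrow$ 7, because the closure clause $\chi^F_C(E,E) = E$ in condition 7 has no analogue in Proposition~\ref{equivgen}. First I would observe that $\chi^F_C(E,E) \supseteq E$ holds automatically via $\chi^F_C(s,s) = s$, so the closure clause is equivalent to ``$\chi^F_C(s,t) \in E$ for all $s,t \in E$''---the first half of condition 6. Next I would unpack $X \leq_E (\bigvee_S(C)|E)$ via Definition~\ref{defbare}: the parts of $\bigvee_S(C)|E$ are $[e]_{\bigvee_S(C)} \cap E$ for $e \in E$, so for $s,t \in E$ one has $s \sim_{\bigvee_S(C)|E} t$ iff $s \sim_{\bigvee_S(C)} t$. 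Given 6, whenever $s,t \in E$ satisfy $s \sim_{\bigvee_S(C)} t$, a quick check against the defining property of $\chi^F_C$ shows $\chi^F_C(s,t) = t$ (since $t \sim_c s$ for $c \in C$ and $t \sim_b t$ for $b \in B \setminus C$), so 6 yields $t \sim_X s$. Conversely, assuming 7, for arbitrary $s,t \in E$ the closure clause puts $\chi^F_C(s,t) \in E$, and $\chi^F_C(s,t) \sim_c s$ for all $c \in C$ (by the defining property) gives $\chi^F_C(s,t) \sim_{\bigvee_S(C)|E} s$, which the coarsening $X \leq_E (\bigvee_S(C)|E)$ promotes to $\chi^F_C(s,t) \sim_X s$.

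The conceptual point to get right is why condition 7 needs two clauses where its Proposition~\ref{equivgen} counterpart needed only one: without requiring $\chi^F_C(E,E) = E$, the coarsening $X \leq_E (\bigvee_S(C)|E)$ makes no constraint whatsoever on chimera outputs that escape $E$, so the equivalence with condition 6 would break. Once this is recognized, the two-clause form of 7 is forced and the proof reduces to routine bookkeeping on top of Proposition~\ref{equivgen}.
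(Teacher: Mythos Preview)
Your proposal is correct and follows essentially the same route as the paper: the equivalences 1--5 are carried over from Proposition~\ref{equivgen} with $E$ replacing $S$, condition 6 is recognized as a restatement of 5, and the $6\Leftrightarrow 7$ argument matches the paper's (closure clause handles membership in $E$, and the coarsening clause is verified via $\chi^F_C(s,t)=t$ when $s\sim_{\bigvee_S(C)}t$). One tiny notational slip: you write $x=\chi^F_C(x,x)$, but in general only the inclusion $x\subseteq\chi^F_C(x,x)$ holds (and that is all you need), exactly as the paper uses.
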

\begin{proof}
The equivalence of conditions 1 and 2 is by definition.

The equivalence of conditions 2 and 3 follows directly from the fact that $\chi^F_C(s,s)=s$ for all $s\in x$, so $\chi^F_C(x,E)\supseteq \chi^F_C(x,x)\supseteq x$.

To see that conditions 3 and 4 are equivalent, observe that since $E=\bigcup_{y\in X} y$, $\chi^F_C(x,E)=\bigcup_{y\in X}\chi^F_C(x,y)$. Thus, if $\chi^F_C(x,E)\subseteq x$, $\chi^F_C(x,y)\subseteq x$ for all $y\in X$, and conversely if $\chi^F_C(x,y)\subseteq x$ for all $y\in X$, then $\chi^F_C(x,E)\subseteq x$.

To see that condition 3 is equivalent to condition 5, observe that if condition 5 holds, then for all $x\in X$, we have $\chi^F_C(s,t)\in [s]_X=x$ for all $s\in x$ and  $t\in E$. Thus $\chi^F_C(x,E)\subseteq x$. Conversely, if condition 3 holds, $\chi^F_C(s,t)\in \chi^F_C([s]_X,E)\subseteq [s]_X$ for all $s,t\in E$.

Condition 6 is clearly a trivial restatement of condition 5.

To see that conditions 6 and 7 are equivalent, observe that if condition 6 holds, then $\chi_C^F(s,t)\in E$ for all $s,t\in E$, so $\chi_C^F(E,E)\subseteq E$, so $\chi_C^F(E,E)=E$. Further, if $s,t\in E$ satisfy $s\sim_{\bigvee_S(C)|E}t$, then $s\sim_{c}t$ for all $c\in C$, so $\chi^F_C(s,t)=t$, so $t=\chi^F_C(s,t)\sim_X s$. Thus $X\leq_E\bigvee_S(C)|E$. 

Conversely, if condition 7 holds, then for all $s,t\in E$, we have $\chi^F_C(s,t)\sim_{\bigvee_S(C)} s$, so $\chi^F_C(s,t)\sim_{\bigvee_S(C)|E} s$, and thus $\chi^F_C(s,t)\sim_{X} s$. Further, clearly $\chi_C^F(E,E)=E$ implies $\chi_C^F(s,t)\in E$ for all $s,t\in E$.

\end{proof}
The first half of condition 7 in the above proposition can be thought of as saying that the values of factors in $C$ are sufficient to distinguish between the parts of $X$.

The second half can be thought of as saying that no factors in $C$ become entangled with any factors outside of $C$ when conditioning on $E$. This second half is actually necessary (for example) to ensure that the set of all $C$ that generate $X$ is closed under intersection. As such, we will need this fact in order to extend our notion of history to arbitrary subpartitions.
\begin{proposition}\label{propsgen}
Let $F=(S,B)$ be a finite factored set, let $C$ and $D$ be subsets of $B$, let $X,Y,Z\in \text{SubPart}(S)$ be subpartitions of $S$, and let $\text{dom}(X)=\text{dom}(Y)=E$.
\begin{enumerate}
    \item If $X\leq_E Y$ and $C\vdash^F Y$, then $C\vdash^F X$.
    \item If $C\vdash^F X$ and $C\vdash^F Y$, then $C\vdash^F X\vee_E Y$.
    \item $B\vdash^F X$.
    \item $\{\}\vdash^F X$ if and only if $X=\text{Ind}_E$.
    \item If $C\vdash^F X$ and $D\vdash^F X$, then $C\cap D\vdash^F X$ and $C\cup D\vdash^F X$.
    \item If $X\subseteq Z$, and $C\vdash^F Z$, then $C\vdash^F X$.
\end{enumerate}
\end{proposition}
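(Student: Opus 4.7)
The plan is to dispatch parts 1--4 and part 6 by choosing the most convenient equivalent formulation from Proposition \ref{equivsgen}, and to do the real work in part 5, which is the reason condition 7 of that proposition had to carry the extra clause $\chi^F_C(E,E)=E$.

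For parts 1 and 2, I would use condition 7. Part 1 is then just transitivity of $\leq_E$ together with the fact that $\chi^F_C(E,E)=E$ depends only on $C$, not on $X$ or $Y$. Part 2 uses the universal property of $\vee_E$: if $X\leq_E(\bigvee_S(C)|E)$ and $Y\leq_E(\bigvee_S(C)|E)$, then $X\vee_E Y\leq_E(\bigvee_S(C)|E)$, and again $\chi^F_C(E,E)=E$ is unchanged. For part 3, I would use condition 2 directly: since $\bigvee_S(B)=\text{Dis}_S$ (Proposition \ref{templabel1}), property 10 of the chimera function gives $\chi^F_B(x,E)=\{x\}\cdot$-style singletons inside $x$; more simply, $\chi^F_B(s,t)=s$, so $\chi^F_B(x,E)=x$ for every $x\in X$. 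Part 4 follows from condition 6: since $\chi^F_{\{\}}(s,t)=t$ (property 11), the condition $\chi^F_{\{\}}(s,t)\sim_X s$ for all $s,t\in E$ is exactly the statement that every pair of points of $E$ lies in a common part of $X$, i.e.\ $X=\text{Ind}_E$.

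Part 5 is the main obstacle, and here I would work from condition 6 (``$\chi^F_C(s,t)\in E$ and $\chi^F_C(s,t)\sim_X s$ for all $s,t\in E$''), together with the chimera identities
\[
\chi^F_{C\cup D}(s,t)=\chi^F_C\bigl(s,\chi^F_D(s,t)\bigr),\qquad
\chi^F_{C\cap D}(s,t)=\chi^F_C\bigl(\chi^F_D(s,t),t\bigr).
\]
Fix $s,t\in E$. Because $D\vdash^F X$, condition 6 gives $u:=\chi^F_D(s,t)\in E$ and $u\sim_X s$. For the union, $\chi^F_{C\cup D}(s,t)=\chi^F_C(s,u)$ with $s,u\in E$; applying $C\vdash^F X$ to the pair $(s,u)$ yields $\chi^F_C(s,u)\in E$ and $\chi^F_C(s,u)\sim_X s$. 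For the intersection, $\chi^F_{C\cap D}(s,t)=\chi^F_C(u,t)$ with $u,t\in E$; applying $C\vdash^F X$ to $(u,t)$ yields $\chi^F_C(u,t)\in E$ and $\chi^F_C(u,t)\sim_X u\sim_X s$. In both cases condition 6 is verified, so $C\cap D\vdash^F X$ and $C\cup D\vdash^F X$. The crux is exactly the $E$-stability half of condition 6/7: without knowing that $\chi^F_D(s,t)$ lies in $E$, we could not feed it back into the hypothesis on $C$.

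Part 6 is immediate from condition 4. If $X\subseteq Z$ (so every part of $X$ is a part of $Z$, and in particular $\text{dom}(X)\subseteq\text{dom}(Z)$), then for any $x,y\in X$ we also have $x,y\in Z$, and $C\vdash^F Z$ gives $\chi^F_C(x,y)\subseteq x$; thus $C\vdash^F X$ by condition 4. This finishes the proposition.
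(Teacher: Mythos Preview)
Your proof is correct and follows essentially the same approach as the paper's: parts 1--4 are handled via the convenient equivalent conditions of Proposition~\ref{equivsgen} (the paper cites condition~7 for 1, 2, and 4 and the definition for 3, while you vary slightly but equivalently), part~5 is done by the same chimera identities applied through condition~5/6, and part~6 uses condition~4 in both. The only cosmetic difference is that the paper phrases part~5 via condition~5 ($\chi^F_C(s,t)\in[s]_X$) rather than condition~6, which are trivially equivalent restatements.
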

\begin{proof}
The first 4 parts will use the equivalent definition from Proposition \ref{equivsgen} that $C\vdash^F X$ if and only if $X\leq_S\bigvee_S(C)$. 1 and 2 are immediate from this definition.

3 follows directly from Definition \ref{genasubpart}. 

4 follows directly from the fact that $\bigvee_S(\{\})=\text{Ind}_S$, and $\text{Ind}_S|E=\text{Ind}_E$ so $X\leq_E\bigvee_S(C)|E$ if and only if $X=\text{Ind}_E$.

For part 5, we will use the equivalent definition from Proposition \ref{equivsgen} that $C\vdash^F X$ if and only if $\chi^F_C(s,t)\in [s]_X$ for all $s,t\in E$. Assume that for all $s,t\in E$, $\chi^F_C(s,t)\in [s]_X$ and $\chi^F_D(s,t)\in [s]_X$. Thus, for all $s,t\in E$, $\chi^F_{C\cap D}(s,t)=\chi^F_C(\chi^F_D(s,t),t)\in [\chi^F_D(s,t)]_X=[s]_X$. Similarly, for all $s,t\in E$, $\chi^F_{C\cup D}(s,t)=\chi^F_C(s,\chi^F_D(s,t))\in[s]_X$. Thus $C\cap D\vdash^F X$ and $C\cup D\vdash^F X$.

For part 6, we use the definition that $C\vdash^F X$ if and only if $\chi^F_C(x,y)\in x$ for all $x,y\in X$. Clearly if $X\subseteq Z$, and $\chi^F_C(x,y)\in x$ for all $x,y\in Z$, then $\chi^F_C(x,y)\in x$ for all $x,y\in X$.
\end{proof}
Note that while the set of $C$ that generate an $X\in \parts$ is closed under supersets, the set of $C$ that generate an $X\in \text{SubPart}(S)$ is merely closed under union. Further note that part 6 of Proposition \ref{propsgen} uses the subset relation on subpartitions, which is a slightly unnatural relation.
    
\subsection{History of a Subpartition}
\begin{definition}[history of a subpartition]Given a finite factored set $F=(S,B)$ and a subpartition $X\in\text{SubPart}(S)$, let $h^F(X)$ denote the smallest (according to the subset ordering) subset of $B$ such that $h^F(X)\vdash^F X$.
\end{definition}
\begin{proposition}
Given a finite factored set $F=(S,B)$, $h^F:\text{SubPart}(S)\rightarrow \mathcal{P}(B)$ is well-defined, and if $X$ is a partition of $S$, this definition coincides with Definition \ref{templabel8}.
\end{proposition}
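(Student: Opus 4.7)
The plan is to mimic the proof of well-definedness for histories of partitions of $S$, using the analogous tools from Proposition \ref{propsgen} in place of Proposition \ref{propgen}. Specifically, I would define the candidate $h^F(X)$ as the intersection
\[ h^F(X) = \bigcap\{C\subseteq B \mid C\vdash^F X\} \]
and show that this intersection itself generates $X$, which immediately makes it the unique smallest such subset in the subset ordering.

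First I would observe that the collection being intersected is nonempty, since by part 3 of Proposition \ref{propsgen} we have $B\vdash^F X$. Next, since $F$ is a finite factored set it is finite-dimensional (so $B$ is finite), hence the collection of subsets of $B$ generating $X$ is finite, and therefore $h^F(X)$ can be written as a composition of finitely many binary intersections of generating sets. By part 5 of Proposition \ref{propsgen}, the binary intersection of two subsets that generate $X$ also generates $X$; iterating this finitely many times gives $h^F(X)\vdash^F X$. Since every $C$ generating $X$ contains $h^F(X)$ by construction, $h^F(X)$ is the unique $\subseteq$-minimum element of $\{C\subseteq B\mid C\vdash^F X\}$, establishing well-definedness.

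For the second claim, I would note that when $\text{dom}(X)=S$, Definition \ref{genasubpart} reduces literally to Definition \ref{templabel9} (with $\chi^F_C(x,\text{dom}(X))=\chi^F_C(x,S)$), so the set of generating subsets is identical in the two frameworks. Hence the subset-minimum picked out by the subpartition version of $h^F$ coincides with the one picked out by Definition \ref{templabel8}.

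There is no real obstacle here: the only subtlety compared to the partition case is that for subpartitions the family of generating sets is not upward-closed, but for establishing well-definedness we only need nonemptiness and closure under intersection, both of which are supplied by Proposition \ref{propsgen}. The finiteness of $B$ is what lets us reduce the arbitrary intersection to iterated binary intersections.
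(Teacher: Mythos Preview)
Your proposal is correct and follows essentially the same argument as the paper: define $h^F(X)$ as the intersection of all generating subsets, use $B\vdash^F X$ for nonemptiness, finiteness of $B$ to reduce to iterated binary intersections, and part~5 of Proposition~\ref{propsgen} for closure under intersection. Your observation that upward closure fails for subpartitions but is not needed here is a nice extra remark not made explicit in the paper.
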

\begin{proof}
Fix a finite factored set $F=(S,B)$ and a subpartition $X\in\text{SubPart}(S)$, and let $h^F(X)$ be the intersection of all $C\subseteq B$ such that $C\vdash^F X$. It suffices to show that $h^F(X)\vdash^F X$. Then $h^F(X)$ will clearly be the unique smallest (according to the subset ordering) subset of $B$ such that $h^F(X)\vdash^F X$. The fact that this definition coincides with Definition \ref{templabel8} if $X\in \parts$ is clear.

Note that $h^F(X)$ is a finite intersection, since there are only finitely many subsets of $B$, and that $h^F(X)$ is a nonempty intersection since $B\vdash^F X$. Thus, we can express $h^F(X)$ as a (possibly empty) composition of finitely many binary intersections. By part 5 of Proposition \ref{propsgen}, the intersection of two subsets that generate $X$ also generates $X$. Thus $h^F(X)\vdash^F X$. 
\end{proof}
We will now give five basic properties of the history of subpartitions, followed by two more properties that are less basic.
\begin{proposition}
Let $F=(S,B)$ be a finite factored set, let $X,Y,Z\in \text{SubPart}(S)$ be subpartitions of $S$, and let $\text{dom}(X)=\text{dom}(Y)=E$.
\begin{enumerate}
    \item If $X\leq_E Y$, then $h^F(X)\subseteq h^Y(Y)$.
    \item $h^F(X\vee_E Y)=h^F(X)\cup h^F(Y)$.
    \item If $X\subseteq Z$, then $h^F(X)\subseteq h^F(Z)$.
    \item $h^F(X)=\{\}$ if and only if $X=\text{Ind}_E$.
    \item If $S$ is nonempty, then $h^F(b)=\{b\}$ for all $b\in B$.
\end{enumerate}
\end{proposition}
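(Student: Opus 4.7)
The plan is to use the minimality characterization of $h^F$ (as the smallest subset of $B$ generating the given subpartition) together with the properties already established in Proposition \ref{propsgen}, and to handle part 5 by appealing to the corresponding fact for partitions (Proposition \ref{prophist}).

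For the easy parts, I would proceed as follows. For part 1, $h^F(Y)\vdash^F Y$ by definition, and $X\leq_E Y$, so by part 1 of Proposition \ref{propsgen}, $h^F(Y)\vdash^F X$; minimality of $h^F(X)$ then gives $h^F(X)\subseteq h^F(Y)$. For part 3, $h^F(Z)\vdash^F Z$ together with $X\subseteq Z$ and part 6 of Proposition \ref{propsgen} yields $h^F(Z)\vdash^F X$, so again $h^F(X)\subseteq h^F(Z)$ by minimality. For part 4, $h^F(X)=\{\}$ iff $\{\}\vdash^F X$ iff $X=\text{Ind}_E$ by part 4 of Proposition \ref{propsgen}. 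For part 5, $b$ is a partition of $S$, so it is a subpartition of $S$ with domain $S$, and the subpartition definition of $h^F$ coincides with the partition definition there; so $h^F(b)=\{b\}$ is immediate from part 4 of Proposition \ref{prophist}.

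The main obstacle is part 2. The $\supseteq$ inclusion is clean: since $X,Y\leq_E X\vee_E Y$, part 1 gives $h^F(X),h^F(Y)\subseteq h^F(X\vee_E Y)$. The $\subseteq$ inclusion is more subtle. It would suffice, by minimality and part 2 of Proposition \ref{propsgen}, to show that $h^F(X)\cup h^F(Y)\vdash^F X$ and $h^F(X)\cup h^F(Y)\vdash^F Y$. The difficulty is that (as noted after Proposition \ref{propsgen}) generation of subpartitions is \emph{not} closed under supersets in general, because adding factors could push $\chi^F_C(E,E)$ outside of $E$. However, I would verify condition 7 of Proposition \ref{equivsgen} directly. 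For the partition-refinement clause $X\leq_E(\bigvee_S(h^F(X)\cup h^F(Y)))|E$: any two elements of $E$ identified by $\bigvee_S(h^F(X)\cup h^F(Y))$ are in particular identified by $\bigvee_S(h^F(X))$, and $h^F(X)\vdash^F X$ already gives $X\leq_E \bigvee_S(h^F(X))|E$. For the stability clause $\chi^F_{h^F(X)\cup h^F(Y)}(E,E)=E$: given $e_1,e_2\in E$, using the chimera identity $\chi^F_{C\cup D}(s,t)=\chi^F_C(s,\chi^F_D(s,t))$, first $\chi^F_{h^F(Y)}(e_1,e_2)\in E$ because $h^F(Y)\vdash^F Y$ has domain $E$, and then $\chi^F_{h^F(X)}$ applied to two elements of $E$ lands in $E$ because $h^F(X)\vdash^F X$ has domain $E$. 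Symmetry completes the argument for $Y$, and thus $h^F(X\vee_E Y)\subseteq h^F(X)\cup h^F(Y)$.
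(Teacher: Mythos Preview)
Your proposal is correct and follows essentially the same approach as the paper: parts 1, 3, 4, 5 are dispatched via Proposition \ref{propsgen} and Proposition \ref{prophist} exactly as in the paper, and for part 2 the paper also verifies condition 7 of Proposition \ref{equivsgen} for $h^F(X)\cup h^F(Y)$, using the same chimera identity $\chi^F_{C\cup D}(s,t)=\chi^F_C(s,\chi^F_D(s,t))$ for the stability clause. The only cosmetic difference is that the paper checks the refinement clause directly for $X\vee_E Y$, whereas you check it for $X$ and $Y$ separately and then invoke part 2 of Proposition \ref{propsgen}; the underlying computations are identical.
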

\begin{proof}
Parts 1, 3, and 4 are trivial consequences of Proposition \ref{propsgen}, and part 5 is just a restatement of part 4 of Proposition \ref{prophist}.

For part 2, first observe that $h^F(X\vee_E Y)\supseteq h^F(X)\cup h^F(Y)$, by part 1 of Proposition \ref{propsgen}. Thus it suffices to show that $h^F(X)\cup h^F(Y)\supseteq h^F(X\vee_E Y)$, by showing that $h^F(X)\cup h^F(Y)\vdash^F X\vee_E Y$.

We will use condition 7 in Proposition \ref{equivsgen}. Clearly
\begin{equation} \begin{split} X & \leq_E({\bigvee}_E (h^F(X))|E) \\ & \leq_E({\bigvee}_S(h^F(X)\cup h^F(Y))|E),\end{split} \end{equation}
and similarly,
\begin{equation} \begin{split} Y & \leq_E({\bigvee}_E(h^F(Y))|E) \\ & \leq_E({\bigvee}_S(h^F(X)\cup h^F(Y))|E).\end{split} \end{equation}
Thus, $X\vee_E Y \leq_E (\bigvee_S(h^F(X)\cup h^F(Y))|E)$. 

Next, we need to show that $\chi_{h^F(X)\cup h^F(Y)}^F(E,E)=E$. Clearly $E\subseteq \chi_{h^F(X)\cup h^F(Y)}^F(E,E)$. 

Let $s$ and $t$ be elements of $E$, and observe that  $\chi_{h^F(X)\cup h^F(Y)}^F(s,t)=\chi_{h^F(X)}^F(s,\chi_{h^F(Y)}^F(s,t))$. We have that $\chi_{h^F(Y)}^F(s,t)\in E$, since $\chi_{h^F(Y)}^F(E,E)= E$. Thus, we also have that $\chi_{h^F(X)}^F(s,\chi_{h^F(Y)}^F(s,t))\in E$, since $\chi_{h^F(X)}^F(E,E)= E$. Thus, $\chi_{h^F(X)\cup h^F(Y)}^F(E,E)\subseteq E$.

Thus we have that $X\vee_E Y \leq_E (\bigvee_S(h^F(X)\cup h^F(Y))|E)$ and $\chi_{h^F(X)\cup h^F(Y)}^F(E,E)=E$. Thus, by condition 7 in Proposition \ref{equivsgen}, $h^F(X)\cup h^F(Y)\vdash^F X\vee_E Y$, so $h^F(X\vee_E Y)=h^F(X)\cup h^F(Y)$.
\end{proof}
\begin{lemma2}\label{histlemma1}
Let $F=(S,B)$ be a finite factored set, and let $X,Y\in \parts[E]$ be subpartitions of $S$ with the same domain. If $h^F(X)\cap h^F(Y)=\{\}$, then $h^F(X)=h^F(X|y)$ for all $y\in Y$.
\end{lemma2}
\begin{proof}
Let $F=(S,B)$ be a finite factored set, let $E\subseteq S$, and let $X,Y\in \parts[E]$.

We start by showing that $(B\setminus h^F(X))\vdash^F Y$ and $(B\setminus h^F(Y))\vdash^F X$. Observe that $\chi_{B\setminus h^F(X)}(E,E)=\chi_{ h^F(X)}(E,E)=E$. Further observe that $B\setminus h^F(X)\supseteq h^F(Y)$, so $\bigvee_S(B\setminus h^F(X))\geq_S \bigvee_S(h^F(Y))$, so $(\bigvee_S(B\setminus h^F(X))|E)\geq_E (\bigvee_S(h^F(Y))|E)\geq_E Y$. Thus, $(B\setminus h^F(X))\vdash^F Y$. Symmetrically, $(B\setminus h^F(Y))\vdash^F X$.

Fix some $y\in Y$. We start by showing that $h^F(X)\supseteq h^F(X|y)$. 

We have that $\chi^F_{B\setminus h^F(X)}(y, E)\subseteq y$, so $\chi^F_{h^F(X)}(E, y)\subseteq y$, so for all $x\in X$, we have $\chi^F_{h^F(X)}(x\cap y, y)\subseteq y$. We also have $\chi^F_{h^F(X)}(x\cap y, y)\subseteq \chi^F_{h^F(X)}(x, E)\subseteq x$. Thus  $\chi^F_{h^F(X)}(x\cap y, y)\subseteq x\cap y$. Every element of $X|y$ is of the form $x\cap y$ for some $x\in X$, so we have $h^F(X)\vdash^F(X|y)$, so $h^F(X)\supseteq h^F(X|y)$.

Next, we need to show that $h^F(X)\subseteq h^F(X|y)$. For this, it suffices to show that $h^F(X|y)\vdash^F X$. Let $s,t$ be arbitrary elements of $E$. It suffices to show that $\chi^F_{h^F(X|y)}(s,t)\in [s]_X$. 

First, observe that since $(B\setminus h^F(Y))\supseteq h^F(X)\supseteq h^F(X|y)$, we have that $\chi^F_{h^F(X|y)}(s,t)=\chi^F_{B\setminus h^F(Y)}(\chi^F_{h^F(X|y)}(s,t),t)$.

Let $r$ be an arbitrary element of $y$. We thus have:

\begin{equation}
\begin{split}
\chi^F_{h^F(X|y)}(s,t) & = \chi^F_{B\setminus h^F(Y)}(\chi^F_{h^F(X|y)}(s,t),t) \\
 & = \chi^F_{B\setminus h^F(Y)}(\chi^F_{h^F(Y)}(r,\chi^F_{h^F(X|y)}(s,t)),t)\\
  & = \chi^F_{B\setminus h^F(Y)}(\chi^F_{h^F(X|y)}(\chi^F_{h^F(Y)}(r,s),\chi^F_{h^F(Y)}(r,t)),t).
\end{split}
\end{equation}
Let $s^\prime=\chi^F_{h^F(X|y)}(\chi^F_{h^F(Y)}(r,s),\chi^F_{h^F(Y)}(r,t))$. Note that $\chi^F_{h^F(Y)}(r,t)$ and $\chi^F_{h^F(Y)}(r,s)$ are both in $y$. Thus we have that $s^\prime\in [\chi^F_{h^F(Y)}(r,s)]_{(X|y)}$. Since $(B\setminus h^F(Y))\vdash^F X$, $\chi^F_{h^F(Y)}(r,s)=\chi^F_{B\setminus h^F(Y)}(s,r)\in [s]_X$. Thus $[\chi^F_{h^F(Y)}(r,s)]_{(X|y)}\subseteq[\chi^F_{h^F(Y)}(r,s)]_{X}=[s]_X$, so $s^\prime\in[s]_X$.

We have that $\chi^F_{h^F(X|y)}(s,t) =\chi^F_{B\setminus h^F(Y)}(s^\prime,t)$. However, since $B\setminus h^F(Y)\vdash^F X$, we have $\chi^F_{B\setminus h^F(Y)}(s^\prime,t)\in [s^\prime]_X=[s]_X$. Thus, $h^F(X)\subseteq h^F(X|y)$, so $h^F(X)=h^F(X|y)$.

\end{proof}
\begin{lemma2}\label{histlemma2}
Let $F=(S,B)$ be a finite factored set. Let $E \subseteq S$ and let $X,Y\in \parts[E]$ be subpartitions of $S$ with the same domain. Then $h^F(X\vee_EY)=h^F(X)\cup \bigcup_{x\in X}h^F(Y|x)$.
\end{lemma2}
\begin{proof}
Since $X\leq_E X\vee_E Y$, we have $h^F(X)\subseteq h^F(X\vee_E Y)$. Similarly, for all $x\in X$, since $Y|x\subseteq X\vee_E Y$, we have $h^F(Y|x)\subseteq h^F(X\vee_E Y)$. Thus, $h^F(X\vee_E Y)\supseteq h^F(X)\cup \bigcup_{x\in X}h^F(Y|x).$ We still need to show that $h^F(X\vee_E Y)\subseteq h^F(X)\cup \bigcup_{x\in X}h^F(Y|x).$

We start with the special case where $|X|=2$. Let $X=\{x_0,x_1\}$. In this case, we want to show that $h^F(X\vee_EY)=h^F(X)\cup h^F(Y|x_0)\cup h^F(Y|x_0)$. Let $C=h^F(X)$, let $C_0=h^F(Y|x_0)$, and let $C_1=h^F(Y|x_1)$.

Consider arbitrary $s,t\in E$. Without loss of generality, assume that $s\in x_0$, and let $y=[s]_Y$. It suffices to show that $\chi^F_{C\cup C_0\cup C_1}(s,t)\in x_0\cap y$. Fix some $r\in x_1$.
\begin{equation}
\begin{split}
\chi^F_{C\cup C_0\cup C_1}(s,t) & = \chi^F_{C_0}(s,\chi^F_{C}(s,\chi^F_{C_1}(s,t))) \\
 & = \chi^F_{C_0}(s,\chi^F_{C}(s,\chi^F_{C}(r,\chi^F_{C_1}(s,t))))\\
  & = \chi^F_{C_0}(s,\chi^F_{C}(s,\chi^F_{C_1}(\chi^F_{C}(r,s),\chi^F_{C}(r,t)))).
\end{split}
\end{equation}
Observe that $\chi^F_{C}(r,s)$ and $\chi^F_{C}(r,t)$ are both in $x_1$, so $\chi^F_{C_1}(\chi^F_{C}(r,s),\chi^F_{C}(r,t))\in x_1,$ and thus is in $E$. Combining this with the fact that $s\in x_0$ gives us that $\chi^F_C(s,\chi^F_{C_1}(\chi^F_{C}(r,s),\chi^F_{C}(r,t)))\in x_0.$ Thus, since $s\in x_0\cap y$, $\chi^F_{C\cup C_0\cup C_1}(s,t)=\chi^F_{C_0}(s,\chi^F_{C}(s,\chi^F_{C_1}(\chi^F_{C}(r,s),\chi^F_{C}(r,t))))\in x_0\cap y$.

Now, consider the case where $|X|\neq 2$. If $|X|=0$, then $E=\{\}$, so all subpartitions involved are empty, and thus have the same (empty) history. If $|X|=1$, let $X=\{E\}$. Then
\begin{equation} \begin{split} h^F(X\vee_E Y) & =h^F(Y) \\ & =h^F(Y|E)\subseteq h^F(X)\cup h^F(Y|E) \\ & =h^F(X)\cup \bigcup_{x\in X}h^F(Y|x). \end{split} \end{equation}
Thus, we can restrict our attention to the case where $|X|\geq 3$. 

Observe that $X\vee_E Y=\bigvee_E(\{(Y|x)\cup\{E\setminus x\}\mid x\in X\})$. Thus $h^F(X\vee_E Y)=\bigcup_{x\in X}h^F((Y|x)\cup\{E\setminus x\})$. However, from the case where $|X|=2$, we have 
\begin{equation}
\begin{split}
h^F((Y|x)\cup\{E\setminus x\}) & =h^F(\{x,E\setminus x\}\vee_E((Y|x)\cup\{E\setminus x\})) \\
 & =h^F(\{x,E\setminus x\})\cup h^F(\{E\setminus x\})\cup h^F(Y|x).\\
\end{split}
\end{equation}
 $h^F(\{E\setminus x\})$ is empty, so this gives us that $h^F(X\vee_E Y)=\bigcup_{x\in X}(h^F(Y|x)\cup h^F(\{x,E\setminus x\}))$. Since $\bigvee_E(\{\{x,E\setminus x\}\mid x\in X\})=X$, $\bigcup_{x\in X} h^F(\{x,E\setminus x\})=h^F(X)$, so we have $h^F(X\vee_EY)=h^F(X)\cup \bigcup_{x\in X}h^F(Y|x)$.
\end{proof}

\subsection{Conditional Orthogonality}
We can also extend our notions of orthogonality and time to subpartitions.
\begin{definition}
Let $F=(S,B)$ be a finite factored set. Let $X,Y\in \text{SubPart}(S)$ be subpartitions of $S$. We write $X\perp^F Y$ if $h^F(X)\cap h^F(Y)=\{\}$, we write $X\leq^F Y$ if $h^F(X)\subseteq h^F(Y)$, and we write $X<^F Y$ if $h^F(X)\subset h^F(Y)$.
\end{definition}
We give this definition in general, but it is not clear whether orthogonality and time should be considered philosophically meaningful when the domains of the inputs differ from each other. Further, the temporal structure of subpartitions will mostly be outside the scope of this paper, and the orthogonality structure on subpartitions will mostly just be used for the following pair of definitions.

\begin{definition}[conditional orthogonality given a subset]
Given a finite factored set $F=(S,B)$, partitions $X,Y\in \parts$, and $E\subseteq S$, we say $X$ and $Y$ are orthogonal given $E$ (in $F$), written $\co{X}{Y}{E}$, if $\ortho{(X|E)}{(Y|E)}$.
\end{definition}
\begin{definition}[conditional orthogonality]
Given a finite factored set $F=(S,B)$, and partitions $X,Y, Z\in \parts$, if $\co{X}{Y}{z}$ for all $z\in Z$, then we say $X$ and $Y$ are orthogonal given $Z$ (in $F$), written $\co{X}{Y}{Z}$.
\end{definition}
Unconditioned orthogonality can be thought of as a special case of conditional orthogonality, where you condition on the indiscrete partition.
\begin{proposition}
Given a finite factored set $F=(S,B)$ and partitions $X,Y\in \parts$, $\ortho{X}{Y}$ if and only if $\co{X}{Y}{\text{Ind}_S}$.
\end{proposition}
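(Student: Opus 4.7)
The plan is to unwind both sides of the equivalence using the definitions and then compare, with a small split on whether $S$ is empty.

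First I would expand $\co{X}{Y}{\text{Ind}_S}$ using the definition of conditional orthogonality on a partition: this asserts $\co{X}{Y}{z}$ for every $z \in \text{Ind}_S$, which in turn unpacks to $\ortho{(X|z)}{(Y|z)}$ for every such $z$. Simultaneously, $\ortho{X}{Y}$ unpacks to $h^F(X) \cap h^F(Y) = \{\}$. So the statement reduces to showing that these two conditions coincide.

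Next I would split on $|S|$. In the nonempty case, $\text{Ind}_S = \{S\}$, so the universally quantified condition collapses to a single instance with $z = S$. Here I would invoke the definition of restriction (Definition \ref{defbare}): $X|S = \{[s]_X \cap S \mid s \in S\} = \{[s]_X \mid s \in S\} = X$, and similarly $Y|S = Y$. Hence $\co{X}{Y}{\text{Ind}_S}$ is literally $\ortho{X}{Y}$, closing this case.

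In the degenerate case $S = \{\}$, we have $\text{Ind}_S = \{\}$, so $\co{X}{Y}{\text{Ind}_S}$ holds vacuously. I would separately verify that $\ortho{X}{Y}$ also holds: the only partition of the empty set is $\{\}$, so $X = Y = \{\}$, and by part 4 of Proposition \ref{prophist} (or directly, since $\{\}$ generates $\text{Ind}_S = \{\}$ by part 4 of Proposition \ref{propgen}) both histories are empty, making their intersection empty. I do not expect any real obstacle here; the proof is essentially a matter of careful definition-chasing, with the only subtle point being remembering that the empty-set case forces the universal quantifier over $z \in \text{Ind}_S$ to be vacuous and to still recover $\ortho{X}{Y}$ on that side.
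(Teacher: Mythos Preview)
Your proposal is correct and follows essentially the same approach as the paper: split on whether $S$ is empty, use that $\text{Ind}_S=\{S\}$ and $X|S=X$ in the nonempty case, and handle the empty case by noting the vacuous quantifier and that the unique partition $\{\}$ has empty history. You supply a bit more detail than the paper (explicitly computing $X|S$ and citing the relevant propositions), but the argument is the same.
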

\begin{proof}
If $S=\{\}$, then there is only one partition $X=\{\}$, and $\ortho{X}{X}$ holds. Also, since $\text{Ind}_S$ is empty, $\co{X}{X}{\text{Ind}_S}$ holds vacuously.

If $S\neq\{\}$, then $\text{Ind}_S=\{S\}$, so $\co{X}{Y}{\text{Ind}_S}$ if and only if $\co{X}{Y}{S}$ if and only if $\ortho{X|S}{Y|S}$ if and only if $\ortho{X}{Y}$.
\end{proof}
The primary combinatorial structure of finite factored sets that we will be interested in is the structure of orthogonality ($\ortho{X}{Y}$), conditional orthogonality ($\co{X}{Y}{Z}$), and time ($X\leq^F Y$ and $X<^F Y$)
on inputs that are partitions.

We now will show that conditional orthogonality satisfies (a slight modification of) the axioms for a compositional semigraphoid. 
\begin{theorem}\label{semigraphoid}
Let $F=(S,B)$ be a finite factored set, and let $X,Y,Z,W\in \parts$ be partitions of $S$.
\begin{enumerate}
    \item If $\co{X}{Y}{Z}$, then $\co{Y}{X}{Z}$. (symmetry)
    \item If $\co{X}{(Y\vee_S W)}{Z}$, then $\co{X}{Y}{Z}$ and $\co{X}{W}{Z}$. (decomposition)
    \item If $\co{X}{(Y\vee_S W)}{Z}$, then $\co{X}{Y}{(Z\vee_S W)}$. (weak union)
    \item If $\co{X}{Y}{Z}$ and $\co{X}{W}{(Z\vee_S Y)}$, then $\co{X}{(Y\vee_S W)}{Z}$. (contraction)
    \item If $\co{X}{Y}{Z}$ and $\co{X}{W}{Z}$, then $\co{X}{(Y\vee_S W)}{Z}$. (composition)
\end{enumerate}
\end{theorem}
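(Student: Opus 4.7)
The plan is to prove each axiom separately by unpacking the definition: $\co{A}{B}{Z}$ means that for every $z \in Z$, $h^F(A|z) \cap h^F(B|z) = \{\}$. Axiom 1 (symmetry) is then immediate from the symmetry of intersection.

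The key structural fact I would first record is that for any $z \in Z$, $(Y \vee_S W)|z = (Y|z) \vee_z (W|z)$ (direct from the definition of common refinement and Definition \ref{defbare}), so by part 2 of the proposition on histories of subpartitions, $h^F((Y \vee_S W)|z) = h^F(Y|z) \cup h^F(W|z)$. This equation immediately dispatches axioms 2 and 5. For decomposition: if $h^F(X|z)$ is disjoint from the union, then it is disjoint from each summand, giving $\co{X}{Y}{Z}$ and $\co{X}{W}{Z}$. For composition: if $h^F(X|z)$ is disjoint from each summand, it is disjoint from the union, giving $\co{X}{(Y \vee_S W)}{Z}$.

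Axioms 3 (weak union) and 4 (contraction) are the substantive cases, and here I would invoke Lemma \ref{histlemma2} to obtain an \emph{asymmetric} decomposition $h^F((Y|z) \vee_z (W|z)) = h^F(W|z) \cup \bigcup_{w \in W|z} h^F(Y|u(w))$, where the parts $u \in Z \vee_S W$ restricted below $z$ correspond to the parts $w$ of $W|z$ via $u = z \cap w'$ for $w = w' \cap z$ (and analogously with $Y$ in place of $W$). Comparing this to the symmetric formula yields the inclusion $h^F(Y|u) \subseteq h^F(Y|z) \cup h^F(W|z)$ for each such $u$, and symmetrically. Combined with Lemma \ref{histlemma1}, which asserts $h^F(X|z) = h^F(X|u)$ whenever $h^F(X|z)$ is disjoint from the history of the partition we are further conditioning on, these two facts let me translate disjointness statements over $Z \vee_S W$ (resp.\ $Z \vee_S Y$) into statements over $Z$, matching the hypotheses and conclusions of weak union and contraction respectively.

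The main obstacle, and the step I expect to require the most care, is precisely this translation: one must correctly identify parts $u$ of $Z \vee_S W$ (or $Z \vee_S Y$) with intersections of parts, keep track of which conditional-orthogonality hypothesis supplies the disjointness needed to invoke Lemma \ref{histlemma1} (so that $h^F(X|u) = h^F(X|z)$), and then use the asymmetric/symmetric decomposition mismatch to bound $h^F(Y|u)$ or $h^F(W|u)$ inside the known-disjoint union $h^F(Y|z) \cup h^F(W|z)$. Once this bookkeeping is executed cleanly, both weak union and contraction reduce to the hypothesis about $\co{X}{Y}{Z}$ (and the second hypothesis of contraction), so no further technical ingredient beyond the two lemmas and part 2 of the history proposition is required.
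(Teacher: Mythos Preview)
Your proposal is correct and essentially matches the paper's proof: symmetry is immediate; decomposition and composition both follow from the identity $h^F((Y\vee_S W)|z)=h^F(Y|z)\cup h^F(W|z)$; and weak union and contraction are handled via Lemma~\ref{histlemma1} (to identify $h^F(X|z)$ with $h^F(X|u)$) together with the asymmetric decomposition of Lemma~\ref{histlemma2}. The only cosmetic difference is that for weak union the paper bypasses Lemma~\ref{histlemma2} and instead uses the one-line containment $h^F(Y|(w\cap z))\subseteq h^F((Y\vee_S W)|z)$ coming from the subset relation $Y|(w\cap z)\subseteq (Y\vee_S W)|z$, whereas you obtain the same bound by comparing the asymmetric and symmetric decompositions; both routes are valid and equally short. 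Note also that for contraction the ``mismatch'' inclusion you highlight is not actually needed---the asymmetric decomposition alone, plus Lemma~\ref{histlemma1} and the two hypotheses, suffices exactly as in the paper.
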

\begin{proof}
Symmetry is clear from the definition.

Decomposition and composition both follow directly from the fact that for all $z\in Z$, $h^F((Y\vee_S W)|z)=h^F((Y|z)\vee_z (W|z))=h^F(Y|z)\cup h^F(W|z)$.

For weak union, assume that $\co{X}{(Y\vee_S W)}{Z}$. Thus, for all $z\in Z$, $h^F(X|z)\cap h^F((Y\vee_S W)|z)=\{\}$.

In particular, this means that $h^F(X|z)\cap h^F(W|z)=\{\}$, so by Lemma \ref{histlemma1}, for all $w\in W$, $h^F(X|z)=h^F(X|w\cap z)$.

Further, we have that for all $w\in W$, $h^F(Y|w\cap z)\subseteq h^F(Y \vee_S W|z)$. Thus, for all $w\in W$, $h^F(X|w\cap z)\cap h^F(Y|w\cap z)=\{\}$, which since every element of $W\vee_S Z$ is of the form $w\cap z$ for some $w\in W$ and $z\in Z$, means that $\co{X}{Y}{(Z\vee_S W)}$.

Finally, for contraction, assume that $\co{X}{Y}{Z}$ and $\co{X}{W}{Z\vee_S Y}$.

Fix some $z\in Z$. We want to show that $h^F(X|z)\cap h^F((Y\vee_S W)|z)=\{\}$. We have that $h^F((Y\vee_S W)|z)=h^F((Y|z)\vee_z (W|z))$, and by Lemma \ref{histlemma2}, $h^F((Y|z)\vee_z (W|z))=h^F(Y|z)\cup\bigcup_{y\in Y}h^F(W|(y\cap z))$. Thus, it suffices to show that $h^F(X|z)\cap h^F(Y|z)=\{\}$ and $h^F(X|z)\cap h^F(W|(y\cap z))=\{\}$ for all $y\in Y$.

The fact that $h^F(X|z)\cap h^F(Y|z)=\{\}$ follows directly from $\co{X}{Y}{Z}$. 

Fix a $y\in Y$. If $y\cap z=\{\}$, then $h^F(W|(y\cap z))=\{\}$, so $h^F(X|z)\cap h^F(W|(y\cap z))=\{\}$.

Otherwise, we have $h^F(X|z)=h^F(X|(y\cap z))$ by Lemma \ref{histlemma1}, and we have that $h^F(X|(y\cap z))\cap h^F(W|(y\cap z))=\{\}$, since $\co{X}{W}{Z\vee_S Y}$, so we have $h^F(X|z)\cap h^F(W|(y\cap z))=\{\}$. 

Thus, $\co{X}{(Y\vee_S W)}{Z}$.
\end{proof}

The first four parts of Theorem \ref{semigraphoid} are essentially the semigraphoid axioms. The difference is that the semigraphoid axioms are normally defined as a ternary relation on disjoint sets of variables. We use partitions instead of sets of variables, use common refinement instead of union, and have no need for the disjointness condition. The fifth part (composition) is a converse to the decomposition axiom that is sometimes added to define a compositional semigraphoid.

The results in this paper will not depend on the theory of compositional semigraphoids, so we will not need to make the analogy any more explicit, but it is nice to note the similarity to existing well-studied structures.

We also get a nice relationship between conditional orthogonality and the refinement order.

\begin{proposition}
Let $F=(S,B)$ be a finite factored set, and let $X,Y\in \parts$ be partitions of $S$. $\co{X}{X}{Y}$ if and only if $X\leq_S Y$.
\end{proposition}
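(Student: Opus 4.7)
The plan is to reduce the claim to the characterization of ``self-orthogonality'' via history. By the definition of conditional orthogonality, $\co{X}{X}{Y}$ unfolds as: for every $y \in Y$, $\ortho{(X|y)}{(X|y)}$ in the subpartition sense, i.e.\ $h^F(X|y) \cap h^F(X|y) = \{\}$, which is simply $h^F(X|y) = \{\}$. So the whole statement becomes: $h^F(X|y) = \{\}$ for every $y \in Y$ if and only if $X \leq_S Y$.

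Next, I would invoke the previously established fact that for a subpartition $Z$ with domain $E$, $h^F(Z) = \{\}$ iff $Z = \text{Ind}_E$. Applying this with $Z = X|y$ and $E = y$, and using that each $y \in Y$ is nonempty so $\text{Ind}_y = \{y\}$, we find $h^F(X|y) = \{\}$ iff $X|y = \{y\}$, which unpacks to the statement that every pair of elements of $y$ lies in the same part of $X$.

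Finally, the last condition ranging over all $y \in Y$ is precisely the definition of $X \leq_S Y$: whenever $s_0 \sim_Y s_1$ (both sit in some common $y \in Y$), we have $s_0 \sim_X s_1$. Reading the chain of equivalences in the other direction establishes the converse, completing the proof.

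I do not expect any genuine obstacle here: every step is a direct unpacking of a definition or an immediate application of an already proved lemma (the ``history equals empty iff subpartition is indiscrete'' fact). The only point requiring a bit of care is the boundary case behavior of $\text{Ind}_y$, but since parts of a partition are nonempty by definition, $\text{Ind}_y = \{y\}$ always, and no separate case analysis is needed.
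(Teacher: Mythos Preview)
Your proposal is correct and follows essentially the same route as the paper's proof: unfold $\co{X}{X}{Y}$ to $h^F(X|y)=\{\}$ for all $y\in Y$, invoke the ``empty history iff indiscrete subpartition'' fact to get $X|y=\text{Ind}_y$, and translate this into $X\leq_S Y$. The paper's argument is identical in structure, and your remark about the nonemptiness of $y$ handling the $\text{Ind}_y$ boundary case matches how the paper (implicitly) treats it.
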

\begin{proof}
If $\co{X}{X}{Y}$, then for all $y\in Y$, $h^F(X|y)=\{\}$, so $X|y=\text{ind}_y$, so for all $s,t\in y$, we have $s\sim_{X|y} t$, and thus $s\sim_{X} t$. Thus, for all $s,t\in S$, if $s\sim_Y t$, then $s\sim_X t$. Thus $X\leq_S Y$.

Conversely, if $X\leq_S Y$, observe that for all $y\in Y$, $X|y=\text{ind}_y$, so $h^F(X|y)=\{\}$. Thus, $\co{X}{X}{Y}$.
\end{proof}
\section{Polynomials and Probability}
In this section, given a finite factored set $F=(S,B)$, we will show how to associate each $E\subseteq S$ with a characteristic polynomial, $Q^F_E$. We will discuss how to factor these characteristic polynomials, and use these characteristic polynomials to build up to the fundamental theorem of finite factored sets, which associates conditional orthogonality with conditional independence in probability distributions.
\subsection{Characteristic Polynomials}
\begin{definition} 
Given a finite factored set $F=(S,B)$, let $\text{Poly}^F$ denote the ring of polynomials with coefficients in $\mathbb{R}$ and variables in $\mathcal{P}(S)$. 
\end{definition}
\begin{definition} 
Given a finite factored set $F=(S,B)$, a $p\in\text{Poly}^F$, and an $f:\mathcal{P}(S)\rightarrow \mathbb R$, we write $p(f)\in \mathbb{R}$ for the evaluation of $p$ at $f$, computed by replacing each $E\subseteq S$ with $f(E)$.
\end{definition}
\begin{definition} 
Given a finite factored set $F=(S,B)$ and a polynomial $p\in \text{Poly}^F$, $\text{supp}(p)\subseteq \mathcal{P}(S)$ denotes the set of all variables $v\in\mathcal{P}(S)$ that appear in $p$. $\text{supp}(p)$ is called the support of $p$.
\end{definition}
\begin{definition} 
Given a finite factored set $F=(S,B)$, and an $E\subseteq S$, let $Q^F_E\in \text{Poly}^F$ be given by $Q^F_E=\sum_{s\in E}\prod_{b\in B}[s]_b$. $Q^F_E$ is called the characteristic polynomial of $E$ (in $F$).
\end{definition}

We will be building up to an understanding of how to factor $Q^F_E$ into irreducibles. For that, we will first need to give some basic notation for manipulating polynomials in $\text{Poly}^F$.

\begin{definition}
Given a finite factored set $F=(S,B)$, an $s\in S$, and a $C\subseteq B$, let $\text{mono}^F_C(s)\in \text{Poly}^F$ be given by $\text{mono}^F_C(s)=\prod_{b\in C}[s]_b$. 
\end{definition}

\begin{definition}
Given a finite factored set $F=(S,B)$, an $E\subseteq S$, and a $C\subseteq B$, let $\text{monos}^F_C(E)\in \mathcal{P}(\text{Poly}^F)$ be given by $\text{monos}^F_C(E)=\{\text{mono}^F_C(s)\mid s\in E\}$. 
\end{definition}

\begin{definition}
Given a finite factored set $F=(S,B)$, an $E\subseteq S$, and a $C\subseteq B$, let $\text{poly}^F_C(E)\in \text{Poly}^F$ be given by $\text{poly}^F_C(E)=\sum_{m\in \text{monos}^F_C(E)}m$. 
\end{definition}

\begin{proposition}
Let $F=(S,B)$ be a finite factored set, and let $E\subseteq S$. Then $Q^F_E=\text{poly}^F_B(E)$.
\end{proposition}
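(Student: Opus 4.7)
The plan is to prove that the two polynomials agree term-by-term, by showing that the monomial map $s \mapsto \prod_{b\in B}[s]_b$ is injective on $S$. Once that is established, the sum $\sum_{s\in E}\prod_{b\in B}[s]_b$ defining $Q^F_E$ and the sum $\sum_{m\in \text{monos}^F_B(E)} m$ defining $\text{poly}^F_B(E)$ are just two notations for the same sum, indexed either by $E$ or by the image of $E$ under this map.

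First I would unfold the definitions to reduce the claim to: for distinct $s_0, s_1\in S$, the monomials $\text{mono}^F_B(s_0)$ and $\text{mono}^F_B(s_1)$ are distinct elements of $\text{Poly}^F$. The plan is to contrapose and assume $\text{mono}^F_B(s_0) = \text{mono}^F_B(s_1)$, and show $s_0 = s_1$. Since these are products of the variables $[s_i]_b \in \mathcal{P}(S)$ indexed by $b\in B$, I want to argue that they can only be equal as monomials if the multisets of variables coincide, and then that the coincidence must respect the indexing by $b$.

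The key step, and the main obstacle, is matching up the factors of the two monomials across $b\in B$. For this I invoke Corollary \ref{basisdisjoint}: distinct factors $b_0, b_1 \in B$ are disjoint as sets of parts, so a nonempty part $[s]_{b_0}$ cannot equal any part of $b_1$. Since partitions consist of nonempty subsets, the variables $\{[s]_b : b \in B\}$ in each monomial are pairwise distinct and each can only come from one factor. Hence equality of the two monomials forces $[s_0]_b = [s_1]_b$ for every $b\in B$. By Proposition \ref{templabel1}, this yields $s_0 = s_1$, establishing injectivity.

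Finally, with injectivity of $s \mapsto \text{mono}^F_B(s)$ on $E$, the map restricts to a bijection between $E$ and $\text{monos}^F_B(E)$, so
\begin{equation}
Q^F_E = \sum_{s\in E}\prod_{b\in B}[s]_b = \sum_{s\in E} \text{mono}^F_B(s) = \sum_{m\in \text{monos}^F_B(E)} m = \text{poly}^F_B(E),
\end{equation}
which is what we wanted to show. The argument is short but not entirely trivial: the subtlety is precisely that $\text{monos}^F_B(E)$ is defined as a set rather than as a multiset, so the identity would fail without the injectivity argument powered by Corollary \ref{basisdisjoint} and Proposition \ref{templabel1}.
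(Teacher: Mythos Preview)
Your proposal is correct and takes essentially the same approach as the paper: both arguments establish injectivity of $s \mapsto \text{mono}^F_B(s)$ on $S$ and then conclude that the two sums coincide. The only cosmetic difference is that the paper argues directly (for $s \neq t$, pick $b$ with $[s]_b \neq [t]_b$ and show $[s]_b$ lies in the support of one monomial but not the other, using disjointness of distinct parts of $b$), whereas you contrapose and invoke Corollary~\ref{basisdisjoint} to match variables factor-by-factor before applying Proposition~\ref{templabel1}.
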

\begin{proof}
We start by showing that for all $s\neq t\in S$, $\text{mono}^F_B(s)\neq \text{mono}^F_B(t)$. 

Let $s\neq t\in S$ be arbitrary. By Proposition \ref{templabel1}, if $s\neq t$, there must be some $b\in B$ such that $[s]_b\neq [t]_b$. Then, note that $[s]_b\in \text{supp}(\text{mono}^F_B(s))$. If $[s]_b$ were also in $\text{supp}(\text{mono}^F_B(t))$, then $t$ would be in both $[s]_b$ and $[t]_b$, contradicting the fact that these two sets are disjoint. Therefore $\text{mono}^F_B(s)\neq \text{mono}^F_B(t)$.

Thus $\text{monos}^F_B(E)$ has exactly one element for each element of $E$, so we have that $\sum_{m\in \text{monos}^F_B(E)}m=\sum_{s\in E}\text{mono}^F_B(s)=Q_E^F$.
\end{proof}
\begin{proposition}\label{factor1}
Let $F=(S,B)$ be a finite factored set, and let $E_0,E_1\subseteq S$ be subsets of $S$. Let $C_0,C_1\subseteq B$ be disjoint subsets of $B$. Let $E_2=\chi^F_{C_0}(E_0,E_1)$, and let $C_2=C_0\cup C_1$. Then $\text{poly}^F_{C_2}(E_2)=\text{poly}^F_{C_0}(E_0)\cdot \text{poly}^F_{C_1}(E_1)$.
\end{proposition}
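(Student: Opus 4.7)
The plan is to reduce both sides to sums over the same indexing set by exhibiting a bijection on monomials. Expanding the right-hand side via distribution gives
\[
\text{poly}^F_{C_0}(E_0) \cdot \text{poly}^F_{C_1}(E_1) = \sum_{(m_0, m_1) \in \text{monos}^F_{C_0}(E_0) \times \text{monos}^F_{C_1}(E_1)} m_0 \cdot m_1,
\]
so it suffices to show that the map $\phi \colon (m_0, m_1) \mapsto m_0 \cdot m_1$ is a bijection onto $\text{monos}^F_{C_2}(E_2)$.

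The key identity driving everything is: for any $s_0, s_1 \in S$, setting $s = \chi^F_{C_0}(s_0, s_1)$, the defining property of the chimera function gives $[s]_b = [s_0]_b$ for $b \in C_0$ and $[s]_b = [s_1]_b$ for $b \in B \setminus C_0$. Since $C_0 \cap C_1 = \{\}$ forces $C_1 \subseteq B \setminus C_0$, this yields
\[
\text{mono}^F_{C_2}(s) = \prod_{b \in C_0} [s_0]_b \cdot \prod_{b \in C_1} [s_1]_b = \text{mono}^F_{C_0}(s_0) \cdot \text{mono}^F_{C_1}(s_1).
\]
This identity shows $\phi$ lands in $\text{monos}^F_{C_2}(E_2)$ (because $\chi^F_{C_0}(s_0, s_1) \in E_2$ by the definition of $E_2$) and is surjective (because every $s \in E_2$ has the form $\chi^F_{C_0}(s_0, s_1)$ for some $s_0 \in E_0$ and $s_1 \in E_1$).

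The main obstacle is injectivity of $\phi$. Here the key input is Corollary \ref{basisdisjoint}: distinct factors in $B$ are disjoint as sets of parts of $S$, so each part of $S$ belongs to at most one factor of $B$. Since parts within a single factor are also pairwise disjoint, every variable (part) appearing in a monomial of the form $\text{mono}^F_{C_0}(s_0) \cdot \text{mono}^F_{C_1}(s_1)$ has a uniquely determined originating factor, and these variables are themselves all distinct. Given an equation $m_0 m_1 = m_0' m_1'$ with both pairs drawn from the index set, one recovers $m_0$ (respectively $m_1$) as the sub-product of exactly those variables whose unique originating factor lies in $C_0$ (respectively $C_1$); this partition is unambiguous because $C_0 \cap C_1 = \{\}$.

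Combining these, $\phi$ is a bijection, so
\[
\sum_{(m_0,m_1)} m_0 m_1 \;=\; \sum_{m \in \text{monos}^F_{C_2}(E_2)} m \;=\; \text{poly}^F_{C_2}(E_2),
\]
as required. Trivial edge cases such as $E_0 = \{\}$ or $C_0 = \{\}$ fall out of the conventions that empty products equal $1$ and empty sums equal $0$, and can be spot-checked directly against the definitions of $\chi^F_{C_0}(E_0, E_1)$ and $\text{poly}^F_C$.
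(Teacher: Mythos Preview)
Your proof is correct and follows essentially the same approach as the paper: both establish the bijection $(m_0,m_1)\mapsto m_0m_1$ from $\text{monos}^F_{C_0}(E_0)\times\text{monos}^F_{C_1}(E_1)$ onto $\text{monos}^F_{C_2}(E_2)$ via the chimera identity, and both prove injectivity by using disjointness of factors (Corollary~\ref{basisdisjoint}) to recover each component from the support of the product. The paper phrases injectivity as $\text{supp}(m_i)=\text{supp}(m_0m_1)\cap\bigcup_{b\in C_i}b$, which is your ``originating factor'' argument made explicit.
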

\begin{proof}
For $i\in \{0,1,2\}$, let $M_i=\text{monos}^F_{C_i}(E_i)$. We will start by showing that $f:M_0\times M_1\rightarrow M_2$, given by $f(m_0,m_1)=m_0m_1$, is a well-defined function and a bijection.

First, observe that it follows immediately from the definition that for all $s_0,s_1\in S$, if $s_2=\chi^F_{C_0}(s_0,s_1)$ we have that $\text{mono}^F_{C_0}(s_0)=\text{mono}^F_{C_0}(s_2)$, $\text{mono}^F_{C_1}(s_1)=\text{mono}^F_{C_1}(s_2)$, and $\text{mono}^F_{C_0}(s_2)\cdot\text{mono}^F_{C_1}(s_2)=\text{mono}^F_{C_2}(s_2)$. Combining these, we get that $\text{mono}^F_{C_0}(s_0)\cdot\text{mono}^F_{C_1}(s_1)=\text{mono}^F_{C_2}(\chi^F_{C_0}(s_0,s_1))$.

For all $(m_0,m_1)\in M_0\times M_1$, there exists some $s_0\in E_0$ such that $m_0=\text{mono}^F_{C_0}(s_0)$, and some $s_1\in E_1$ such that $m_1=\text{mono}^F_{C_1}(s_1)$, and this gives us that $m_0m_1=\text{mono}^F_{C_0}(s_0)\text{mono}^F_{C_1}(s_1)=\text{mono}^F_{C_2}(\chi^F_C(s_0,s_1))\in M_2$. Thus, $f$ is well-defined.

To see that $f$ is surjective, observe that for all $m_2\in M_2$, there exists an $s_2\in E_2$ such that $m_2=\text{mono}^F_{C_2}(s_2)$, and there exist $s_0\in E_0$ and $s_1\in E_1$ such that $s_2=\chi^F_C(s_0,s_1)$, and we have $f(\text{mono}^F_{C_0}(s_0),\text{mono}^F_{C_1}(s_1))=m_2$.

To see that $f$ is injective, observe that for $i\in\{0,1\}$, for all $m_i\in M_i$, $\text{supp}(m_i)\subseteq \bigcup_{b\in C_i} b$. Further, $\bigcup_{b\in C_0} b$ and $\bigcup_{b\in C_1} b$ are disjoint. Thus, for all $m_0\in M_0$ and $m_1\in M_1$, $\text{supp}(m_i)=\text{supp}(m_0m_1)\cap \bigcup_{b\in C_i} b$.

This means that for all $m_0,m_0^\prime\in M_0$ and $m_1,m_1^\prime\in M_1$, if $m_0m_1=m_0^\prime m_1^\prime$, then $\text{supp}(m_0)=\text{supp}(m_0^\prime)$ and $\text{supp}(m_1)=\text{supp}(m_1^\prime)$. However, every monomial in $M_0$ or $M_1$ is just equal to the product of all variables in its support. Thus $m_0=\prod_{v\in \text{supp}(m_0)}v=m_0^\prime$ and $m_1=\prod_{v\in \text{supp}(m_1)}v=m_1^\prime$. Thus $f$ is injective, and thus a bijection between $M_0\times M_1$ and $M_2$.

Now, we have that
\begin{equation}
\begin{split}
\text{poly}^F_{C_0}(E_0)\cdot \text{poly}^F_{C_1}(E_1) & = \left(\sum_{m_0\in M_0}m_0\right)\left(\sum_{m_1\in M_1}m_1\right) \\
 & = \sum_{m_0\in M_0}\sum_{m_1\in M_1}m_0m_1\\
 & = \sum_{(m_0,m_1)\in M_0\times M_1}m_0m_1\\
  & = \sum_{(m_0,m_1)\in M_0\times M_1}f(m_0,m_1)\\
    & = \sum_{m_2\in M_2}m_2\\
      & = \text{poly}^F_{C_2}(E_2).
\end{split}
\end{equation}
\end{proof}

\begin{proposition}\label{factor2}
Let $F=(S,B)$ be a finite factored set, and let $E$ be a nonempty subset of $S$. If $p$ divides $Q^F_E$, then $p=r\cdot\text{poly}^F_C(E)$, for some $r\in \mathbb{R}$ and $C\subseteq B$.
\end{proposition}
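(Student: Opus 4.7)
The plan is to exploit the multi-grading on the polynomial ring induced by the factors in $B$. By Corollary \ref{basisdisjoint}, the parts of distinct factors are disjoint, so for each $b \in B$ there is a well-defined $b$-grading on the subring $\mathbb{R}[V_B] \subseteq \text{Poly}^F$ generated by $V_B := \bigcup_{b \in B} b$, declaring each variable in $b$ to have $b$-degree $1$ and each variable in $V_B \setminus b$ to have $b$-degree $0$. Under this grading, $Q^F_E$ lies in $\mathbb{R}[V_B]$ and is simultaneously homogeneous of $b$-degree $1$ for every $b \in B$, because each of its monomials contains exactly one variable drawn from each factor.

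The first step is to show that any divisor $p$ of $Q^F_E$ may be taken (up to a nonzero scalar) to lie in $\mathbb{R}[V_B]$, and is multi-homogeneous with some $b$-degree $d_b \in \{0,1\}$ for each $b \in B$. Containment in $\mathbb{R}[V_B]$ follows from the ring retraction $\text{Poly}^F \to \mathbb{R}[V_B]$ that sends every variable outside $V_B$ to $1$, together with a degree argument on those extra variables. Multi-homogeneity is the standard fact that divisors of a homogeneous element in a graded integral domain are homogeneous, applied one $b$ at a time. Setting $C := \{b \in B : d_b = 1\}$, one then has that $p$ is a linear combination of monomials of the form $\text{mono}^F_C(s)$, and the cofactor $q$ is a linear combination of monomials of the form $\text{mono}^F_{B \setminus C}(t)$.

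Next I would expand $pq = Q^F_E$ and match coefficients, using the bijection $s \mapsto (\text{mono}^F_C(s), \text{mono}^F_{B \setminus C}(s))$ from the proof of Proposition \ref{factor1}. Writing $p = \sum_m \alpha_m m$ with $m \in \text{monos}^F_C(S)$ and $q = \sum_n \beta_n n$ with $n \in \text{monos}^F_{B \setminus C}(S)$, this yields $\alpha_{\text{mono}^F_C(s)} \cdot \beta_{\text{mono}^F_{B \setminus C}(s)} = 1$ when $s \in E$ and $= 0$ otherwise. Three combinatorial observations then finish the proof. First, if some $\alpha_{\text{mono}^F_C(s_0)}$ were nonzero while $[s_0]_{\bigvee_S(C)}$ misses $E$, then $\beta$ would have to vanish at every $\text{mono}^F_{B \setminus C}(s)$ as $s$ varies over $[s_0]_{\bigvee_S(C)}$; but these monomials sweep out all of $\text{monos}^F_{B \setminus C}(S)$ (the $B \setminus C$-coordinates are unconstrained), forcing $q = 0$ and contradicting $E \neq \emptyset$. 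Hence $\text{supp}(\alpha) = \text{monos}^F_C(E)$, and symmetrically $\text{supp}(\beta) = \text{monos}^F_{B \setminus C}(E)$. Second, for any $s, t \in E$, the product $\alpha_{\text{mono}^F_C(s)} \beta_{\text{mono}^F_{B \setminus C}(t)}$ equals the coefficient of $\text{mono}^F_B(\chi^F_C(s,t))$ in $pq$; this product is nonzero, so $\chi^F_C(s,t) \in E$ and the product equals $1$. Third, fixing any $t \in E$, these equations pin down $\alpha_{\text{mono}^F_C(s)} = 1/\beta_{\text{mono}^F_{B \setminus C}(t)} =: r$ uniformly in $s \in E$, and therefore $p = r \cdot \text{poly}^F_C(E)$.

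I expect the main obstacle to be the multi-homogeneity step, specifically the clean reduction of an arbitrary divisor into $\mathbb{R}[V_B]$ and the verification that its $b$-degrees are well-defined elements of $\{0,1\}$. These are standard facts about multi-graded polynomial domains, but they need to be stated carefully before the coefficient comparison can proceed. Once they are in hand, the bijection from Proposition \ref{factor1} makes the remaining bookkeeping essentially mechanical.
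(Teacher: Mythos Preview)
Your argument is correct, and it takes a somewhat different route from the paper's. The paper proceeds by direct support analysis: from $pq=Q^F_E$ it first observes that $\text{supp}(p)$ and $\text{supp}(q)$ are disjoint and union to $\text{supp}(Q^F_E)$, so no like terms combine in the product; this immediately forces every monomial of $p$ to divide some monomial of $Q^F_E$ and hence to have total degree $|C|$ with exactly one variable from each $b\in C$, where $C=\{b\in B: b\cap\text{supp}(p)\neq\emptyset\}$. The bijection between monomials of $Q^F_E$ and pairs (monomial of $p$, monomial of $q$) then pins down all coefficients of $p$ as a common constant and identifies its monomial set with $\text{monos}^F_C(E)$.

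Your version replaces the support-disjointness step with the cleaner statement that $Q^F_E$ is multi-homogeneous of $b$-degree $1$ for every $b\in B$, and invokes the standard fact that divisors of homogeneous elements in a graded integral domain are homogeneous; this yields the same $C$ (as the set of $b$ with $d_b=1$) and the same description of the admissible monomials. Your coefficient-matching then does explicitly what the paper does in one sentence, and along the way you extract the extra fact $\chi^F_C(E,E)\subseteq E$. Two small remarks: the retraction to $\mathbb{R}[V_B]$ is not really needed---the degree argument alone already gives $\text{supp}(p)\subseteq\text{supp}(Q^F_E)\subseteq V_B$, hence $p\in\mathbb{R}[V_B]$ on the nose (no scalar adjustment); and when you assert that the multi-homogeneous monomials are exactly those of the form $\text{mono}^F_C(s)$, you are silently using that the map $s\mapsto([s]_b)_{b\in B}$ is surjective onto $\prod_{b\in B}b$, which is of course the content of the factorization axiom. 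Both approaches arrive at the same place; yours packages the ``shape of $p$'' step more conceptually via gradings, while the paper's is slightly more elementary and self-contained.
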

\begin{proof}
Let $F=(S,B)$ be a finite factored set, and let $E$ be a nonempty subset of $S$. Let $p,q\in \text{Poly}^F$ satisfy  $pq=Q^F_E$. We thus
must have $\text{supp}(p)\cup\text{supp}(q)=\text{supp}(Q^F_E)$.

If there were some $T\in \text{supp}(p)\cap\text{supp}(q)$, then the degree of $T$ in $Q^F_E$ would be at least 2, contradicting the definition of $Q^F_E$ and Corollary \ref{basisdisjoint}. Thus, $\text{supp}(p)\cap\text{supp}(q)=\{\}$.

There can be no combining like terms, then, in the product $pq$. The monomial terms in $Q^F_E$ are in bijective correspondence to the pairs of monomial terms in $p$ and monomial terms in $q$.

In particular, this means that since all the coefficients in $pq$ are equal to 1, all the coefficients in $p$ must be equal to some $r\in \mathbb{R}$,
and all of the coefficients in $q$ must be equal to $1/r$.

Further, for all $b\in B$, if $b\cap \text{supp}(p)$ is nonempty, $b\cap \text{supp}(q)$ must be empty, since otherwise $Q^F_E$ would contain a term with two factors in $b$, which clearly never happens according to the definition of $Q^F_E$. 

Since $E$ is nonempty, for each $b\in B$ there must be some $T\in b\cap \text{supp}(Q^F_E)$. Thus at least one of $b\cap \text{supp}(p)$ and $b\cap \text{supp}(q)$ must be nonempty, so exactly one of $b\cap \text{supp}(p)$ and $b\cap \text{supp}(q)$ must be nonempty.

Let $C$ be the set of all $b\in B$ such that $b\cap \text{supp}(p)$ is nonempty. 

For every $b\in C$, every term of $Q^F_E$ has exactly one factor in $b$. Thus, every term in $p$ has exactly one factor in $b$. These cover all variables in the support of $p$, so each term in $p$ must have total degree $|C|$.

For each $m\in \text{monos}^F_C(E)$, $m$ divides a term in $Q^F_E$.

Since $m$ has no common support with $q$, $m$ must also divide a term in $p$. Thus $r\cdot m$ must be a term in $p$. Conversely, every term in $p$ divides a term in $Q^F_E$, and thus must be in $\text{monos}^F_C(E)$. Thus every term in $p$ is of the form $r\cdot m$ for some $m\in\text{monos}^F_C(E)$. Thus $p=\sum_{m\in\text{monos}^F_C(E)}r\cdot m= r\cdot\text{poly}^F_C(E)$.
\end{proof}
\subsection{Factoring Characteristic Polynomials}
We will now show how to factor characteristic polynomials into irreducibles. 
\begin{definition}
Given a finite factored set $F=(S,B)$, and a nonempty subset $E\subseteq S$, let $\text{Irr}^F(E)\subseteq\mathcal{P}(B)$ denote the set of all $C\subseteq B$ such that:
\begin{enumerate}
    \item $C$ is nonempty,
    \item $\chi^F_C(E,E)=E$, and
    \item there is no nonempty strict subset $D\subset C$ such that $\chi^F_D(E,E)=E$.
\end{enumerate}
\end{definition}

\begin{proposition}
Let $F=(S,B)$ be a finite factored set, and let $E$ be a nonempty subset of $S$. Then $\text{Irr}^F(E)\in\parts[B]$.
\end{proposition}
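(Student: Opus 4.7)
The plan is to consider the family $\mathcal{F} = \{C \subseteq B : C \neq \emptyset,\ \chi^F_C(E,E) = E\}$, so that $\text{Irr}^F(E)$ is exactly the set of minimal (by inclusion) elements of $\mathcal{F}$. I would first establish two closure properties of $\mathcal{F}$, then use them to extract a canonical minimal element of $\mathcal{F}$ containing each given factor, and finally read off that these minimal elements partition $B$.

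For the closure properties: if $C, D \in \mathcal{F}$ and $C \cap D \neq \emptyset$, then chimera property 6 gives $\chi^F_{C \cap D}(s,t) = \chi^F_C(\chi^F_D(s,t), t)$; for $s, t \in E$ we get $\chi^F_D(s,t) \in E$ and then $\chi^F_C(\chi^F_D(s,t), t) \in E$, so (combined with $E \subseteq \chi^F_{C \cap D}(E,E)$, which follows from $\chi^F_{C \cap D}(s,s) = s$) we have $C \cap D \in \mathcal{F}$. Closure under complementation uses property 4: $\chi^F_{B \setminus C}(s,t) = \chi^F_C(t,s)$, so $\chi^F_{B \setminus C}(E,E) = \chi^F_C(E,E) = E$, and thus $B \setminus C \in \mathcal{F}$ whenever it is nonempty. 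Note also that $B \in \mathcal{F}$ whenever $B$ itself is nonempty, since $\chi^F_B(s,t) = s$ by property 10.

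Now for each $b \in B$ I would set $C_b = \bigcap \{C \in \mathcal{F} : b \in C\}$. This intersection ranges over a finite nonempty collection (it contains $B$), and each set in the collection contains $b$, so $C_b$ contains $b$; iterating binary-intersection closure yields $C_b \in \mathcal{F}$. The main obstacle is showing that $C_b$ is actually minimal, i.e. lies in $\text{Irr}^F(E)$. Suppose for contradiction that $D$ is a nonempty strict subset of $C_b$ with $D \in \mathcal{F}$. If $b \in D$, then $D$ is one of the sets whose intersection defines $C_b$, forcing $C_b \subseteq D$ and contradicting $D \subsetneq C_b$. So $b \notin D$, meaning $b \in B \setminus D$; by complement closure $B \setminus D \in \mathcal{F}$, and then by intersection closure $C_b \cap (B \setminus D) = C_b \setminus D$ is in $\mathcal{F}$ (nonempty because it contains $b$). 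But $C_b \setminus D$ contains $b$ and is a strict subset of $C_b$ (since $D \subseteq C_b$ and $D \neq \emptyset$), so by the definition of $C_b$ we must have $C_b \subseteq C_b \setminus D$, a contradiction. Hence $C_b \in \text{Irr}^F(E)$.

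Finally the partition check is quick. Every element of $\text{Irr}^F(E)$ is a nonempty subset of $B$ by definition; every $b \in B$ lies in $C_b \in \text{Irr}^F(E)$, so $\bigcup \text{Irr}^F(E) = B$; and if distinct $C_1, C_2 \in \text{Irr}^F(E)$ had a common element, then $C_1 \cap C_2 \in \mathcal{F}$ would be a nonempty subset of each, and minimality of both would force $C_1 \cap C_2 = C_1 = C_2$, a contradiction. The edge case $B = \emptyset$ (which by Proposition \ref{sizeffs} forces $|S| = 1$) is handled trivially, since $\text{Irr}^F(E) = \emptyset$ is the unique partition of $\emptyset$. The crucial ingredient throughout is the complementation closure of $\mathcal{F}$, without which the minimality step for $C_b$ fails.
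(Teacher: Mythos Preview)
Your proof is correct and follows essentially the same route as the paper: both arguments establish that the family $\{C\subseteq B:\chi^F_C(E,E)=E\}$ is closed under intersection and complementation, define $C_b$ as the intersection of all members containing $b$, prove minimality of $C_b$ via the same case split on whether $b\in D$, and finish with covering and disjointness. Your version is slightly more explicit about isolating complement closure as a separate lemma and about the edge case $B=\emptyset$, but the underlying argument is identical.
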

\begin{proof}
Let $F=(S,B)$ be a finite factored set, and let $E$ be a nonempty subset of $S$. It suffices to show that the sets in $\text{Irr}^F(E)$ are pairwise disjoint and cover $B$. 

We start by showing that the set of all $C\subseteq B$ satisfying $\chi^F_C(E,E)=E$ is closed under intersection. Indeed, if $\chi^F_{C_0}(E,E)=E$ and $\chi^F_{C_1}(E,E)=E$, then $\chi^F_{C_0\cap C_1}(E,E)=\chi^F_{C_0}(E,\chi^F_{C_1}(E,E))=\chi^F_{C_0}(E,E)=E$.

Next, observe that $\chi^F_B(E,E)=E$. Thus, for all $b\in B$, we can consider $C_b=\bigcap_{C\subseteq B,b\in C,\chi^F_C(E,E)=E}C$. Since $C_b$ is an intersection of a finite nonempty collection of sets $C$ satisfying  $\chi^F_C(E,E)=E$, we have that $\chi^F_{C_b}(E,E)=E$. Further, $b\in C_b$, so $C_b$ is nonempty.

Assume for the purpose of contradiction that there is some nonempty strict subset $D\subset C_b$ such that $\chi^F_{D}(E,E)=E$. If $b\in D$, then we have a contradiction by the definition of $C_b$. If $b\notin D$, then note that $\chi^F_{B\setminus D}(E,E)=E$, so  $\chi^F_{C_b\setminus D}(E,E)=E$, and $C_b\setminus D$ is a nonempty strict subset of $C_b$ that contains $b$, contradicting the definition of $C_b$.

Thus $C_b\in \text{Irr}^F(E)$ for all $b\in B$, and since $b\in C_b$, this means that the sets in $\text{Irr}^F(E)$ cover $B$.

Next, we need to show that the sets in $\text{Irr}^F(E)$ are pairwise disjoint. Let $C_0,C_1\in \text{Irr}^F(E)$ be arbitrary distinct elements. We have that $\chi^F_{C_0\cap C_1}(E,E)=E$, and $C_0\cap C_1$ is a subset of $C_0$ and $C_1$, and thus a strict subset of at least one of them. Thus $C_0\cap C_1$ is empty.

Thus $\text{Irr}^F(E)\in\parts[B]$.
\end{proof}
The following two propositions constitute a factorization of $Q^F_E$ into irreducibles.

\begin{proposition}
Let $F=(S,B)$ be a finite factored set, and let $E$ be a nonempty subset of $S$. Then $Q^F_E=\prod_{C\in\text{Irr}^F(E)}\text{poly}^F_C(E)$.
\end{proposition}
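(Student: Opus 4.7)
The plan is to iterate Proposition \ref{factor1} over the blocks of $\text{Irr}^F(E)$. Enumerate $\text{Irr}^F(E) = \{C_1, \ldots, C_k\}$, and set $D_i = C_1 \cup \cdots \cup C_i$, so that $D_0 = \{\}$ and $D_k = B$ (using that $\text{Irr}^F(E)$ partitions $B$).

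The key auxiliary claim is that $\chi^F_{D_i}(E, E) = E$ for every $i$. Each $C_j$ satisfies $\chi^F_{C_j}(E, E) = E$ by the definition of $\text{Irr}^F(E)$, and the closure of this property under unions of the $C_j$'s follows from the chimera identity $\chi^F_{C \cup D}(s, t) = \chi^F_C(s, \chi^F_D(s, t))$ together with the fact that $\chi^F_{C \cup D}(s, s) = s$: for $s, t \in E$, the former identity places $\chi^F_{C \cup D}(s, t)$ in $E$ whenever $\chi^F_C(E,E) = E$ and $\chi^F_D(E,E) = E$, while the latter gives the reverse inclusion. A straightforward induction starting from $\chi^F_{\{\}}(E, E) = E$ then yields the auxiliary claim.

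With that in hand, I apply Proposition \ref{factor1} with $E_0 = E_1 = E$, $C_0 = D_i$, and $C_1 = C_{i+1}$. These $C_0, C_1$ are disjoint because $\text{Irr}^F(E)$ is a partition, and the auxiliary claim guarantees that the quantity $E_2 = \chi^F_{D_i}(E, E)$ equals $E$; the proposition therefore gives $\text{poly}^F_{D_{i+1}}(E) = \text{poly}^F_{D_i}(E) \cdot \text{poly}^F_{C_{i+1}}(E)$. A second induction on $i$, with base case $\text{poly}^F_{D_1}(E) = \text{poly}^F_{C_1}(E)$, yields $\text{poly}^F_{D_k}(E) = \prod_{j=1}^{k} \text{poly}^F_{C_j}(E)$. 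Since $D_k = B$, and the earlier identity $Q^F_E = \text{poly}^F_B(E)$ converts the left-hand side into $Q^F_E$, the statement follows. The degenerate case $k = 0$ forces $B = \{\}$ and hence $|E| = 1$, where both sides equal $1$. There is no real obstacle here; the whole proof is routine given Proposition \ref{factor1}, and the only mild care needed is in propagating the chimera-closure condition $\chi^F_{C_0}(E, E) = E$ along the induction.
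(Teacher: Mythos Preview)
Your proof is correct and follows essentially the same inductive approach as the paper. The paper's version is marginally slicker: by taking the single block $C_k$ (rather than the accumulated union $D_i$) in the role of $C_0$ when invoking Proposition~\ref{factor1}, it gets $\chi^F_{C_0}(E,E)=E$ directly from the definition of $\text{Irr}^F(E)$ and so bypasses your auxiliary closure claim entirely.
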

\begin{proof}
Let $F=(S,B)$ be a finite factored set, and let $E$ be a nonempty subset of $S$. Let $n=|\text{Irr}^F(E)|$, and let $\text{Irr}^F(E)=\{C_0,\dots, C_{n-1}\}$. For $0\leq k<n$, let $C_{\leq k}=\bigcup_{i=0}^{k} C_i$.

We will show by induction on $k$ that $\prod_{i=0}^{k}\text{poly}^F_{C_i}(E)=\text{poly}^F_{C_{\leq k}}(E)$ for all $0\leq k<n$.

If $k=0$, the result is trivial, as $\prod_{i=0}^{0}\text{poly}^F_{C_i}(E)=\text{poly}^F_{C_{0}}(E)=\text{poly}^F_{C_{\leq 0}}(E)$.

For $k>0$, observe that $C_k$ and $C_{\leq k-1}$ are disjoint, and that $E=\chi^F_{C_k}(E,E)$. Thus by Proposition \ref{factor1}, we have $\text{poly}^F_{C_{ k}}(E)\cdot\text{poly}^F_{C_{\leq k-1}}(E)=\text{poly}^F_{C_{\leq k}}(E)$. Thus, by induction, we get $\prod_{i=0}^{k}\text{poly}^F_{C_i}(E)=\text{poly}^F_{C_{\leq k}}(E)$.

In the case where $k=n-1$, this gives that $\prod_{C\in\text{Irr}^F(E)}\text{poly}^F_C(E)=\text{poly}^F_{B}(E)=Q^F_E$.
\end{proof}

\begin{proposition}
Let $F=(S,B)$ be a finite factored set, and let $E$ be a nonempty subset of $S$. Then $\text{poly}^F_C(E)$ is irreducible for all $C\in \text{Irr}^F(E)$.
\end{proposition}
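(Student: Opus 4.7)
The plan is to show that any factorization $\text{poly}^F_C(E)=p\cdot q$ in $\text{Poly}^F$ forces one of $p,q$ to be a scalar. By the preceding factorization $Q^F_E=\prod_{C^*\in\text{Irr}^F(E)}\text{poly}^F_{C^*}(E)$, the polynomial $\text{poly}^F_C(E)$ divides $Q^F_E$, and hence so do $p$ and $q$. Applying Proposition \ref{factor2} to each, I may write $p=r\cdot\text{poly}^F_{C'}(E)$ and $q=r'\cdot\text{poly}^F_{C''}(E)$ for some $r,r'\in\mathbb{R}$ and $C',C''\subseteq B$.

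The next step is a support/degree analysis to establish that $\{C',C''\}$ partitions $C$. The argument from Proposition \ref{factor2} adapts: by Corollary \ref{basisdisjoint}, every variable in $\text{poly}^F_C(E)$ has degree one, so $\text{supp}(p)\cap\text{supp}(q)=\{\}$ and there is no cancellation in $p\cdot q$. Each monomial of $\text{poly}^F_C(E)$ uses exactly one variable from each $b\in C$ and no others, whereas each product of a monomial of $p$ with a monomial of $q$ uses one variable from each $b\in C'$ and one from each $b\in C''$. Matching these against Corollary \ref{basisdisjoint} yields $C'\cup C''\subseteq C$, $C'\cap C''=\{\}$, and $|C'|+|C''|=|C|$, so $C=C'\sqcup C''$.

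With this partition in hand, I apply Proposition \ref{factor1} with $E_0=E_1=E$, $C_0=C'$, $C_1=C''$ to obtain
\begin{equation}
\text{poly}^F_{C'}(E)\cdot\text{poly}^F_{C''}(E)=\text{poly}^F_C(\chi^F_{C'}(E,E)),
\end{equation}
so $rr'\cdot\text{poly}^F_C(\chi^F_{C'}(E,E))=\text{poly}^F_C(E)$. Both sides are sums of distinct monomials with uniform coefficients ($rr'$ on the left, $1$ on the right), so comparison gives $rr'=1$ and $\text{monos}^F_C(\chi^F_{C'}(E,E))=\text{monos}^F_C(E)$.

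The hard part, I expect, is converting this monomial-set equality into the set-level equality $\chi^F_{C'}(E,E)=E$, which is what lets me invoke the minimality clause in the definition of $\text{Irr}^F(E)$. For arbitrary $u,v\in E$, set $w=\chi^F_{C'}(u,v)$; the monomial equality supplies some $s'\in E$ agreeing with $w$ on every factor in $C$. Using $\chi^F_C(E,E)=E$ (which holds because $C\in\text{Irr}^F(E)$), the element $\chi^F_C(s',v)\in E$ agrees with $w$ on every $b\in C$ (through $s'$) and on every $b\in B\setminus C$ (through $v$, since $B\setminus C\subseteq B\setminus C'$), hence equals $w$ by Proposition \ref{templabel1}. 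Thus $\chi^F_{C'}(E,E)\subseteq E$, and the reverse inclusion is automatic. The minimality clause in the definition of $\text{Irr}^F(E)$ then rules out $C'$ being a nonempty proper subset of $C$, forcing $C'\in\{\{\},C\}$, which makes one of $p,q$ a scalar. Hence $\text{poly}^F_C(E)$ is irreducible.
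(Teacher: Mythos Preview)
Your proof is correct and follows essentially the same route as the paper: apply Proposition~\ref{factor2} to any pair of factors (using that $\text{poly}^F_C(E)$ divides $Q^F_E$ by the preceding proposition), do a support/degree analysis to show their associated subsets partition $C$, invoke Proposition~\ref{factor1} to obtain $\text{monos}^F_C(\chi^F_{C'}(E,E))=\text{monos}^F_C(E)$, and then lift this to $\chi^F_{C'}(E,E)=E$ via $\chi^F_C(E,E)=E$, contradicting the minimality clause in $\text{Irr}^F(E)$. The only cosmetic difference is that you frame the conclusion as ``one factor is a scalar'' rather than the paper's contradiction from assuming both factors have nonempty support, and you make explicit the appeal to the preceding proposition that the paper leaves implicit.
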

\begin{proof}
Let $F=(S,B)$ be a finite factored set, let $E$ be a nonempty subset of $S$, and let $C\in \text{Irr}^F(E)$.

Assume for the purpose of contradiction that $p_0\cdot p_1=\text{poly}^F_C(E)$, and that both $p_0$ and $p_1$ have nonempty support.

By Proposition \ref{factor2}, we have that $p_i=r_i\cdot \text{poly}^F_{C_i}(E)$, for some $r_0,r_1\in\mathbb R$, and $C_0,C_1\subseteq B$. 

We will first need to show that $C_0$ and $C_1$ are nonempty and disjoint. They must be nonempty, because $p_0$ and $p_1$ have nonempty support. Assume for the purpose of contradiction that $b\in C_0\cap C_1$. Let $s$ be an element of $E$, and note that for $i\in\{0,1\}$, we have $[s]_b\in \text{supp }\text{poly}^F_{C_i}(E)$. Thus $[s]_b$ must be degree at least $2$ in $\text{poly}^F_C(E)$, which contradicts the fact that every variable clearly has degree at most $1$ in $\text{poly}^F_C(E)$. 

Next, we need to show that $C_0\cup C_1=C$. We already know that
\begin{equation} \begin{split} \text{supp}(\text{poly}^F_C(E)) & =\text{supp}(r_0r_1\text{poly}^F_{C_0}(E)\text{poly}^F_{C_1}(E)) \\ & = \text{supp}(\text{poly}^F_{C_0}(E))\cup \text{supp}(\text{poly}^F_{C_1}(E)). \end{split} \end{equation}
Let $s$ be an element of $E$. Given an arbitrary $b\in B$, we have that $b\in C$ if and only if $[s]_b\in \text{supp}(\text{poly}^F_C(E))$ if and only if $[s]_b \in\text{supp}(\text{poly}^F_{C_i}(E))$ for some $i\in \{0,1\}$ if and only if $b\in C_0\cup C_1$. 

We now have that $C_0$ and $C_1$ are disjoint and that $C=C_0\cup C_1$. Thus, by Proposition \ref{factor1}, we have that $\text{poly}^F_{C_0}(E)\cdot \text{poly}^F_{C_1}(E)=\text{poly}^F_{C}(\chi^F_{C_0}(E,E))$. Thus $\text{poly}^F_{C}(E)=r_0r_1\text{poly}^F_{C}(\chi^F_{C_0}(E,E))$, so $\text{monos}^F_{C}(E)=\text{monos}^F_{C}(\chi^F_{C_0}(E,E))$.

Let $s_0,s_1\in E$ be arbitrary, and let $s_2=\chi_{C_0}^F(s_0,s_1)$. Note that $\text{mono}_C^F(s_2)\in \text{monos}^F_{C}(\chi^F_{C_0}(E,E))=\text{monos}^F_{C}(E)$, so there is some $s_3\in E$ such that $\text{mono}_C^F(s_2)=\text{mono}_C^F(s_3)$. Thus $s_2\sim_b s_3$ for all $b\in C$. However, we also have that $s_2\sim_b s_1$ for all $b\in B\setminus C$, so $s_2=\chi_C^F(s_3,s_1)$. Since $C\in \text{Irr}^F(E), \chi_C^F(E,E)=E$, so $s_2=\chi_{C_0}^F(s_0,s_1)\in E$. Since $s_0$ and $s_1$ were arbitrary elements of $E$, we have that $\chi^F_{C_0}(E,E)=E$. Since $C_0$ is a nonempty strict subset of $C$, this contradicts the fact that $C\in \text{Irr}^F(E)$.

Thus, $\text{poly}^F_C(E)$ is irreducible for all $C\in \text{Irr}^F(E)$.
\end{proof}

\subsection{Characteristic Polynomials and Orthogonality}
We can now give an alternate characterization of conditional orthogonality in terms of divisibility of characteristic polynomials.
\begin{lemma2}\label{CPO}
Let $F=(S,B)$ be a finite factored set, and let $X,Y,Z\in \parts$ be partitions of $S$. The following are equivalent.
\begin{enumerate}
    \item $\co{X}{Y}{Z}$.
    \item $Q^F_{z}$ divides $Q^F_{x\cap z} \cdot Q^F_{y\cap z}$ for all $x\in X$, $y\in Y$, and $z\in Z$.
    \item $Q^F_{z} \cdot Q^F_{x\cap y\cap z}=Q^F_{x\cap z} \cdot Q^F_{y\cap z}$ for all $x\in X$, $y\in Y$, and $z\in Z$.
\end{enumerate}
\end{lemma2}
\begin{proof}
Clearly condition 3 implies condition 2. We will first show that condition 1 implies condition 3, and then show that condition 2 implies condition 1. 

Let $F=(S,B)$, and let $X,Y,Z\in \parts$ satisfy $\co{X}{Y}{Z}$. Consider an arbitrary $x\in X$, $y\in Y$, and $z\in Z$. We want to show that $Q^F_{z} \cdot Q^F_{x\cap y\cap z}=Q^F_{x\cap z} \cdot Q^F_{y\cap z}$. 

Let $C=h^F(X|z)$. Clearly $C\vdash^F X|z$. We thus have that  $\chi^F_C(z,z)=z$, so $\chi^F_{B\setminus C}(z,z)=z$. We also have that $h^F(Y|z)\subseteq B\setminus C$, so $Y|z\leq_z(\bigvee_S(B\setminus C))|z$.

These two together give that $B\setminus C\vdash^F Y|z$.

Since $C\vdash^F X|z$, we have that $\chi^F_C(x\cap z,z)=x\cap z$. Thus, by Proposition \ref{factor1}, we have that $\text{poly}^F_C(x\cap z)\cdot \text{poly}^F_{B\setminus C}(z)=Q^F_{x\cap z}$. Similarly, since $B\setminus C\vdash^F Y|z$, we have that $\text{poly}^F_C(z)\cdot \text{poly}^F_{B\setminus C}(y\cap z)=Q^F_{y\cap z}$.

Since $\chi^F_C(x\cap z,y\cap z)\subseteq \chi^F_C(x\cap z, z)=x\cap z$, and $\chi^F_C(x\cap z,y\cap z)\subseteq \chi^F_C(z,y\cap z)=y\cap z$, we have $\chi^F_C(x\cap z,y\cap z)\subseteq x\cap y\cap z$. We also have that
\begin{equation} \begin{split} \chi^F_C(x\cap z,y\cap z) & \supseteq \chi^F_C(x\cap y\cap z,x\cap y\cap z) \\ & \supseteq x\cap y\cap z. \end{split} \end{equation}
Thus $\chi^F_C(x\cap z,y\cap z)= x\cap y\cap z$.

By Proposition \ref{factor1}, this gives that $\text{poly}^F_C(x\cap z)\cdot \text{poly}^F_{B\setminus C}(y\cap z)=Q^F_{x\cap y\cap z}$.

Finally, since $\chi^F_C(z,z)=z$, we have that $\text{poly}^F_C(z)\cdot \text{poly}^F_{B\setminus C}(z)=Q^F_{z}$.

Thus, $Q^F_{z}\cdot Q^F_{x\cap y\cap z}$ and $Q^F_{x\cap z} \cdot Q^F_{y\cap z}$ are both equal to $\text{poly}^F_C(x\cap z)\cdot \text{poly}^F_{B\setminus C}(y\cap z)\cdot\text{poly}^F_C(z)\cdot \text{poly}^F_{B\setminus C}(z)$.

Thus, condition 1 implies condition 3. It remains to show that condition 2 implies condition 1. 

Fix $F=(S,B)$, and $X,Y,Z\in \parts$, and let $Q^F_z$ divide $Q^F_{x\cap z} \cdot Q^F_{y\cap z}$ for all $x\in X$, $y\in Y$, and $z\in Z$. Assume for the purpose of contradiction that it is not the case that $\co{X}{Y}{Z}$. Thus, there exists some $z\in Z$ such that $h^F(X|z)\cap h^F(Y|z)\neq \{\}$. Let $z\in Z$ and $b\in B$ satisfy $b\in h^F(X|z)\cap h^F(Y|z)$. 

Let $C\subseteq B$ be such that $b\in C$ and $C\in \text{Irr}^F(z)$, and let $p=\text{poly}^F_C(z)$. Thus, $p$ is an irreducible factor of $Q^F_z$.

Either $p$ divides $Q^F_{x\cap z}$ for all $x\in X$ or $p$ divides $Q^F_{y\cap z}$ for all $y\in I$, since otherwise there would exist an $x\in X$ and a $y\in Y$ such that $p$ divides neither $Q^F_{x\cap z}$ nor $Q^F_{x\cap z}$, but does divide their product, contradicting the fact that $p$ is irreducible, and thus prime.

Assume without loss of generality that $p$ divides $Q^F_{x\cap z}$ for all $x\in X$. Fix an $x\in X$. Let us first restrict attention to the case where $x\cap z$ is nonempty. 

Let $Q^F_{x\cap z}=p\cdot q$. By Proposition \ref{factor2},  $p=r_0\cdot \text{poly}^F_{C_0}(x\cap z)$ and $q=r_1\cdot \text{poly}^F_{C_1}(x\cap z)$ for some $r_0,r_1\in \mathbb R$ and $C_0,C_1\subseteq B$. We will show that $C_0=C$, $C_1=B\setminus C$, and $r_0=r_1=1$.

 Let $s$ be an element of $x\cap z$. Then for all $b\in B$, $b\in C$ if and only if $[s]_b\in \text{supp}(p)$ if and only if $[s]_b\in \text{supp}(\text{poly}^F_{C_0}(x\cap z))$ if and only if $b\in C_0$. Thus $C_0=C$. 

For all $b\in B\setminus C$, we have $[s]_b\in \text{supp}(Q^F_{x\cap z})$ and $[s]_b\notin \text{supp}(p)$, so $[s]_b\in \text{supp}(q)$, so $b\in C_1$. Similarly, for all $b\in C_1$, $[s]_b\in \text{supp}(q)$, so $[s]_b\notin \text{supp}(p)$, so $b\in B\setminus C$. Thus $C_1=B\setminus C$.

Since $p$ and $\text{poly}^F_{C_0}(x\cap z)$ both have all coefficients equal to $1$, we have $r_0=1$. Thus, $p=\text{poly}^F_{C}(x\cap z)$.

Similarly, since all the coefficients of $p$ are $1$ and all the coefficients of $Q^F_{x\cap z}$ are $1$, all the coefficients of $q$ are $1$, so $r_1=1$. Thus, $q=\text{poly}^F_{B\setminus C}(x\cap z)$.

We thus have that $Q^F_{x\cap z}= \text{poly}^F_{C}(z)\cdot \text{poly}^F_{B\setminus C}(x\cap z)$.

In the case where $x\cap z$ is empty, we also have $Q^F_{x\cap z}= \text{poly}^F_{C}(z)\cdot \text{poly}^F_{B\setminus C}(x\cap z)$, since both sides are 0.

By Proposition \ref{factor1}, $Q^F_{x\cap z}=\text{poly}^F_{B}(\chi^F_{C}(z,x\cap z))$. Thus, $\text{monos}^F_B(x\cap z)=\text{monos}^F_B(\chi^F_{C}(z,x\cap z))$, so $x\cap z=\chi^F_{C}(z,x\cap z)=\chi^F_{B\setminus C}(x\cap z,z)$.

Since $x\cap z=\chi^F_{B\setminus C}(x\cap z,z)$ for all $x\in X$, we have that $B\setminus C\vdash^F X|z$. However, this contradicts the fact that $b\notin B\setminus C$, and $b\in h^F(X|z)$.

Thus, condition 2 implies condition 1.
\end{proof}
\subsection{Probability Distributions on Finite Factored Sets}
The primary purpose of all this discussion of characteristic polynomials has been to build up to thinking about the relationship between orthogonality and probabilistic independence. We will now discuss probability distributions on finite factored sets.

Recall the definition of a probability distribution.
\begin{definition}
Given a finite set $S$, a probability distribution on $S$ is a function $P:\mathcal{P}(S)\rightarrow\mathbb{R}$ such that \begin{enumerate}
    \item $P(E)\geq 0$ for all $E\subseteq S$,
    \item $P(\{\})=0$,
    \item $P(S)=1$, and
    \item $P(E_0\cup E_1)=P(E_0)+P(E_1)$ whenever $E_0,E_1\subseteq S$ satisfy $E_0\cap E_1=\{\}$.
\end{enumerate}
\end{definition}
A probability distribution on a finite factored set $F$ is a probability distribution on its underlying set that also satisfies another condition, which represents the probability distribution coming from a product of distributions on the underlying factors.
\begin{definition}
Given a finite factored set $F=(S,B)$, a probability distribution on $F$ is a probability distribution $P$ on $S$ such that for all $s\in S$, we have $P(\{s\})=\prod_{b\in B}P([s]_b)$.
\end{definition}

\begin{proposition}
Given a finite factored set $F=(S,B)$, a probability distribution on $S$ is a probability distribution $P$ on $F$ if and only if $P(E)=Q^F_E(P)$ for all $E\subseteq S$.
\end{proposition}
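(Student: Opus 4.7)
The plan is to observe that both directions follow almost immediately once we unpack the two definitions, with finite additivity as the bridge between them. From the probability axioms, an easy induction on $|E|$ gives $P(E)=\sum_{s\in E}P(\{s\})$ for every $E\subseteq S$ (the base case $E=\{\}$ using axiom 2, and the inductive step using axiom 4 to split off one singleton at a time). On the other hand, unpacking the definitions of $Q^F_E$ and of evaluation at $P$ yields
\[
Q^F_E(P) \;=\; \sum_{s\in E}\prod_{b\in B} P([s]_b).
\]
So the statement $P(E)=Q^F_E(P)$ for all $E$ is equivalent to $\sum_{s\in E}P(\{s\}) = \sum_{s\in E}\prod_{b\in B}P([s]_b)$ for all $E$.

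For the forward direction, I would assume $P$ is a probability distribution on $F$, so that $P(\{s\})=\prod_{b\in B}P([s]_b)$ for each $s\in S$. Summing this identity over $s\in E$ and applying the finite additivity reduction above gives $P(E)=\sum_{s\in E}P(\{s\})=\sum_{s\in E}\prod_{b\in B}P([s]_b)=Q^F_E(P)$, as required.

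For the backward direction, I would assume $P(E)=Q^F_E(P)$ for every $E\subseteq S$ and specialize to the singleton $E=\{s\}$. Then $Q^F_{\{s\}}=\prod_{b\in B}[s]_b$ by definition, so the identity reduces to $P(\{s\})=\prod_{b\in B}P([s]_b)$, which is exactly the extra condition making $P$ a probability distribution on $F$.

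There is no real obstacle here; the only thing requiring a bit of care is the finite-additivity lemma $P(E)=\sum_{s\in E}P(\{s\})$, which needs the convention that the empty sum is $0$ so that the $E=\{\}$ case is consistent with $Q^F_{\{\}}(P)=0$ and with axiom 2. Everything else is a direct substitution into the definition of $Q^F_E$.
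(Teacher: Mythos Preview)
Your proof is correct and matches the paper's own argument essentially line for line: specialize to singletons for one direction, and sum the singleton identity over $E$ (using finite additivity) for the other. The only difference is that you make the finite-additivity step $P(E)=\sum_{s\in E}P(\{s\})$ explicit, whereas the paper leaves it implicit.
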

\begin{proof}
If $P(E)=Q^F_E(P)$ for all $E\subseteq S$, in particular this means that $P(\{s\})=Q^F_{\{s\}}(P)=(\prod_{b\in B}[s]_b)(P)=\prod_{b\in B}P([s]_b)$ for all $s\in S$.

Conversely, if $P(\{s\})=\prod_{b\in B}P([s]_b)$ for all $s\in S$, then for all $E\subseteq S$,  $P(E)=\sum_{s\in E} \prod_{b\in B}P([s]_b)=(\sum_{s\in E} \prod_{b\in B}[s]_b)(P)=Q^F_E(P)$.
\end{proof}
\subsection{The Fundamental Theorem of Finite Factored Sets}
We are now ready to state and prove the fundamental theorem of finite factored sets.
\begin{theorem}\label{ft}
Let $F=(S,B)$ be a finite factored set, and let $X,Y,Z\in \parts$ be partitions of $S$. Then $\co{X}{Y}{Z}$ if and only if for all probability distributions $P$ on $F$ and all $x\in X$, $y\in Y$, and $z\in Z$, we have $P(x\cap z)\cdot P(y\cap z)= P(x\cap y\cap z)\cdot P(z)$.
\end{theorem}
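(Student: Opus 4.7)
The plan is to bridge the probabilistic condition with Lemma \ref{CPO}, which recasts $\co{X}{Y}{Z}$ as the polynomial identity $Q^F_z \cdot Q^F_{x\cap y\cap z} = Q^F_{x\cap z}\cdot Q^F_{y\cap z}$ for all $x \in X$, $y \in Y$, $z \in Z$. Since the preceding proposition gives $P(E) = Q^F_E(P)$ for every probability distribution $P$ on $F$, any such polynomial identity immediately yields the corresponding probabilistic identity; so the real work is to run this implication in reverse, passing from vanishing at every probability distribution to vanishing as a polynomial.

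The forward direction is therefore immediate. Assuming $\co{X}{Y}{Z}$, Lemma \ref{CPO} yields polynomial equality of $Q^F_z \cdot Q^F_{x\cap y\cap z}$ and $Q^F_{x\cap z}\cdot Q^F_{y\cap z}$; evaluating at any probability distribution $P$ on $F$ then gives $P(x\cap z) \cdot P(y\cap z) = P(x\cap y\cap z) \cdot P(z)$ for every $x, y, z$.

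For the reverse direction, fix $x, y, z$ and form the polynomial $R = Q^F_z \cdot Q^F_{x\cap y\cap z} - Q^F_{x\cap z}\cdot Q^F_{y\cap z}$; the hypothesis gives $R(P) = 0$ for every probability distribution $P$ on $F$. The crucial structural observation is that each $Q^F_E$ is multi-homogeneous relative to the factor decomposition of its variables: every monomial $\prod_{b \in B} [s]_b$ in $Q^F_E$ uses exactly one variable from each factor $b$, making $Q^F_E$ homogeneous of degree $1$ in the variables coming from each factor separately, and so $R$ is homogeneous of degree $2$ in each factor-block of variables. A probability distribution on $F$ is exactly an assignment of non-negative values to the factor-part variables $\{T : T \in b,\, b \in B\}$ whose sum within each factor equals $1$. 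Now given any strictly positive assignment $v_{b,T} > 0$, set $\lambda_b = \sum_{T \in b} v_{b,T} > 0$ and $\tilde v_{b, T} = v_{b,T}/\lambda_b$; then $(\tilde v_{b,T})$ is a genuine probability distribution, and by multi-homogeneity $R(v) = \bigl(\prod_{b \in B} \lambda_b^2\bigr) \cdot R(\tilde v) = 0$. Hence $R$ vanishes on a nonempty open subset of real Euclidean space and is therefore the zero polynomial. Running this argument for every choice of $x, y, z$ establishes condition 3 of Lemma \ref{CPO}, which delivers $\co{X}{Y}{Z}$.

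The main obstacle lies entirely in the reverse direction: probability distributions on $F$ form a proper, lower-dimensional subset of the space of real assignments to the characteristic-polynomial variables, so pointwise vanishing on that subset does not a priori force polynomial equality. The multi-homogeneous structure of characteristic polynomials, combined with factorwise rescaling, is exactly what promotes the evaluation-level identity to the polynomial-level identity that Lemma \ref{CPO} demands.
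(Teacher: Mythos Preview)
Your argument is correct and follows essentially the same route as the paper. Both directions invoke Lemma~\ref{CPO}; for the reverse direction, both you and the paper take an arbitrary strictly positive assignment to the relevant variables, rescale it to a genuine probability distribution on $F$, and use the resulting homogeneity factor (your $\prod_b \lambda_b^2$ is exactly the paper's $Q^F_S(f)^2$, since $Q^F_S = \prod_{b\in B}\sum_{T\in b} T$) to conclude that $R$ vanishes on an open set and is therefore identically zero. The only place where the paper is more explicit is in verifying that a normalized factor-part assignment really does extend to a probability distribution on $F$; you assert this, while the paper checks the axioms for $P_f$ directly.
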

\begin{proof}
We already have by Lemma \ref{CPO} that if $\co{X}{Y}{Z}$, then for all $x\in X$, $y\in Y$, and $z\in Z$, $Q^F_{z} \cdot Q^F_{x\cap y\cap z}=Q^F_{x\cap z} \cdot Q^F_{y\cap z}$. Thus for any probability distribution $P$ on $F$, we have 
\begin{equation}
\begin{split}
P(z)\cdot P(x\cap y\cap z)
& = Q^F_{z}(P) \cdot Q^F_{x\cap y\cap z}(P)\\
 & = Q^F_{x\cap z}(P) \cdot Q^F_{y\cap z}(P)\\
  & = P(x\cap z)\cdot P(y\cap z).
\end{split}
\end{equation}
Conversely, assume that for all probability distributions $P$ on $F$, and all $x\in X$, $y\in Y$, and $z\in Z$, we have $P(x\cap z)\cdot P(y\cap z)= P(x\cap y\cap z)\cdot P(z)$. 

If $S$ is empty, then $\{\}$ is the unique partition of $S$, and we have $\co{\{\}}{\{\}}{\{\}}$. Thus, we can restrict our attention to the case where $S$ is nonempty. 

Fix an arbitrary $x\in X$, $y\in Y$, and $z\in Z$. Let $q=Q^F_{x\cap z}\cdot Q^F_{y\cap z}-Q^F_{x\cap y\cap z}\cdot Q^F_{z}$. We will first show that $q(f)=0$ for all $f:\mathcal{P}(S)\rightarrow \mathbb{R}^{>0}$.

Given an arbitrary $f:\mathcal{P}(S)\rightarrow \mathbb{R}^{>0}$, we can define $P_f:\mathcal{P}(S)\rightarrow \mathbb{R}$ by $P_f(E)=Q^F_E(f)/Q^F_S(f)$, and we will show that $P_f$ is a distribution on $F$.

$P_f$ is well-defined because $Q^F_S(f)$ is a nonempty sum of products of positive real numbers, and thus positive. Further, since $Q^F_E(f)$ is a sum of products of positive real numbers, $P_f(E)\geq 0$ for all $E\subseteq S$. Since $Q^F_{\{\}}=0$, we also have $P_f(\{\})=0$. Clearly $P_f(S)=1$. Finally, for all $E_0,E_1\subseteq S$ with $E_0\cap E_1=\{\}$, we have
\begin{equation}
\begin{split}
P_f(E_0\cup E_1)
& = Q^F_{E_0\cup E_1}(f)/Q^F_{S}(f)\\
 & = (Q^F_{E_0}(f)+Q^F_{E_1}(f))/Q^F_{S}(f)\\
  & = P_f(E_0)+P_f(E_1).
\end{split}
\end{equation}
Therefore $P_f$ is a distribution on $S$. We still need to show that $P_f$ is a distribution on $F$.

Observe that for all $s\in S$ and $b\in B$, since $\chi^F_{\{b\}}([s]_b,S)=[s]_b$, we have that $Q^F_{[s]_b}(f)=\text{poly}_{\{b\}}^F([s]_b)\cdot\text{poly}_{B\setminus\{b\}}^F(S)$, and since $\chi^F_{\{b\}}(S,S)=S$, we have that $Q^F_{S}(f)=\text{poly}_{\{b\}}^F(S)\cdot\text{poly}_{B\setminus\{b\}}^F(S)$. Thus, we have that
\begin{equation}
\begin{split}
P_f([s]_b)
& = \text{poly}_{\{b\}}^F([s]_b)(f)/\text{poly}_{\{b\}}^F(S)(f)\\
 & = f([s]_b)/\text{poly}_{\{b\}}^F(S)(f).
\end{split}
\end{equation}
Thus, for all $s\in S$,
\begin{equation}
\begin{split}
\prod_{b\in B}P_f([s]_b)
& = (\prod_{b\in B}f([s]_b))/(\prod_{b\in B}\text{poly}_{\{b\}}^F(S)(f))\\[4pt]
 & = Q^F_{\{s\}}(f)/Q^F_S(f)\\[4pt]
 & = P_f(\{s\}).
\end{split}
\end{equation}
Thus $P_f$ is a distribution on $F$.

It follows that $P_f(x\cap z)\cdot P_f(y\cap z)= P_f(x\cap y\cap z)\cdot P_f(z)$. We therefore have that
\begin{equation}
\begin{split}
q(f) & = Q^F_{x\cap z}(f)\cdot Q^F_{y\cap z}(f)-Q^F_{x\cap y\cap z}(f)\cdot Q^F_{z}(f)\\
 & = (P_f(x\cap z)\cdot P_f(y\cap z)-P_f(x\cap y\cap z)\cdot P_f(z))\cdot Q^F_{S}(f)^2\\
  & = 0\cdot Q^F_{S}(f)^2\\
    & = 0.
\end{split}
\end{equation}
Thus, $q$ is a polynomial that is zero on an open subset of inputs, so $q$ is the zero polynomial. Thus $Q^F_{x\cap z} \cdot Q^F_{y\cap z}-Q^F_{z} \cdot Q^F_{x\cap y\cap z}=0$, so $Q^F_{z} \cdot Q^F_{x\cap y\cap z}=Q^F_{x\cap z} \cdot Q^F_{y\cap z}$. Since $x\in X$, $y\in Y$, and $z\in Z$ were arbitrary, by Lemma \ref{CPO}, we have $\co{X}{Y}{Z}$.
\end{proof}
\section{Inferring Time}\label{inftime}
The fundamental theorem tells us that (conditional) orthogonality data can be inferred from probabilistic data. Thus, if we can infer temporal data from orthogonality data, we will be able to combine these to infer temporal data purely from probabilistic data.

In this section, we will discuss the problem of inferring temporal data from orthogonality data, mostly by going through a couple of examples.
\subsection{Factored Set Models}

We'll begin with a sample space, $\Omega$.

Naively, one might except that temporal inference in this paradigm involves inferring a factorization of $\Omega$. What we'll actually be doing, however, is inferring a factored set \emph{model} of $\Omega$. This will allow for the possibility that some situations are distinct without being distinct in $\Omega$---that there can be latent structure not represented in $\Omega$.

\begin{definition}[model]
Given a set $\Omega$, a model of $\Omega$ is a pair $M=(F,f)$, where $F$ is a finite factored set and $f:\text{set}(F)\rightarrow \Omega$ is a function from the set of $F$ to $\Omega$.
\end{definition}
\begin{definition}
Let $S$ and $\Omega$ be sets, and let $f:S\rightarrow \Omega$ be a function from $S$ to $\Omega$. 

Given a $\omega\in \Omega$, we let $f^{-1}(\omega)=\{s\in S\mid f(s)=\omega\}$. 

Given an $E\subseteq \Omega$, we let $f^{-1}(E)=\{s\in S\mid f(s)\in E\}$. 

Given an $X\in \parts[\Omega]$, we let $f^{-1}(X)\in\parts$ be given by $f^{-1}(X)=\{f^{-1}(x)|x\in X, f^{-1}(x)\neq \{\}\}$.
\end{definition}

\begin{definition}[orthogonality database]
Given a set $\Omega$, an orthogonality database on $\Omega$ is a pair $D=(O,N)$, where $O$ and $N$ are both subsets of $\parts[\Omega]\times \parts[\Omega]\times\parts[\Omega]$.
\end{definition}

\begin{definition}
Given an orthogonality database $D=(O,N)$ on a set $\Omega$, and partitions $X,Y,Z\in \parts[\Omega]$, we write $\cod{X}{Y}{Z}$ if $(X,Y,Z)\in O$, and we write $\ncod{X}{Y}{Z}$ if $(X,Y,Z)\in N$.
\end{definition}

\begin{definition}
Given a set $\Omega$, a model $M=(F,f)$ of $\Omega$, and an orthogonality database $D=(O,N)$ on $\Omega$, we say $M$ models $D$ if for all $X,Y,Z\in\parts[\Omega]$,
\begin{enumerate}
    \item if $\cod{X}{Y}{Z}$ then $\co{f^{-1}(X)}{f^{-1}(Y)}{f^{-1}(Z)}$, and
    \item if $\ncod{X}{Y}{Z}$ then $\neg(\co{f^{-1}(X)}{f^{-1}(Y)}{f^{-1}(Z)})$.
\end{enumerate}
\end{definition}

\begin{definition}
An orthogonality database $D$ on a set $\Omega$ is called consistent if there exists a model $M$ of $\Omega$ such that $M$ models $D$. 
\end{definition}

\begin{definition}
An orthogonality database $D$ on a set $\Omega$ is called complete if for all $X,Y,Z\in\parts[\Omega]$, either $\cod{X}{Y}{Z}$ or $\ncod{X}{Y}{Z}$.
\end{definition}

\begin{definition}
Given a set $\Omega$, an orthogonality database $D$ on $\Omega$, and $X,Y\in\parts[\Omega]$, we say $X<_DY$ if for all models $(F,f)$ of $\Omega$ that model $D$, we have $f^{-1}(X)<^Ff^{-1}(Y)$. 
\end{definition}

\subsection{Examples}
\begin{example}\label{ex1}
Let $\Omega=\{00,01,10,11\}$ be the set of all bit strings of length $2$. For $i\in\{0,1\}$, let $x_i=\{i0,i1\}$ be the event that the first bit is $i$, and let $y_i=\{0i,1i\}$ be the event that the second bit is $i$. Let $X=\{x_0,x_1\}$ and let $Y=\{y_0,y_1\}$. 

Let $v_0=\{00,11\}$ be the event that the two bits are equal, let $v_1=\{01,10\}$ be the event that the two bits are unequal, and let $V=\{v_0,v_1\}$. 

Let $D=(O,N)$, where $O=\{(X,V,\{\Omega\})\}$ and $N=\{(V,V,\{\Omega\})\}$.
\end{example}

\begin{proposition}
In Example \ref{ex1}, $D$ is consistent.
\end{proposition}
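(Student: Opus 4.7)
The plan is to exhibit an explicit model and verify directly that it witnesses consistency. The natural candidate is to take $F$ to be $\Omega$ itself factored by $X$ and $V$, with $f$ the identity. Specifically, I would set $F = (\Omega, \{X, V\})$ and $f = \mathrm{id}_\Omega$.

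The first step is to check that $\{X, V\}$ really is a factorization of $\Omega$. Both $X$ and $V$ are nontrivial partitions with two parts each, so by Theorem \ref{templabel2} it suffices to check that for each of the four pairs $(x_i, v_j)$, the intersection $x_i \cap v_j$ contains exactly one element of $\Omega$. This is immediate from the definitions: $x_0 \cap v_0 = \{00\}$, $x_0 \cap v_1 = \{01\}$, $x_1 \cap v_0 = \{11\}$, and $x_1 \cap v_1 = \{10\}$. Alternatively, $|\Omega| = 4 = |X| \cdot |V|$ together with the fact that each pairwise intersection is nonempty forces $\pi$ to be a bijection.

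Next, since $f$ is the identity, we have $f^{-1}(X) = X$, $f^{-1}(V) = V$, and $f^{-1}(\{\Omega\}) = \{\Omega\}$ as partitions of $\Omega$. By part 4 of Proposition \ref{prophist} (noting $\Omega$ is nonempty), the histories of the factors are $h^F(X) = \{X\}$ and $h^F(V) = \{V\}$. It then remains to verify the two database assertions. For $\cod{X}{V}{\{\Omega\}}$, we need $\co[F]{X}{V}{\{\Omega\}}$, which unpacks to $\ortho[F]{X|\Omega}{V|\Omega}$, i.e.\ $\ortho[F]{X}{V}$; this holds since $h^F(X) \cap h^F(V) = \{X\} \cap \{V\} = \{\}$. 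For $\ncod{V}{V}{\{\Omega\}}$, we need to rule out $\co[F]{V}{V}{\{\Omega\}}$, which would require $h^F(V) \cap h^F(V) = \{\}$; but $h^F(V) = \{V\} \neq \{\}$, so the conditional orthogonality fails as required.

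There is no real obstacle here; the only thing to be careful about is that the factorization check uses Theorem \ref{templabel2} correctly and that the sign of each database entry matches the semantics of the $O$ and $N$ sets in the definition of modeling. Once the factorization is in hand, everything reduces to computing histories of factors, which Proposition \ref{prophist} hands us directly.
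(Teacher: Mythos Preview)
Your proposal is correct and takes essentially the same approach as the paper: construct the model $(F,f)$ with $F=(\Omega,\{X,V\})$ and $f=\mathrm{id}_\Omega$, compute the histories of the factors, and verify the two database entries. You are slightly more explicit than the paper in checking that $\{X,V\}$ is a factorization, but otherwise the arguments coincide.
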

\begin{proof}
First observe that $F=(\Omega,\{X,V\})$ is a factored set, and so $M=(F,f)$ is a model of $\Omega$, where $f$ is the identity on $\Omega$. It suffices to show that $M$ models $D$.

Indeed $h^F(X)=\{X\}$, and $h^F(V)=\{V\}$, so $\ortho{X}{V}$, so $\co{f^{-1}(X)}{f^{-1}(V)}{f^{-1}(\{\Omega\})}$. 

Further, it is not the case that $\ortho{V}{V}$, since $V\neq \text{Ind}_\Omega$. Thus it is not the case that $\co{f^{-1}(V)}{f^{-1}(V)}{f^{-1}(\{\Omega\})}$.

Thus $M$ satisfies all of the conditions to model $D$, so $D$ is consistent.
\end{proof}
\begin{proposition}\label{exth1}
In Example \ref{ex1}, $X<_D Y$.
\end{proposition}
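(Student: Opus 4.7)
The plan is to fix an arbitrary model $M=(F,f)$ of $D$ with $F=(S,B)$, and, writing $\hat X=f^{-1}(X)$, $\hat Y=f^{-1}(Y)$, and $\hat V=f^{-1}(V)$, to show $h^F(\hat X)\subsetneq h^F(\hat Y)$ in two steps. The two data points I will use from the database are: $\cod{X}{V}{\{\Omega\}}$, which pulls back under $M$ to $\ortho{\hat X}{\hat V}$, i.e.\ $h^F(\hat X)\cap h^F(\hat V)=\{\}$; and $\ncod{V}{V}{\{\Omega\}}$, which pulls back to $\hat V\neq\text{Ind}_S$, i.e.\ $h^F(\hat V)\neq\{\}$ by part 3 of Proposition \ref{prophist}.

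For the inclusion $h^F(\hat X)\subseteq h^F(\hat Y)$, I will use the concrete fact in $\Omega$ that the second bit together with the parity determines the first bit, i.e.\ $X\leq_\Omega Y\vee_\Omega V$. Pulling back along $f$ preserves refinement and commutes with common refinement (an immediate check on the equivalence-relation definitions of $\sim$ and $f^{-1}$), so $\hat X\leq_S \hat Y\vee_S\hat V$. Applying parts 1 and 2 of Proposition \ref{prophist} then gives
\[
h^F(\hat X)\subseteq h^F(\hat Y\vee_S\hat V)=h^F(\hat Y)\cup h^F(\hat V),
\]
and intersecting with $h^F(\hat X)\cap h^F(\hat V)=\{\}$ yields $h^F(\hat X)\subseteq h^F(\hat Y)$.

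For strictness, I will run the symmetric argument: both bits together determine their parity, so $V\leq_\Omega X\vee_\Omega Y$, and pulling back gives $h^F(\hat V)\subseteq h^F(\hat X)\cup h^F(\hat Y)$. If the reverse inclusion $h^F(\hat Y)\subseteq h^F(\hat X)$ held, the right side would collapse to $h^F(\hat X)$, forcing $h^F(\hat V)\subseteq h^F(\hat X)$; combined with $h^F(\hat X)\cap h^F(\hat V)=\{\}$, this would force $h^F(\hat V)=\{\}$, contradicting the entanglement data.

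The main obstacle is really just the bookkeeping needed to transport partition-level statements about $\Omega$ to corresponding statements about partitions of $S$ and thence to subsets of $B$: namely, checking that $f^{-1}$ preserves the refinement order and commutes with common refinement, so that the two elementary facts $X\leq_\Omega Y\vee_\Omega V$ and $V\leq_\Omega X\vee_\Omega Y$ transport cleanly. Once that is in hand, the remainder is a purely mechanical manipulation of histories via Proposition \ref{prophist}.
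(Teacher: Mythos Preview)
Your proof is correct and follows essentially the same approach as the paper: both use $X\leq_\Omega Y\vee_\Omega V$ together with $h^F(\hat X)\cap h^F(\hat V)=\{\}$ to get the inclusion $h^F(\hat X)\subseteq h^F(\hat Y)$, and then the symmetric fact $V\leq_\Omega X\vee_\Omega Y$ together with $h^F(\hat V)\neq\{\}$ for strictness. The only cosmetic difference is that the paper phrases strictness by exhibiting an explicit $b\in h^F(\hat V)\subseteq h^F(\hat Y)$ with $b\notin h^F(\hat X)$, whereas you phrase it as a contradiction from $h^F(\hat Y)\subseteq h^F(\hat X)$; these are equivalent.
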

\begin{proof}
Let $(F,f)$ be any model of $\Omega$ that models $D$. Let $F=(S,B)$. For any $A\in\parts[\Omega]$, let $H_A=h^F(f^{-1}(A))$. Our goal is to show that $H_X$ is a strict subset of $H_Y$.

First observe that $X\leq_\Omega Y\vee_\Omega V$, so for any $s,t\in S$, if $s\sim_{f^{-1}(Y)}t$ and $s\sim_{f^{-1}(V)}t$, then $f(s)\sim_Y f(t)$ and $f(s)\sim_V f(t)$, so $f(s)\sim_X f(t)$, so $s\sim_{f^{-1}(X)}t$. Thus $f^{-1}(X)\leq_S f^{-1}(Y) \vee_S f^{-1}(V)$.

It follows that $H_X\subseteq h^F(f^{-1}(Y) \vee_S f^{-1}(V))= H_Y\cap H_V$. However, since $\cod{X}{V}{\{\Omega\}}$, we have that $H_X\cap H_V=\{\}$, so 
$H_X\subseteq H_Y$.

By swapping $X$ and $V$ in the argument above, we also get that $H_V\subseteq H_Y$. Since $\ncod{V}{V}{\{\Omega\}}$, we have that $H_V\neq \{\}$. Thus $H_V$ contains some element $b$. Observe that $b\notin H_X$, but $b\in H_Y$. Thus $H_X$ is a strict subset of $H_Y$, so $f^{-1}(X)<^F f^{-1}(Y)$.

Since $(F,f)$ was an arbitrary model of $\Omega$ that models $D$, this implies that $X<_D Y$.
\end{proof}
\begin{example}\label{ex2}
Let $\Omega=\{000,001,010,011,100,101,110,111\}$ be the set of all bit strings of length $3$. For $i\in\{0,1\}$, let $x_i=\{i00,i01,i10,i11\}$ be the event that the first bit is $i$, let $y_i=\{0i0,0i1,1i0,1i1\}$ be the event that the second bit is $i$, and let $z_i=\{00i,01i,10i,11i\}$ be the event that the third bit is $i$. Let $X=\{x_0,x_1\}$, let $Y=\{y_0,y_1\}$, and let $Z=\{z_0,z_1\}$. 

Let $v_0=\{000,001,110,111\}$ be the event that the first two bits are equal, let $v_1=\{010,011,100,101\}$ be the event that the first two bits are unequal, and let $V=\{v_0,v_1\}$. 

Let $D=(O,N)$, where $O=\{(X,V,\{\Omega\}),(X,Z,Y),(V,Z,Y)\}$ and $N=\{(X,Z,\{\Omega\}),(V,Z,\{\Omega\}),(Z,Z,Y)\}$.
\end{example}
\begin{proposition}
In Example \ref{ex2}, $D$ is consistent.
\end{proposition}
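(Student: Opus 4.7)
The plan is to exhibit an explicit finite factored set model $(F,f)$ of $\Omega$ and verify each of the six entries of $D$ by computing histories. Take $S=\{0,1\}^3$, let $b_i\in\parts$ be the partition of $S$ by the $i$th coordinate, and set $F=(S,\{b_1,b_2,b_3\})$; this is manifestly a factorization (any coordinate triple picks out a unique element). Define $f\colon S\to\Omega$ by $f(a_1a_2a_3)=(a_1,\,a_1\oplus a_2,\,a_1\oplus a_2\oplus a_3)$. Under this bijection, $f^{-1}(X)=b_1$ and $f^{-1}(V)=b_2$ (since the XOR of the first two bits of $f(a_1a_2a_3)$ is $a_2$), while $f^{-1}(Y)$ and $f^{-1}(Z)$ are the partitions by $a_1\oplus a_2$ and $a_1\oplus a_2\oplus a_3$ respectively. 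A direct calculation using Proposition \ref{prophist} gives unconditioned histories $\{b_1\}$, $\{b_2\}$, $\{b_1,b_2\}$, $\{b_1,b_2,b_3\}$ for $f^{-1}$ of $X$, $V$, $Y$, $Z$. The three $\{\Omega\}$-indexed entries then fall out: $\{b_1\}\cap\{b_2\}=\{\}$ handles $\cod{X}{V}{\{\Omega\}}$, and $\{b_1\}\cap\{b_1,b_2,b_3\}$ and $\{b_2\}\cap\{b_1,b_2,b_3\}$ being nonempty handle $\ncod{X}{Z}{\{\Omega\}}$ and $\ncod{V}{Z}{\{\Omega\}}$.

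The substantive step is verifying the three $Y$-indexed entries. For each $y\in Y$ the preimage $E:=f^{-1}(y)$ has the form $\{a_1a_2a_3\mid a_1\oplus a_2=\epsilon\}$, a four-element subset of $S$. I will use condition 7 of Proposition \ref{equivsgen}, which requires both the refinement inequality and $\chi^F_C(E,E)=E$. The crucial observation is that $\chi^F_{\{b_1\}}$ replaces $a_1$ without touching $a_2$ and so breaks the defining constraint of $E$; symmetrically for $\{b_2\}$. Hence neither singleton generates $f^{-1}(X)|E$ or $f^{-1}(V)|E$, whereas $\{b_1,b_2\}$ does, giving $h^F(f^{-1}(X)|E)=h^F(f^{-1}(V)|E)=\{b_1,b_2\}$. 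On the other hand $\{b_3\}$ preserves $E$ and on $E$ determines $a_1\oplus a_2\oplus a_3$, so $h^F(f^{-1}(Z)|E)=\{b_3\}$. The disjointness $\{b_1,b_2\}\cap\{b_3\}=\{\}$ yields $\co{f^{-1}(X)}{f^{-1}(Z)}{f^{-1}(y)}$ and $\co{f^{-1}(V)}{f^{-1}(Z)}{f^{-1}(y)}$ for every $y\in Y$, so the two positive conditional entries of $D$ are modeled. Meanwhile $h^F(f^{-1}(Z)|E)=\{b_3\}\neq\{\}$ witnesses $\neg\,(\ortho{f^{-1}(Z)|E}{f^{-1}(Z)|E})$, yielding $\ncod{Z}{Z}{Y}$.

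The main obstacle is the subpartition-history calculation, and specifically recognizing that the chimera-closure half of condition 7 of Proposition \ref{equivsgen} locks $b_1$ and $b_2$ together once we condition on $Y$, even though they are independent globally. This locking is exactly what allows $X$ and $V$ to be individually entangled with $Z$ while remaining conditionally orthogonal to $Z$ given $Y$; once that point is clear, the remaining verifications are routine, and the model above realizes $D$.
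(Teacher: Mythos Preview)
Your proof is correct, and it takes a genuinely different route from the paper's. The paper builds a $12$-element model: it takes $S=\Omega\cup\{00,01,10,11\}$, factors it as $\{X',V',Z'\}$ where $Z'$ has \emph{three} parts (the two values of the third bit plus a ``no third bit'' part), and uses a non-injective $f$ that copies the second bit into the third on the length-$2$ strings. You instead use the standard binary-cube factorization of $\{0,1\}^3$ with a \emph{bijective} $f$ given by iterated XOR, which amounts to exhibiting a factorization of $\Omega$ itself with factors $X$, $V$, and the partition ``second bit equals third bit.'' Your model is smaller (eight elements versus twelve, all factors binary) and tidier; the paper's model is arguably more transparently motivated, since each factor corresponds to a named variable and the extra part of $Z'$ encodes ``the $Z$-bit has not been written yet.'' The substantive work in both proofs is the same: verifying that conditioning on $Y$ forces the first two factors to be taken together (the $\chi^F_C(E,E)=E$ half of condition~7 of Proposition~\ref{equivsgen}), which is exactly the point you highlighted.
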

\begin{proof}
Let $S=\Omega\cup\{00,01,10,11\}$ be the set of all bit strings of length either 2 or 3. 

For $i\in\{0,1\}$, let $x^\prime_i=\{i00,i01,i10,i11,i0,i1\}$ be the event that the first bit is $i$, and let $X^\prime=\{x_0^\prime,x_1^\prime\}$.

For $i\in\{0,1\}$, let $y^\prime_i=\{0i0,0i1,1i0,1i1,0i,1i\}$ be the event that the second bit is $i$, and let $Y^\prime=\{y_0^\prime,y_1^\prime\}$.

Let $v^\prime_0=\{000,001,110,111,00,11\}$ be the event that the first two bits are equal, let $v^\prime_1=\{010,011,100,101,01,10\}$ be the event that the first two bits are unequal, and let $V^\prime=\{v_0^\prime,v_1^\prime\}$.

For $i\in\{0,1\}$, let $z^\prime_i=\{00i,01i,10i,11i\}$ be the event that the third bit exists and is $i$, let $z^\prime_2=\{00,01,10,11\}$ be the event that there are only two bits, and let $Z^\prime=\{z_0^\prime,z_1^\prime,z^\prime_2\}$.

Let $B=\{X^\prime,V^\prime,Z^\prime\}$. Clearly, $(S,B)$ is a finite factored set.

Let $f:S\rightarrow \Omega$ be given by $f(s)=s$ if $s\in \Omega$, $f(00)=000$, $f(01)=011$, $f(10)=100$, and $f(11)=111$, so $f$ copies the last bit on inputs of length 2, and otherwise leaves the bit string alone. We will show that $(F,f)$ models $D$.

First, observe that $f^{-1}(X)=X^\prime$, $f^{-1}(Y)=Y^\prime$, $f^{-1}(V)=V^\prime$, and $f^{-1}(Z)=\{\{000,010,100,110,00,10\},\{001,011,101,111,01,11\}\}$. 

It is easy to verify that $h^F(X^\prime)=\{X^\prime\}$, $h^F(V^\prime)=\{V^\prime\}$, $h^F(Y^\prime)=\{X^\prime,V^\prime\}$, and $h^F(f^{-1}(Z))=B$. From this, we get that $\ortho{X^\prime}{V^\prime}$ holds, but  $\ortho{X^\prime}{f^{-1}(Z)}$ and $\ortho{V^\prime}{f^{-1}(Z)}$ do not hold.

Next, observe that for $i\in\{0,1\}$, $X^\prime|y_i=V^\prime|y_i=\{\{0i0,0i1,0i\},\{1i0,1i1,1i\}\}$. It is easy to verify that $h^F(X^\prime|y_i)=h^F(V^\prime|y_i)=\{X^\prime,V^\prime\}$. 

Also, observe that $f^{-1}(Z)|y_0=\{\{000,100,00,10\},\{001,101\}\}$, and observe that $f^{-1}(Z)|y_1=\{\{010,110\},\{011,111,01,11\}\}$. It is easy to verify that $h^F(f^{-1}(Z)|y_0)=h^F(f^{-1}(Z)|y_1)=\{Z^\prime\}$.

From this, we get that $\co{X^\prime}{f^{-1}(Z)}{Y^\prime}$ and $\co{V^\prime}{f^{-1}(Z)}{Y^\prime}$ hold, and $\co{f^{-1}(Z)}{f^{-1}(Z)}{Y^\prime}$ does not hold. 

Thus, $(F,f)$ models $D$, so $D$ is consistent.
\end{proof}
\begin{proposition}\label{exth2}
In Example \ref{ex2}, $X<_D Y<_D Z$.
\end{proposition}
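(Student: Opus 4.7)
Fix a model $(F,f)$ of $D$ and write $H_A := h^F(f^{-1}(A))$ for partitions $A$ of $\Omega$. The inequality $X<_D Y$ follows by transcribing the proof of Proposition~\ref{exth1}: the partitions $X$, $V$, $Y$ of $\Omega$ in Example~\ref{ex2} obey the same three refinements as in Example~\ref{ex1} (each is $\leq_\Omega$ the join of the other two), and $D$ contains both $\cod{X}{V}{\{\Omega\}}$ and $\ncod{V}{V}{\{\Omega\}}$. Combining $H_X\cap H_V=\emptyset$ with the three inclusions $H_X\subseteq H_Y\cup H_V$, $H_V\subseteq H_X\cup H_Y$, and $H_Y\subseteq H_X\cup H_V$ (via parts~1 and~2 of Proposition~\ref{prophist}) yields the identity $H_Y=H_X\sqcup H_V$; then $H_V\neq\emptyset$ forces $H_X\subsetneq H_Y$. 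I will use the identity $H_Y=H_X\sqcup H_V$ throughout what follows.

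For $Y<_D Z$ I need both $H_Y\subseteq H_Z$ and $H_Y\neq H_Z$, and my central technical ingredient is the fiberwise-history lemma
\begin{equation}
h^F(f^{-1}(X)|y) \;=\; h^F(f^{-1}(V)|y) \;=\; H_Y \quad\text{for every nonempty }y\in f^{-1}(Y).
\end{equation}
The first equality holds because $f^{-1}(X)|y = f^{-1}(V)|y$ as subpartitions of $y$---the refinements $Y\leq_\Omega X\vee_\Omega V$ and $V\leq_\Omega X\vee_\Omega Y$ make each $y$ pull back to a union of exactly two atoms of $f^{-1}(X\vee_\Omega V)$---and the common value equals $H_Y$ via condition~(7) of Proposition~\ref{equivsgen}: the preservation requirement $\chi^F_C(y,y)=y$ on such a diagonal $y$ forces $C$ to contain all of $H_X\cup H_V$. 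Granting this lemma, the strict inequality is immediate: $\ncod{Z}{Z}{Y}$ produces $y^*$ with $h^F(f^{-1}(Z)|y^*)\neq\emptyset$, $\cod{X}{Z}{Y}$ together with the lemma forces $h^F(f^{-1}(Z)|y^*)\cap H_Y=\emptyset$, and Lemma~\ref{histlemma2} together with part~2 of Proposition~\ref{prophist} (applied to $f^{-1}(Y)\vee_S f^{-1}(Z)$) places $h^F(f^{-1}(Z)|y^*)\subseteq H_Y\cup H_Z$, so any element of $h^F(f^{-1}(Z)|y^*)$ witnesses an element of $H_Z\setminus H_Y$.

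The inclusion $H_Y\subseteq H_Z$ is the main obstacle. The plan is a contradiction argument: assume $b\in H_X\setminus H_Z$ (the case $b\in H_V\setminus H_Z$ is symmetric), so that $B\setminus\{b\}$ generates $f^{-1}(V)$ and $f^{-1}(Z)$ (using $H_X\cap H_V=\emptyset$), and there exist $s,t\in S$ differing only in $b$ with $f(s)\not\sim_X f(t)$, $f(s)\sim_V f(t)$, $f(s)\sim_Z f(t)$, and $f(s)\not\sim_Y f(t)$. The delicate step is converting this global ``switching pair''---whose endpoints lie in different $Y$-blocks---into a fiberwise witness contradicting $\cod{X}{Z}{Y}$. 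My intended bridge is to combine the fiberwise-history lemma with $\ncod{X}{Z}{\{\Omega\}}$: pick $b_0\in H_X\cap H_Z$ and use the chimera-function identities to manufacture from $s,t$ a correlated pair $s',t'$ lying in a common $y\in f^{-1}(Y)$ that is still separated both by $b$ in $X$-value and by $b_0$ in $Z$-value, yielding $b\in h^F(f^{-1}(X)|y)\cap h^F(f^{-1}(Z)|y)$. Verifying both the fiberwise-history lemma itself (whose $\supseteq H_Y$ direction is the crux) and the localization step is where the technical heart of the proof lies.
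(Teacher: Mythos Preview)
Two issues, one minor and one structural.

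\textbf{Minor.} The database in Example~\ref{ex2} does \emph{not} contain $\ncod{V}{V}{\{\Omega\}}$; its $N$ is $\{(X,Z,\{\Omega\}),(V,Z,\{\Omega\}),(Z,Z,Y)\}$. The nonemptiness of $H_V$ (and of $H_X$) that you need for $H_X\subsetneq H_Y$ must come from $\ncod{V}{Z}{\{\Omega\}}$ and $\ncod{X}{Z}{\{\Omega\}}$ instead. This is an easy fix, but as written the first paragraph appeals to a fact not in $D$.

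\textbf{Structural gap in $H_Y\subseteq H_Z$.} Your contradiction plan cannot terminate where you say it does. Once the fiberwise-history lemma is in hand, $\cod{X}{Z}{Y}$ gives $h^F(f^{-1}(X)|y)\cap h^F(f^{-1}(Z)|y)=\emptyset$ for every $y\in f^{-1}(Y)$, and since $h^F(f^{-1}(X)|y)=H_Y$ this says $h^F(f^{-1}(Z)|y)\cap H_Y=\emptyset$. In particular your target conclusion $b\in h^F(f^{-1}(Z)|y)$ is \emph{false for every $y$}, independently of whether $b\in H_Z$; so no ``localization'' from the hypothesis $b\notin H_Z$ can produce it. Having a pair $s',t'$ inside a single $y$ that differ in both $b$ and $b_0$ with different $X$- and $Z$-values does not place $b$ in $h^F(f^{-1}(Z)|y)$: at best it places $b_0$ there, and that already contradicts nothing.

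The paper's route is essentially opposite to localization. It first proves your fiberwise lemma (via a concrete chimera computation: with $r_{ij}=\chi^F_{H_X}(s_i,t_j)$ one gets $\chi^F_C(r_{00},r_{11})=r_{01}$ whenever $C$ meets $H_X$ and misses some of $H_V$, forcing any $C$ with $\chi^F_C(y,y)=y$ to contain all of $H_Y$). It then uses the consequence $h^F(f^{-1}(Z)|y)\cap H_Y=\emptyset$ not as a target for contradiction but as a tool: from $b_Z\in H_X\cap H_Z$ pick $u_0,u_1$ differing only in $b_Z$ with different $Z$-values; the tool forces $u_0,u_1$ to lie in \emph{different} parts of $f^{-1}(Y)$. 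Now for an arbitrary $b_Y\in H_Y$ take $q_0,q_1$ differing only in $b_Y$ with $q_0\in[u_0]_{f^{-1}(Y)}$, $q_1\notin[u_0]_{f^{-1}(Y)}$, set $p_i=\chi^F_{H_Y}(q_i,u_0)=\chi^F_{H_Y}(q_i,u_1)$, and the same tool (applied once in each $Y$-block) gives $p_0\sim_{f^{-1}(Z)}u_0$ and $p_1\sim_{f^{-1}(Z)}u_1$, hence $p_0\not\sim_{f^{-1}(Z)}p_1$ while $p_0,p_1$ differ only in $b_Y$. That puts $b_Y\in H_Z$. The point is that the link between $H_Y$ and $H_Z$ is mediated by \emph{crossing} between $Y$-blocks, not by working inside one; your plan tries to stay inside a single $y$ and therefore cannot see it.

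Your strictness argument ($H_Z\setminus H_Y\neq\emptyset$ via $\ncod{Z}{Z}{Y}$ and $h^F(f^{-1}(Z)|y)\cap H_Y=\emptyset$) is fine and matches the paper.
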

\begin{proof}
Let $(F,f)$ be any model of $\Omega$ that models $D$. Let $F=(S,B)$. For any $A\in\parts[\Omega]$, let $H_A=h^F(f^{-1}(A))$. Our goal is to show that $H_X$ is a strict subset of $H_Y$ and that $H_Y$ is a strict subset of $H_Z$.

First observe that $X\leq_\Omega Y\vee_\Omega V$, so $f^{-1}(X)\leq_S f^{-1}(Y)\vee f^{-1}(V)$, so $H_X\subseteq H_Y\cup H_V$. Since $\cod{X}{Y}{\{\Omega\}}$, $H_X\cap H_V=\{\}$, so $H_X\subseteq H_Y$. Symmetrically, $H_V\subseteq H_Y$, so $H_X\cup H_V\subseteq H_Y$.

Similarly, $Y\leq_\Omega X\vee_\Omega V$, so $H_Y\subseteq H_X\cup H_V$. Thus $H_Y=H_X\cup H_V$.

We also know that $H_X$ and $H_V$ are nonempty, because $\ncod{X}{Z}{\{\Omega\}}$ and $\ncod{Y}{Z}{\{\Omega\}}$.

Thus $H_X$ is a strict subset of $H_Y$, so $X<_D Y$.

Let $C\subseteq B$ be arbitrary such that $H_X\cap C$ and $H_V\cap (B\setminus C)$ are both nonempty. Fix some $b_X\in H_X\cap C$ and $b_V\in H_V\cap (B\setminus C)$.

Since $b_X\in H_X$, there must exist $s_0,s_1\in S$ such that $s_0\sim_b s_1$ for all $b\in B\setminus\{b_X\}$, but not $s_0\sim_{f^{-1}(X)} s_1$. Thus it is not the case that $f(s_0)\sim_X f(s_1)$. Without loss of generality, assume that $f(s_0)\in x_0$ and $f(s_1)\in x_1$.

Similarly, since $b_V\in H_V$, there must exist $t_0,t_1\in S$ such that $t_0\sim_b t_1$ for all $b\in B\setminus\{b_V\}$, but not $t_0\sim_{f^{-1}(V)} t_1$. Again, without loss of generality, assume that $f(t_0)\in v_0$ and $f(t_1)\in v_1$. 

For $i,j\in \{0,1\}$, let $r_{ij}=\chi^F_{H_X}(s_i,t_j)$.

Next, observe that $r_{ij}\sim_{f^{-1}(X)} s_i$, so  $f(r_{ij})\sim_X f(s_i)\in x_i$, so $f(r_{ij})\in x_i$. Similarly, $f(r_{ij})\in v_j$, so $f(r_{ij})\in x_i\cap v_j$. Thus, if $i=j$, $f(r_{ij})\in y_0$, and if $i\neq j$, $f(r_{ij})\in y_1$.

Further, observe that $\chi^F_C(r_{00},r_{11})=r_{01}$,
since $r_{00}$ and $r_{11}$ agree on all factors other than $b_X$ and $b_V$. In particular, this means that $\chi^F_C(f^{-1}(y_0),f^{-1}(y_0))\neq f^{-1}(y_0)$. Similarly, since $\chi^F_C(r_{01},r_{10})=r_{00}$, we have that $\chi^F_C(f^{-1}(y_1),f^{-1}(y_1))\neq f^{-1}(y_1)$.

We will use this to show that for any $y\in f^{-1}(Y)$ and $A\in \parts[y]$, either $h^F(A)\cap H_Y=\{\}$, or $H_Y\subseteq h^F(A)$. This is because $h^F(A)\vdash^F A$, so $\chi^F_{h^F(A)}(y,y)=y$, so by the above argument, if $h^F(A)\cap H_X$ is nonempty, then $H_V\subseteq h^F(A)$, which since $H_V$ is nonempty means $h^F(A)\cap H_V$ is nonempty, so $H_X\subseteq h^F(A)$, so $H_Y\subseteq h^F(A)$. Symmetrically, we also have that if $h^F(A)\cap H_V$ is nonempty, then $H_Y\subseteq h^F(A)$. Thus, if $h^F(A)\cap H_Y$ is nonempty, then either $h^F(A)\cap H_X$ or $h^F(A)\cap H_V$ is nonempty, so $H_Y\subseteq h^F(A)$.

Note that for any $y\in f^{-1}(Y)$, two of the elements among the four $r_{ij}$ defined above are in $y$, and those two elements are in different parts in $f^{-1}(X)$, so $f^{-1}(X)|y$ has at least two parts, so $h^F(f^{-1}(X)|y)$ is nonempty. However, $h^F(f^{-1}(X)|y)\subseteq h^F(f^{-1}(X)\vee_S f^{-1}(Y))=H_Y$. Thus, $h^F(f^{-1}(X)|y)\cap H_Y\neq\{\}$, so $H_Y\subseteq h^F(f^{-1}(X)|y)$, so $h^F(f^{-1}(X)|y)=H_Y$. Symmetrically, $h^F(f^{-1}(V)|y)=H_Y$.

In particular, this means that $h^F(f^{-1}(Z)|y)\cap H_Y=\{\}$, since $\cod{X}{Z}{Y}$. 

Since $\ncod{X}{Z}{\{\Omega\}}$, there exists some $b_Z\in H_X\cap H_Z$. Since $b_Z\in H_Z$, there exist $u_0,u_1\in S$ such that $u_0\sim_b u_1$ for all $b\in B\setminus \{b_Z\}$, but it is not the case that $u_0\sim_{f^{-1}(Z)} u_1$. Without loss of generality, assume that $f(u_0)\in z_0$ and $f(u_1)\in z_1$. Let $y=[u_0]_{f^{-1}(Y)}$.

Let $b_y$ be an arbitrary element of $H_Y$. Since $b_Y\in H_Y$, there exist $q_0,q_1\in S$ such that $q_0\sim_b q_1$ for all $b\in B\setminus \{b_Y\}$, but it is not the case that $q_0\sim_{f^{-1}(Y)} q_1$. Without loss of generality, assume that $q_0\in y$ and $q_1\notin y$.

Consider $p_0=\chi^F_{H_Y}(q_0,u_0)=\chi^F_{H_Y}(q_0,u_1)$. Since $q_0\in y$, $p_0\in y$. Since $u_0$ is also in $y$,  $\chi^F_{h^F(f^{-1}(Z)|y)}(p_0,u_0)\sim_{f^{-1}(Z)}p_0$. However, since $h^F(f^{-1}(Z)|y)\cap H_Y=\{\}$, we have $\chi^F_{h^F(f^{-1}(Z)|y)}(p_0,u_0)=u_0$, so $u_0\sim_{f^{-1}(Z)}p_0$.

If $u_1$ were in $y$, we would similarly have $u_1\sim_{f^{-1}(Z)}p_0$, which would contradict the fact that it is not the case that $u_0\sim_{f^{-1}(Z)} u_1$. Thus $u_1\notin y$.

Next, consider $p_1=\chi^F_{H_Y}(q_1,u_0)=\chi^F_{H_Y}(q_1,u_1)$. Since $q_1\notin y$, $p_1\notin y$. Since $u_1$ is also not in $y$,  $\chi^F_{h^F(f^{-1}(Z)|(S\setminus y))}(p_1,u_1)\sim_{f^{-1}(Z)}p_1$. However, since $h^F(f^{-1}(Z)|(S\setminus y))\cap H_Y=\{\}$, we have $\chi^F_{h^F(f^{-1}(Z)|(S\setminus y))}(p_1,u_1)=u_1$, so $u_1\sim_{f^{-1}(Z)}p_1$. 

Thus, it is not the case that $p_0\sim_{f^{-1}(Z)} p_1$. However, we constructed $p_0$ and $p_1$ such that $p_0\sim_b p_1$ for all $b\neq b_Y$. Thus $b_Y\in H_Z$. Since $b_Y$ was arbitrary in $H_Y$, we have that  $H_Y\subseteq H_Z$. Finally, we need to show that this subset relation is strict. 

Since $\ncod{Z}{Z}{Y}$, there is some $y$ such that $h^F(f^{-1}(Z)|y)\neq\{\}$. Let $b$ be any element of $h^F(f^{-1}(Z)|y)$. Since $h^F(f^{-1}(Z)|y)\cap H_Y=\{\}$, $b\notin H_Y$. However, $b\in h^F(f^{-1}(Z)|y)\subseteq h^F(f^{-1}(Z)\vee_S f^{-1}(Y))=h_Z\cup H_Y$. Therefore $b\in H_Z$. Thus $H_Y$ is a strict subset of $H_Z$, so $Y<_D Z$.
\end{proof}
\section{Applications, Future Work, and Speculation}
We will now discuss several different applications and directions for future work. We will divide these research directions into three categories: `Inference,' `Infinity,' and `Embedded Agency.'

This section will be much more speculative than the rest of the paper. It is very likely that some of these avenues for research will turn out to be dead ends, and some of the claims made here may not hold up to further investigation.
\subsection{Inference}
\subsubsection{Decidability of Temporal Inference}

In Section \ref{inftime}, we described a combinatorial problem of inferring temporal relations from an orthogonality database. However, it is not clear whether the question ``Does a given temporal relation follow from a given orthogonality database?'' is decidable.

However, it is not clear whether or not it is decidable whether a given temporal relation follows from a given orthogonality database. 

One way we could hope to decide whether a temporal relation follows from some orthogonality database $D$ over $\Omega$ would be to simply check all factored set models of $\Omega$ that model $D$ up to a given size, and see whether the temporal relation always holds. For this to work, we would need an upper bound on the size of factored sets that we need to consider, as a function of the size of $\Omega$. (Note that the existence of such a bound would not mean that there are no models larger than this upper bound. Rather, it would mean that every model larger than this will have all of the same temporal relations as some smaller model.)

\subsubsection{Efficient Temporal Inference}
Assuming temporal inference is computable, we would further like to be able to infer temporal relations from an orthogonality database \emph{quickly}.

The naive way to get negative results in temporal inference (i.e., to show that certain temporal relations need not hold) would be to search over the space of models. Without the upper bound discussed above, however, this method would only ever yield negative results.

The naive way to get positive results would be to formalize the kind of reasoning used to prove Propositions \ref{exth1} and \ref{exth2}, and search over proofs of this form. It is unclear whether this method can be made efficient.

Alternatively, we could hope to develop some new results and refine our understanding of temporal inference to the point where an alternative method can be made efficient.

\subsubsection{Temporal Inference from Raw Data and Fewer Ontological Assumptions}
In the Pearlian causal inference paradigm, we can infer temporal relationships from joint distributions on a collection of variables.

In Pearl's paradigm, however, this data is already factored into a collection of variables at the outset. Further, the Pearlian paradigm does not make explicit the assumptions that go into this factorization.

Our paradigm instead starts from a distribution on some set of observably distinct worlds. This approach allows us to make fewer ontological assumptions; we don't need to take for granted a particular way the world should be factored into variables. Thus, one might hope that the factored sets paradigm could be used to infer time or causality more directly from raw probabilistic data.

\subsubsection{Causality, Determinism, and Abstraction}
Another issue with the Pearlian causal inference paradigm is that it does not work well in cases where some of the variables are (partially) deterministic functions of each other. Our paradigm has determinism and abstraction built in, so it can be used to infer time in situations where the Pearlian paradigm might not apply.

\subsubsection{Conceptual Inference}
In Example \ref{ex1}, we can infer that $X<_D Y$. We can think of this fact as being about time. However, we can also think of it as being about which concepts are more natural or fundamental. In that example, $X$ and $V$ were more primitive variables, while $Y$ was a more derived variable that was computed from $X$ and $V$.

Suppose we had a symbol that was either 0 or 1, chosen according to some probability, and was also colored either blue or green, chosen independently according to some other probability. We can reason about this symbol using concepts like color or number. Alternatively, we could define a new concept \emph{bleen} meaning ``the symbol is either blue and 0, or green and 1,'' and \emph{grue}, meaning ``the symbol is either green and 0, or blue and 1,'' and use these two concepts instead (cf. \cite{Goodman:1955}).

We want to say that color and number are in some sense better or more useful concepts, while bleen and grue are less useful. Finite factored sets help give formal content to the idea that color and number are more primitive, while bleen and grue are more derived; and this primitiveness seems to point at part of what it means to be a good concept for the purpose of thinking about the world.

\subsubsection{Inferring Time without Orthogonality}
In this paper, we have focused on inferring time from an orthogonality database. Such a database may have been inferred in turn from observed independence and dependence facts drawn from a probability distribution.

We could instead consider inferring time directly from a probability distribution. Cutting out the orthogonality database in this way could even allow us to infer time from a probability distribution that has no nontrivial conditional independencies at all. 

To see why it might be possible to infer time without any orthogonality, consider a set $\Omega$, and a model of $\Omega$, $(F,f)$, where $F=(S,B)$ has $n$ binary factors, and $|\Omega|>n+1$.

There are $n$ degrees of freedom in an arbitrary probability distribution on $F$, and thus at most $n$ degrees of freedom in a probability distribution $P$ on $\Omega$ that comes from a probability distribution on $F$. However, there are $|\Omega-1|$ degrees of freedom in an arbitrary distribution on $\Omega$.

As such, the probability distribution on $\Omega$ will lie on some surface without full dimension in the space of probability distributions on $\Omega$, which could be used to infer some of the properties of $F$.

However, if $|\Omega|$ is much smaller than $|S|$, and $f$ is chosen at random, it is unlikely that there will be any conditional orthogonality relations on partitions of $\Omega$ at all (other than the trivial conditional orthogonality relations that come from one partition being finer than another).

\subsubsection{Inferring Conditioned Finite Factored Sets}

If we modify the temporal inference definition to instead allow for $f$ to be a partial function from $S$ to $\Omega$, we get a new, weaker model of temporal inference. This can be thought of as allowing for the possibility that our distribution on $\Omega$ passes through some filter that only shows us some of the observably distinct worlds.
\subsection{Infinity}
\subsubsection{The Fundamental Theorem of Finitely Generated Factored Sets}
Throughout this paper, we have assumed finiteness fairly gratuitously. It is likely that many of the results can be extended to arbitrary factored sets. However, this generalization will not be immediate. Indeed, even history is not well-defined on arbitrary factored sets.

One intermediate possibility is to consider finite-dimensional factored sets. In this case, history would be well-defined, but our proof of the fundamental theorem would not directly generalize. However, we conjecture that the finite-dimensional analogue of the fundamental theorem would in fact hold.

\begin{conjecture}
Theorem \ref{ft} can be generalized to finite-dimensional factored sets.
\end{conjecture}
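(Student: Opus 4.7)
The plan is to reduce the finite-dimensional case to the already-proved finite case via a quotient construction on the factor parts. Given a finite-dimensional factored set $F=(S,B)$ and partitions $X,Y,Z\in\parts$, I would work one triple of parts $(x,y,z)\in X\times Y\times Z$ at a time and focus on their binary coarsenings $X'=\{x,S\setminus x\}$, $Y'=\{y,S\setminus y\}$, $Z'=\{z,S\setminus z\}$. Both directions reduce to this case: conditional orthogonality passes to coarsenings by the decomposition axiom of Theorem \ref{semigraphoid} together with history monotonicity, and the probabilistic identity in Theorem \ref{ft} is already indexed one triple at a time. Crucially, the refinement $X'\vee_S Y'\vee_S Z'$ has at most $8$ parts, regardless of the cardinalities of $X,Y,Z$.

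For each $b\in B$, I would then define an equivalence relation $\approx_b$ on the parts of $b$ by declaring $T\approx_b T'$ iff for all $s\in T$ and $s'\in T'$ with $s\sim_c s'$ for every $c\in B\setminus\{b\}$, we have $s\sim_{X'\vee_S Y'\vee_S Z'} s'$. Since $|B|$ is finite and the target has at most $8$ joint states, each quotient $\tilde b=b/{\approx_b}$ is finite, and the induced quotient $\tilde S = S/{\sim}$ (where $s\sim s'$ iff $[s]_b\approx_b[s']_b$ for every $b$) is a finite set. A chimera-function argument, swapping one factor at a time, shows that $X',Y',Z'$ descend to partitions $\tilde X',\tilde Y',\tilde Z'$ of $\tilde S$. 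Setting $\tilde B=\{\tilde b\mid b\in B,\ \tilde b\text{ nontrivial}\}$ and checking the product characterization of Theorem \ref{templabel2}, the quotient $\tilde F=(\tilde S,\tilde B)$ is then a finite factored set.

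Two correspondences then carry the argument through. First, conditional orthogonality transfers both ways: since the quotient map intertwines the chimera operations, Proposition \ref{equivsgen} gives that $C\vdash^F X'\mid z$ iff $\tilde C\vdash^{\tilde F}\tilde X'\mid\tilde z$ (and similarly for $Y'$), so $\co{X'}{Y'}{Z'}$ in $F$ iff $\co{\tilde X'}{\tilde Y'}{\tilde Z'}$ in $\tilde F$. Second, the pushforward $\tilde P$ of any distribution $P$ on $F$ is itself a distribution on $\tilde F$ (preserving the product-of-marginals condition), and conversely any distribution on $\tilde F$ lifts to one on $F$ by choosing a representative part within each $\approx_b$-class and assigning it the class's full weight, with the pushforward recovering $\tilde P$. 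Applying the finite Theorem \ref{ft} to $\tilde F$ and then ranging over all triples $(x,y,z)$ yields the desired equivalence in $F$.

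The main obstacle is verifying that $\tilde F$ genuinely inherits a factored-set structure, and that history commutes with the quotient. The definition of $\approx_b$ is inherently nonlocal---it quantifies over elements of $S$ rather than over parts of $b$ alone---so ensuring that the quotients $\tilde b$ fit together into a product structure on $\tilde S$ requires careful manipulation of chimera functions, likely via a fixed-point or iteration argument to show $\approx_b$ is compatible across factors. A secondary concern arises in the lifting step for probability distributions: because the paper's distributions are only required to be finitely additive, one must confirm directly that the lifted $P$ satisfies $P(\{s\})=\prod_b P([s]_b)$ when each factor marginal concentrates on a single representative, which should follow from the definitions but requires care.
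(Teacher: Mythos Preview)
This statement appears in the paper as an open \emph{conjecture}; no proof is given, and the paper explicitly remarks that its argument for Theorem~\ref{ft} ``would not directly generalize.'' So there is nothing in the paper to compare against; the question is only whether your sketch can be completed.

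The decisive gap is your assertion that each quotient factor $\tilde b=b/{\approx_b}$ is finite. The reasoning ``since $|B|$ is finite and the target has at most $8$ joint states'' does not work: two parts $T,T'\in b$ are $\approx_b$-inequivalent whenever some assignment of the \emph{other} factors separates them in $W=X'\vee_S Y'\vee_S Z'$, so the classes of $\approx_b$ inject into the function space $W^{\prod_{c\ne b}c}$, which is infinite as soon as any other factor $c\ne b$ is. Concretely, take $S=\mathbb{N}\times\mathbb{N}$ with the two coordinate partitions $b_1,b_2$, let $x=\{(n,m):n\le m\}$, and take $Y=Z=\text{Ind}_S$; then $W=\{x,S\setminus x\}$, and $\{n\}\times\mathbb{N}\approx_{b_1}\{n'\}\times\mathbb{N}$ forces $n\le m\Leftrightarrow n'\le m$ for every $m$, hence $n=n'$. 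Thus $\tilde b_1=b_1$ remains infinite and $\tilde F$ never lands in the finite world. The obstacles you flag later (that $\tilde F$ inherits a factored-set structure, that history commutes with the quotient) are beside the point until this is repaired.

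A smaller issue: replacing $Z$ by $Z'=\{z,S\setminus z\}$ is not innocuous. From $\co{X}{Y}{Z}$ you obtain orthogonality given the event $z$, but nothing about conditioning on $S\setminus z$, so you should not expect $\co{X'}{Y'}{Z'}$ in either $F$ or $\tilde F$; since the probabilistic identity is already indexed by a single $z$, organise the reduction around that one event rather than the two-part partition. If you want to salvage the quotient strategy, a more promising variant is to select, for each of the finitely many generation facts you actually need (finitely many because $|B|<\infty$), explicit witnessing elements of $S$, and then coarsen each $b$ only enough to keep those witnesses apart; that really does yield finite $\tilde b$, but the burden then shifts to proving that history is unchanged by such a hand-picked coarsening.
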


On the other hand, we do not expect the fundamental theorem to generalize to arbitrary factored sets. To see why, consider the following example.

\begin{example}
Let $F=(S,B)$, where $S=\mathcal{P}(\mathbb{N})$, $b_n=\{\{s\in S\mid n\in s\},\{s\in S\mid n\notin s\}\}$, and $B=\{b_n\mid n\in \mathbb{N}\}$. Let $X=\{\{\{\}\},S\setminus \{\{\}\}\}$, and let $Y=\{\{\mathbb{N}\},S\setminus \{\mathbb{N}\}\}$. 
\end{example}

In this example, it seems that in the correct generalization of orthogonality to arbitrary factored sets, we likely want to say that $X$ is not orthogonal to $Y$. However, it also seems like we want to say that in every distribution on $F$, at least one of $\{\{\}\}$ and $\{\mathbb{N}\}$ has probability zero, so this should give a counterexample to the fundamental theorem. 
Even without the fundamental theorem, we believe that orthogonality and time in arbitrary-dimensional factored sets will be important and interesting.

\subsubsection{Orthogonality and Time in Arbitrary Factored Sets}

In the infinite-dimensional case, it is not even clear how we should define orthogonality, time, and conditional orthogonality. There are three main contenders. 

First, we could say that (sub)partitions $X$ and $Y$ are orthogonal if there exist disjoint $C_X,C_Y\subseteq B$ such that $C_X\vdash^F X$ and $C_Y\vdash^F Y$. We could then define time as a closure property on orthogonality.

Second, we could just define the history of a (sub)partition $X$ to be the intersection of all $C\subseteq B$ such that $C\vdash^F X$, and leave the definitions of orthogonality and time alone. This second option has some unintuitive behavior. Consider the following example.

\begin{example}
Let $F=(S,B)$, where $S=\mathcal{P}(\mathbb{N})$, $b_n=\{\{s\in S\mid n\in s\},\{s\in S\mid n\notin s\}\}$, and $B=\{b_n\mid n\in \mathbb{N}\}$. Let $Z=\{\{s\in S\mid |s|<\infty\},\{s\in S\mid |s|=\infty\}\}$. 
\end{example}

In this example, $Z$ is orthogonal to itself according to the second option, in spite of having more than one part. However, it is possible that this is a feature, rather than a bug, since it seems to interact nicely with Kolmogorov's zero–one law \citep{Kolmogorov:1956}.

Third, we could define a way to flatten factored sets by merging some of the factors into their common refinement, and we could say $X$ and $Y$ are orthogonal given $Z$ in $F$ if $X$ and $Y$ are orthogonal given $Z$ in some finite-dimensional flattening of $F$. 

The main difference between the first and third options comes from the case where $Z$ has infinitely many parts. In the third option, we must fix a single finite-dimensional flattening such that $X|z$ and $Y|z$ have disjoint histories for all $z\in Z$.

We are most optimistic about the third option, because we conjecture that it can satisfy the compositional semigraphoid axioms, while the other two options cannot. It is also possible that other options give the compositional semigraphoid axioms for partitions with finitely many parts, but not general partitions.

\subsubsection{Continuity and Physics}
A major reason why we are interested in exploring arbitrary-dimensional factored sets is because it could allow us to talk about continuous time. 

The Pearlian paradigm takes advantage of the parenthood relationship between nodes to make inferences. E.g., the nodes are thought of as probabilistic functions of their parents, and the existence of edges between nodes is a central part of temporal inference. 

In the factored set paradigm, there is no mention of parenthood; instead, $\leq^F$ is both reflexive and transitive, and so can be thought of as an ancestry relation. Further, by working with arbitrary partitions rather than a fixed collection of variables, we allow for ``zooming in'' on our variables.

These two properties together suggest that the factored set paradigm is much closer to being able to talk about continuous time, if the theory can be extended naturally to infinite dimensions. 

As pointed out by \citet{Yudkowsky:2012}, physics looks an awful lot like a continuous analogue of Pearlian causal diagrams. We are thus hopeful that when extended to arbitrary dimensions, factored sets could provide a useful new way of looking at physics.

\subsection{Embedded Agency}
\subsubsection{Embedded Observations}
We can use finite factored sets to build a new way of thinking about observations.

\begin{definition}[observes an event]\label{obse}
Let $F=(S,B)$ be a finite factored set. Let $A$ and $W$ be partitions of $S$, and let $E$ be a subset of $S$. 
Let $X_E$ be the partition of $S$ given by $X_E=\{S\}$ if $E=\{\}$ or $E=S$, and $X_E=\{E,S\setminus E\}$ otherwise.
We say $A$ observes $E$ with respect to $W$ (in $F$) if the following two conditions hold.
\begin{enumerate}
    \item $\ortho{A}{X_E}.$
    \item $\co{A}{W}{S\setminus E}.$
\end{enumerate}
\end{definition}
$A$ can be thought of as an agent, with the different parts in $A$ representing options available to $A$. $E$ represents some fact about the world. $W$ can be thought of as some high-level world model. We will especially think of $W$ as a world model that captures all of the information about the world that the agent cares about.

When we say that $A$ observes $E$, this does not necessarily mean that $E$ holds. Rather, we are saying that $A$ can safely assume that $E$ holds. $A$ can safely make this assumption if it is the case that $A$'s choice can't effect whether $E$ holds, and if, when $E$ does not hold,  $A$'s choice can have no effect on any part of the world that $A$ cares about. This is exactly what is represented by the two conditions in Definition \ref{obse}.

In Drescher's (\citeyear{Drescher:2006}) transparent Newcomb thought experiment, the agent cannot be said to observe the contents of the box, because the first condition in Definition \ref{obse} is violated. In Nesov's (\citeyear{Nesov:2009}) counterfactual mugging thought experiment, the agent cannot be said to observe the result of the coin flip, because the second condition is violated.

We can extend this definition to give a notion of an agent observing a partition rather than an event. 

\begin{definition}[observes a partition]\label{obsp}
Let $F=(S,B)$ be a finite factored set. Let $A$, $W$, and $X$ be partitions of $S$. Let $X=\{x_0,\dots, x_{n-1}\}$. 
We say $A$ observes $X$ with respect to $W$ (in $F$) if $\ortho{A}{X}$ and there exist partitions of $S$, $A_i$ for $i\in \{0,\dots,n-1\}$ such that
\begin{enumerate}
    \item $A=\bigvee_S(\{A_i\mid i\in \{0,\dots,n-1\}\})$.
    \item $\co{A_i}{W}{S\setminus x_i}.$
\end{enumerate}
\end{definition}
Saying that $A$ observes $X$ is roughly saying that $A$ can be divided into subagents, where each subagent observes a different part in $X$.

\subsubsection{Counterfactability}
The factored set paradigm also has some interesting things to say about counterfactuals. The chimera functions can be thought of representing a way of taking counterfactuals.

Given a finite factored set $F=(S,B)$, $C\subseteq B$, and $s,t\in S$, let $X_C=\bigvee_S(C)$.

We can think of $\chi_C^F(s,t)$ as the result of starting with $t$, then performing a counterfactual surgery that changes the value of $X_C$ to match its value in $s$.

Unfortunately, while we can tell this story for $X_C$, we cannot tell the same story for an arbitrary partition of $S$.

\begin{definition}[counterfactability]
Given a finite factored set $F=(S,B)$, a partition $X\in \parts$ is called counterfactable (in $F$) if $X=\bigvee_S(h^F(X))$.
\end{definition}

When a partition $X$ is counterfactable, the chimera function gives a well-defined way to start with an element of $S$, and change it by changing what part in $X$ it is in.

Being counterfactable is rather strong, but we have a weaker notion of relative counterfactability. 

\begin{definition}[relative counterfactability]
Given a finite factored set $F=(S,B)$, a partition $X\in \parts$ is called counterfactable relative to another partition $W\in \parts$ (in $F$) if $\co{\bigvee_S(h^F(X))}{W}{X}$.
\end{definition}

$X$ is counterfactable relative to $W$ if $X$ screens off the history of $X$ from $W$. This means that if we want to counterfact on the value of $X$, we can safely counterfact on the finer partition $\bigvee_S(h^F(X))$. As long as we only care about what part in $W$ the result is in, choices about which subpart in $\bigvee_S(h^F(X))$ to counterfact will not matter, so we can think of counterfacting on the value of $X$ as well-defined up to the partition $W$.

This notion of counterfactability explains why counterfactuals sometimes seem clear, and other times they do not seem well-defined. In the factored set ontology, sometimes partitions are not counterfactable because they are not fine enough to fully specify all the effects of the counterfactual. 

\subsubsection{Cartesian Frames}
The factored set paradigm can be seen as capturing many of the benefits of the Cartesian frame paradigm \citep{Garrabrant:2020}. We have already seen this in part in our discussion of embedded observations. We feel that the factored set paradigm successfully captures a meaningful notion of time, while the Cartesian frame paradigm mostly fails at this goal.

The connection between factored sets and Cartesian frames is rather strong. For example, a 2-dimensional factored set model of a set $W$ is in effect a Cartesian frame over $W$. The only difference is that the factored set model forgets which factor is the agent, and which factor is the environment. When one Cartesian frame over $W$ is a multiplicative subagent of another, we can construct a 3-dimensional factored set model of $W$, with the subagent represented by one of the factors, and the superagent represented by a pair of the factors.

\subsubsection{Unraveling Causal Loops}
Whenever an agent makes a decision, there is a temptation to think of the effects of the decision as causally ``before'' the decision being made. This is because the agent uses its model of the effects as an input when making the decision. This causes a problem, because the effects of the decision can of course also be seen as causally after the decision being made.

On our view, part of what is going on is that there is a distinction between the agent's model of the effects, and the effects themselves. The problem is that the agent's model of the effects is highly entangled with the actual effects, which is why we feel tempted to combine them in the first place.

One way to model this situation is by thinking of the agent's model of the effects as being a coarser version of the actual world state after the decision. It is thus possible for the model of the effects to be before the decision, which is before the effects themselves.

By allowing for some variables to be coarsenings or refinements of other variables, the factored set paradigm possibly gives us the tools to be able to straighten out these causal loops.

\subsubsection{Conditional Time}
We can define conditional time similarly to how we define conditional orthogonality.

\begin{definition}[conditional time]
Given a finite factored set $F=(S,B)$, partitions $X,Y\in \parts$, and $E\subseteq S$, we say that $X$ is before $Y$ given $E$ (in $F$), written $X\leq^F Y \mid E$, if $h^F(X|E)\subseteq h^F(Y|E)$.
\end{definition}

It is not clear if this notion has any important philosophical meaning, but it seems plausible that it does. In particular, this notion could be useful for reasoning about situations where time appears to flow in multiple directions at different levels of description, or under different assumptions. Incorporating conditional time could then be used to flatten some causal loops.

\subsubsection{Logical Causality}
Upon discovering logical induction, one of the first things we considered was the possibility of inferring logical causality using our probabilities on logical sentences \citep{Garrabrant:2016:li}. We considered doing this using the Pearlian paradigm, but it now seems like that approach was doomed to fail, because we had many deterministic relationships between our variables.

The factored set paradigm seems much closer to allowing us to correctly infer logical causality from logical probabilities, but it is still far from ready. 

One major obstacle is that the factored set paradigm does not have a reasonable way to think about the uniform distribution on a four-element set. The independence structure of the uniform distribution on a four-element set is not a compositional semigraphoid, because if we take $X$, $Y$, and $Z$ to be the three partitions that partition the four-element set into two parts of size two, then $X$ is independent of $Y$ and of $Z$, but not independent of the common refinement of $Y$ and $Z$.

Since the uniform distribution on a four-element set will likely (approximately) show up many times in logical induction, it is not clear how to do the causal inference.

\subsubsection{Orthogonality as Simplifying Assumptions for Decisions}
While we largely have been thinking of orthogonality as a property of the world, one could also think of orthogonality as something that an agent assumes to make decisions.

For example, when an agent is looking at a coin that came up heads, the agent might make the assumption that its decision has no effect on the worlds in which the coin came up tails. This assumption might only be approximately true, but part of being an embedded agent is working with approximations. Orthogonality seems like a useful language for some of the simplifying assumptions agents might make.

\subsubsection{Conditional Orthogonality and Abstractions}
Given some complicated structure $X$, one might want to know when a simpler structure $Y$ is a good abstraction for $X$. One desirable property of an abstraction is that $Y$ screens off $X$ from all of the properties of the world that an agent cares about, $W$. In this way, by thinking in terms of $Y$, the agent does not risk missing any important information.

We could also consider weaker notions than this, by taking $W$ to just be that which the agent cares about within a certain context in which the agent is using the abstraction. 

This is all very vague and rough, but the point is that conditional orthogonality seems related to what makes a good abstraction, so being able to talk about conditional orthogonality and abstractions together seems like it could prove useful.

\vspace{13mm}

\noindent
\textbf{Acknowledgments}: My thanks to Alex Appel, Ramana Kumar, Xiaoyu He, Tsvi Benson-Tilsen, Andrew Critch, Sam Eisenstat, Rob Bensinger, and Claire Wang for discussion and feedback on this paper.

\pagebreak 

\printbibliography

\end{document}